\newtheorem{theorem}{Theorem}
\newtheorem{corollary}{Corollary}
\newtheorem{lemma}{Lemma}
\newtheorem{assumption}{Assumption}
\newtheorem{definition}{Definition}
\theoremstyle{definition}
\newtheorem{remark}{Remark}
\DeclareMathOperator*{\argmax}{arg\,max}
\DeclareMathOperator*{\argmin}{arg\,min}
\newcommand{\calA}{\mathcal{A}}
\newcommand{\calD}{\mathcal{D}}
\newcommand{\calE}{\mathcal{E}}
\newcommand{\calF}{\mathcal{F}}
\newcommand{\calH}{\mathcal{H}}
\newcommand{\calI}{\mathcal{I}}
\newcommand{\calP}{\mathcal{P}}
\newcommand{\calQ}{\mathcal{Q}}
\newcommand{\calX}{\mathcal{X}}
\newcommand{\starm}{m^*}
\newcommand{\barm}{\bar{m}}
\newcommand{\hatf}{\hat{f}}
\newcommand{\hatm}{\hat{m}}
\newcommand{\hatR}{\hat{R}}
\newcommand{\safe}{\textbf{safe}}
\newcommand{\true}{\textbf{True}}
\newcommand{\false}{\textbf{False}}
\newcommand{\Ndim}{\texttt{Ndim}}
\newcommand{\msafe}{m^*}
\newcommand{\msafealg}{\hat{m}}
\newcommand{\CReg}{\text{CReg}}
\newcommand{\propThreshold}{\beta_{\max}}
\newcommand{\Reg}{\text{Reg}}
\newcommand{\hatReg}{\widehat{\text{Reg}}}
\newcommand{\F}{\mathcal{F}}
\newcommand{\E}{\mathop{\mathbb{E}}}
\newcommand{\bP}{\mathop{\mathbb{P}}}
\newcommand{\A}{\mathcal{A}}
\newcommand{\Xscript}{\mathcal{X}}
\newcommand{\ordO}{\mathcal{O}}
\newcommand{\ordOt}{\tilde{\mathcal{O}}}
\newcommand{\Unif}{\text{Unif}}
\newcommand{\I}{\mathbb{I}}
\newcommand{\eventModelOracle}{\mathcal{W}_1}
\newcommand{\eventRiskEstimation}{\mathcal{W}_2}
\newcommand{\EstOracle}{\text{EstOracle}}
\newcommand{\EvalOracle}{\text{EvalOracle}}
\newcommand{\N}{\mathbb{N}}
\newcommand{\supscript}[2]{{#1}^{(#2)}}
\newcommand{\dkl}{D_{KL}}
\newcommand{\Ber}{\text{Ber}}
\newcommand{\comment}[1]{}
\title{Proportional Response: Contextual Bandits for Simple and Cumulative Regret Minimization}
\author{%
  Sanath Kumar Krishnamurthy\\ %
  Management Science and Engineering\\
  Stanford University\\
  \texttt{sanathsk@stanford.edu} \\
  \And
  Ruohan Zhan \\
  Industrial Engineering and Decision Analytics\\
  Hong Kong University of Science and Technology\\
  \texttt{rhzhan@ust.hk} \\
  \AND
  Susan Athey\\
  Graduate School of Business\\
  Stanford University\\
  \texttt{athey@stanford.edu} \\
  \And
  Emma Brunskill\\
  Computer Science Department\\
  Stanford University\\
  \texttt{ebrun@cs.stanford.edu} \\
}
\begin{document}

\maketitle

\begin{abstract}
In many applications, e.g. in healthcare and e-commerce, the goal of a contextual bandit may be to learn an optimal treatment assignment policy at the end of the experiment. That is, to minimize simple regret. However, this objective remains understudied. We propose a new family of computationally efficient bandit algorithms for the stochastic contextual bandit setting, where a tuning parameter determines the weight placed on cumulative regret minimization (where we establish near-optimal minimax guarantees) versus simple regret minimization (where we establish state-of-the-art guarantees). Our algorithms work with any function class, are robust to model misspecification, and can be used in  continuous arm settings. This flexibility comes from constructing and relying on ``conformal arm sets" (CASs).
CASs provide a set of arms for every context, encompassing the context-specific optimal arm with a certain probability across the context distribution. Our positive results on simple and cumulative regret guarantees are contrasted with a negative result, which shows that no algorithm can achieve instance-dependent simple regret guarantees while simultaneously achieving minimax optimal cumulative regret guarantees.
\end{abstract}

\section{Introduction}
Learning and deploying personalized treatment assignment policies is crucial across domains such as healthcare and e-commerce \cite{murphy2003optimal,li2010contextual}. Traditional randomized control trials (RCTs), while foundational for policy learning \citep{banerjee2016influence, das2016impact}, can be inefficient and costly \cite{offer2021adaptive}. This motivates the study of adaptive sequential experimentation algorithms for the stochastic contextual bandit (CB) settings. The algorithm interacts with a finite sequence of users drawn \emph{stochastically} from a fixed but unknown distribution. At each round, the algorithm receives a \emph{context} (a user's feature vector), selects an action, and gets a corresponding reward. At the end of this adaptive experiment, the algorithm outputs a learned policy (mapping between contexts and actions). 

Our algorithms are designed with the dual objectives of minimizing \emph{simple regret} and \emph{cumulative regret}. Simple regret quantifies the difference between the expected rewards achieved by the optimal policy and the policy learned at the conclusion of the experimental process. In contrast, cumulative regret encapsulates the summation of differences between the expected rewards generated by the optimal policy and the exploration policies employed at each sequential round of decision-making.\footnote{Our formal definition of simple regret compares against the best policy in our policy class, while our cumulative regret definition compares against the global optimal policy (induced by the true conditional expected reward model). The reason for this discrepancy is because we use a regression based approach (due to computational considerations) for constructing our exploration policies.} %
Although there are many settings where simple regret is an important consideration, the majority of research in the contextual bandit field has focused on the minimization of cumulative regret. To the best of our knowledge, there is no general-purpose computationally efficient algorithm for pure exploration objectives like simple regret minimization in the contextual bandit setting. Further, there has been relatively little work so far into algorithms that explore the trade-off between multiple objectives like cumulative regret and simple regret (though see~\cite{athey2022contextual, erraqabi2017trading, yao2021power} for studies that address this empirically or juxtapose minimizing cumulative regret with estimating treatment effects or arm parameters). Our work seeks to address these gaps. We show that there is a trade-off between simple and cumulative regret minimization (formalized later in a lower-bound result). To navigate this trade-off, we proposes a new algorithm called Risk Adjusted Proportional Response (RAPR) with a tuning parameter $\omega\in[1, K]$, which governs the weight placed on the two objectives.\footnote{$K$ is the number of arms for the finite arm setting.} The algorithm is general-purpose (in that it can address any user-specified reward and policy classes), ensures near-optimal guarantees, and is also computationally efficient.

\textbf{Types of guarantees.} In our analysis, we consider two different types of bounds on simple and cumulative regret, worst-case and instance-dependent guarantees. Here instance-dependent guarantees refer to bounds that surpass worst-case rates by exploiting instances with large gaps between the conditional expected rewards of the optimal and sub-optimal arms. Recent work by \citep{foster2020beyond} has shown that it is not possible for contextual algorithms to have instance-dependent guarantees on cumulative regret (without suffering an exponential dependence on model class complexity); the authors instead  develop algorithms that achieve minimax optimal (worst case optimal) cumulative regret guarantees (with square-root dependence on model class complexity). \cite{li2022instance} developed the first general-purpose contextual bandit algorithm for pure exploration, and their algorithm achieved instance-dependent guarantees. They also show that instance-dependent best policy identification guarantees must come at the cost of worse than minimax optimal cumulative regret (discussed in detail later). We show a similar lower bound on cumulative regret for algorithms that achieve better instance-dependent simple regret guarantees, and propose the first family of algorithms that flexibly navigate such trade-offs.

\textbf{Overview of our guarantees.} The simple regret guarantees of RAPR are never worse than the minimax optimal rates (\Cref{theorem:Simple-Regret-for-RAPR}). Depending on the instance, RAPR achieves simple regret guarantees that are up to $O(1/\sqrt{\omega})$ times smaller compared to minimax optimal rates (\Cref{theorem:Simple-Regret-for-RAPR}). This improvement factor of $O(1/\sqrt{\omega})$ over minimax optimal rates is asymptotically achieved for instances where \emph{realizability} holds (the reward model class is well specified) and the gap between the best and second best arm in terms of conditional expected reward is at least $\Delta>0$ at every context (best-case instance in \Cref{theorem:Simple-Regret-for-RAPR}). RAPR provides these instance-dependent guarantees without the knowledge of any instance information. Unfortunately, the corresponding cumulative regret for the above instances is a factor of $O(\sqrt{\omega})$ times larger compared to minimax optimal rates (\Cref{theorem:Cumulative-Regret-for-RAPR}). The cumulative regret guarantees of our algorithm only degrade relative to the minimax optimal rate if the instance allows for better simple regret guarantees. Our lower bound (\Cref{thm:lower-new}) considers the instances described above with $\Delta=0.24$ (the gap between best and second best arm in terms of conditional expected reward). \Cref{thm:lower-new} shows that, for any algorithm that bounds the simple regret on these instances to $O(1/\sqrt{\omega})$ of the minimax optimal rates, its cumulative regret will be at least $\Omega(\sqrt{\omega})$ times the minimax optimal rates. 
RAPR thus achieves a near-optimal trade-off between guarantees on simple vs cumulative regrets when $T$ is large enough. The trade-off contrasts with non-contextual bandits, where successive elimination ensures improved (compared to minimax) instance-dependent guarantees for both simple and cumulative regret \citep{even2006action,slivkins2019introduction}.

\textbf{Types of CB algorithms.} Contextual bandit algorithms broadly fall into two categories: regression-free and regression-based. Regression-free algorithms create an explicit policy distribution, randomly choosing a policy for decision-making at any time-step \citep{agarwal2014taming,beygelzimer2011contextual,dudik2011efficient,li2022instance}. While these algorithms provide worst-case cumulative regret guarantees \citep{agarwal2014taming,beygelzimer2011contextual,dudik2011efficient} or instance-dependent PAC guarantees for policy learning \citep{li2022instance} without additional assumptions, they can be computationally intensive \cite{foster2020beyond}: they require solving and storing the output of $\Omega(\text{poly}(T))$ cost-sensitive classification (CSC) problems \citep{krishnamurthy2017active} at every epoch (or update step). In contrast, regression-based algorithms \citep[e.g.,][]{abbasi2011improved,foster2020beyond,simchi2020bypassing} construct a conditional arm distribution using regression estimates of the expected reward, allowing for methods that need only solve $\ordO(1)$ regression or CSC problems at every epoch (or update step). Traditionally, these algorithms relied on realizability assumptions for optimal regret guarantees, but recent advances allow for misspecified reward model classes \citep{carranza2023flexible,foster2020adapting,krishnamurthy2021adapting}. We develop regression-based algorithms and do not assume realizability. RAPR is the first general-purpose regression-based algorithm with attractive pure exploration (simple regret) guarantees.

\textbf{Overview of our algorithm.} We now describe the RAPR algorithm in more detail. We first define a surrogate objective for simple regret, the optimal cover, which is inversely proportional to the probability that the bandit exploration policy chooses the arm recommended by the unknown optimal policy. The optimal cover bounds the variance of evaluating the unknown optimal policy under our exploration policy. This surrogate objective can be minimized by appropriately designing our exploration policy/action selection kernels. To maintain the attractive computational properties of regression-based algorithms, RAPR does not construct an explicit distribution over policies as that distribution would have large support and would be computationally and memory intensive to maintain. Instead, the goal of minimizing the optimal cover is attained by directly constructing a distribution over arms for each arriving context. This in turn builds on a novel general-purpose uncertainty quantification at each context. Much of the existing literature constructs confidence intervals with point-wise guarantees, but existing approaches to constructing them rely on assumptions like linear realizability. For general function classes, these intervals may be too wide and are often computationally expensive to construct. To overcome this issue, we develop Conformal Arm Sets (CASs), which are a set of potentially optimal arms at each context. This uncertainty quantification is regression-based and computationally efficient to construct; it's general-purpose and shrinks at ``fast rates” (with square-root dependency on expected squared error bounds for regression). Unfortunately, these sets come with some risk of not containing the arm recommended by the optimal policy at every context. Nevertheless, we can use this uncertainty quantification to construct a distribution over arms at each context that helps us minimize the optimal cover by balancing the benefits and risks of relying on these CASs. The unavoidable trade-off between our simple and cumulative regret guarantees is an artifact of these risky sets. Beyond allowing us to trade off simple and cumulative regret guarantees, the flexibility of the approach also helps us extend to continuous arm settings and allows us to handle model misspecification.

\textbf{Other Related Work.} Our work connects to the literature on pure exploration, extensively studied in MAB settings  (see overview in  \citep{lattimore2020bandit}). 
\citep{even2006action,hassidim2020optimal} study elimination-based algorithms for fixed confidence best-arm identification (BAI). \citep{kasy2021adaptive,russo2016simple} study variants of Thompson Sampling with optimal asymptotic designs for BAI. \citep{karnin2013almost} propose sequential halving for fixed budget BAI. Our algorithm provides fixed confidence simple regret guarantees and can be seen as a generalization of successive elimination \citep{even2006action} to the contextual bandit setting. The key technical difference is that it is often impossible to construct sub-gaussian confidence intervals on conditional expected rewards. The uncertainty quantification we use is similar to the notion of conformal prediction (see \citep{vovk2005algorithmic} for a detailed exposition). Until recently, pure exploration had been nearly unstudied in  contextual bandits.  \citep{zanette2021design} provide a static exploration algorithm that achieves the minimax lower bound on sample complexity for linear contextual bandits.   \citep{li2022instance} then provided the first algorithm with instance-dependent $(\epsilon,\delta)$-PAC guarantees for contextual bandits. This algorithm is regression-free (adapts techniques from \citep{agarwal2014taming}) and requires a sufficiently large dataset of offline contexts as input. Hence, unfortunately, it inherits high memory and runtime requirements \citep[See][for a more detailed discussion]{foster2020beyond}. However, these costs come with the benefit that their notion of instance dependence leverages structure not only in the true conditional expected reward (as in \Cref{theorem:Simple-Regret-for-RAPR}) but also in the policy class (similar to policy disagreement coefficient \cite{foster2020instance}). They also  prove a negative result, showing that it is not possible for an algorithm to have instance-dependent $(0,\delta)$-PAC guarantees and achieve  minimax optimal  cumulative regret guarantees. Our hardness result is similar but complementary to their result, for we show a similar result for simple regret (rather than their $(0,\delta)$-PAC sample complexity).\footnote{In $(\epsilon,\delta)$ PAC sample complexity results, given an input $(\epsilon,\delta)$, the objective is to minimize the number of samples needed in order to output an $\epsilon$-optimal policy with probability at least $1-\delta$ (a "fix accuracy, compute budget" setting). In contrast, in our simple regret case, we consider how to minimize the error $\epsilon$ as the number of samples increases.} Our work also recovers some cumulative regret guarantees for the continuous arm case \citep{majzoubi2020efficient,zhu2022contextual}, with new guarantees on simple regret and robustness to misspecification. Note that our restriction to ``slightly randomized" policies for the continuous arm case results in regret bounds with respect to a ``slightly randomized" (smooth) benchmark \citep[see][for smooth regret]{zhu2022contextual}.

\subsection{Stochastic Contextual Bandits}
\label{sec:scb-setting}

We consider the stochastic contextual bandit setting,  with context space $\Xscript$, (compact) arm space $\A$, and a fixed but unknown distribution $D$ over contexts and arm rewards. $D_{\Xscript}$ refers to the marginal distribution over contexts, and $T$ signifies the number of rounds or sample size. 
At each time $t \in [T]$\footnote{For any $n\in\mathbb{N}^+$, we use notation $[n]$ to denote the set $\{1,...,n \}$}, the environment draws a context $x_t$ and a reward vector $\smash{r_t \in [0,1]^{\A}}$ from $D$; the learner chooses an arm $a_t$ and observes a reward $r_t(a_t)$. To streamline notation for discrete and continuous arm spaces, we consider a finite measure space $(\A,\Sigma,\mu)$ over the set of arms, with $K$ shorthand for $\mu(\A)$.\footnote{Here $\Sigma$ is a $\sigma$-algebra over $\A$ and $\mu$ is a bounded set function from $\Sigma$ to the real line.} For ease of exposition, we focus on the finite/discrete arm setting. Here $\A=[K]$ and $\mu$ is the count measure, and $\mu(S)=|S|$ for any $S\subseteq\A$. A (deterministic) policy $\pi$ maps contexts to \emph{singleton arm sets} $\Sigma_1:=\{{a}|a\in\A\}$\footnote{The introduction of $\Sigma_1$ is to allow for easy generalization to the continuous arm setting.}. With some abuse of notation, we also let $\pi$ refer to the kernel given by $\pi(a|x)=I(a\in \pi(x))$. An action selection kernel (randomized policy) $p: \A\times\Xscript\rightarrow [0,1]$ is a probability kernel that describes a distribution $p(\cdot|x)$ over arms at every context $x$. We let $D(p)$ be the induced distribution over $\Xscript \times \A \times [0,1]$, where sampling $(x, a, r(a)) \sim D(p)$ is equivalent to sampling $(x, r) \sim D$ and then sampling $a \sim p(\cdot|x)$.

A reward model $f$ maps $\Xscript\times\A$ to $[0,1]$, with $f^*(x,a):=\E_D[r_t(a)|x_t=x]$ denoting the true conditional expected reward model. Our algorithm works with a reward model class $\F$ and a policy class $\Pi$. For a given model $f$ and an action selection kernel $p$, we denote the expected instantaneous reward of $p$ with $f$ as $R_f(p)$. 
We write $R_{f^*}(p)$ as $R(p)$ to simplify notation  when no confusion arises.  The optimal policy associated with reward function $f$ is defined as $\pi_f$\footnote{subject to any tie-breaking rule.}. 
\begin{equation*}
    \begin{aligned}
        R_f(p) := \E_{x \sim D_{\Xscript}}\E_{a\sim p(\cdot|x)}[f(x, a)],\;\text{and } \pi_f \in \arg\max_{\pi} R_f(\pi).
    \end{aligned}
\end{equation*}
The policy $\pi_f$ induced by $f\in\calF$ is assumed to be within policy class $\Pi$ without loss of generality.\footnote{Note that $\Pi$ may contain policies that are not induced by models in the class $\F$.} For any $S\subseteq \A$, with some abuse of notation, we let $f(x,S)=\int_{a\in S} f(x,a)d\mu(a)/\mu(S)$. Note that $\pi_f(x)\in \arg\max_{S\in\Sigma_1} f(x,S)$ for all $x$. 
The \textit{regret} of a policy $\pi$ with respect to $f$ is  the difference between the optimal value and the actual value of $\pi$, denoted as $\Reg_f(\pi) := R_f(\pi_f) - R_f(\pi)$. Finally, we let $\pi^*$ denote the optimal policy in the class $\Pi$ and let $\Reg_{\Pi}(\cdot)$ denote the regret with respect to $\pi^*$. That is, $\pi^*\in\arg\max_{\pi\in\Pi} R(\pi)$ and $\Reg_{\Pi}(\pi):=R(\pi^*)-R(\pi)$.

\textbf{Objectives.} Contextual bandit algorithms adaptively construct action sampling kernels (exploration policies) $\{p_t\}_{t\in[T]}$ used to collect data over the $T$ rounds. At the end of the adaptive experiment, the adaptively collected data is used to learn a policy $\hat{\pi}\in\Pi$. We study two main objectives to measure quality of these outputs: [Objective 1] \textit{Cumulative regret minimization.} Cumulative regret ($\CReg_T$) is given by $\CReg_T:= \sum_{t=1}^T \Reg_{f^*}(p_t)$. It compares the cumulative expected reward obtained during the experiment with the expected reward of the policy ($\pi_{f^*}$) induced by the true conditional expected reward ($f^*$). We seek to minimize cumulative regret which is equivalent to maximizing cumulative expected reward during the experiment. [Objective 2] \textit{Simple regret minimization.} Simple regret is given by $\Reg_{\Pi}(\hat{\pi})$. It compares the expected reward of the learnt policy $\hat{\pi}\in\Pi$ against the value of the optimal policy in the class $\Pi$. We seek to minimize simple regret which is equivalent to maximizing expected reward of the policy learnt at the end of the experiment. To understand the kind of exploration kernels ($\{p_t\}_{t\in[T]}$) that help with policy learning, we now identify a surrogate objective for simple regret (called optimal cover) that is in terms of the kernels used for exploration.

\begin{definition}[Cover]
Given a kernel $p$ and a policy $\pi$, we define the cover of policy $\pi$ under the kernel $p$ to be,
\begin{equation}
    \label{eq:decisional-divergence}
    V(p,\pi):=\E_{x\sim D_{\Xscript}, a\sim \pi(\cdot|x)}\bigg[\frac{\pi(a|x)}{p(a|x)}\bigg].
\end{equation}
Additionally, for any pair of kernels $(p,q)$, we let $V(p,q):=\E_{x\sim D_{\Xscript}, a\sim q(\cdot|x)}[q(a|x)/p(a|x)]$. Finally, we use the term optimal cover for kernel $p$ to refer to $V(p,\pi^*)$.
\end{definition}
The cover measures the quality of data collected under the action selection kernel $p$ for evaluating a given policy $\pi$ and bounds the variance of commonly used unbiased estimators for policy value \citep[e.g.,][]{agarwal2014taming,hadad2021confidence,zhan2021off}. In particular, the cover under optimal policy $\frac{1}{T}\sum_{t=1}^T V(p_t,\pi^*)$ can be treated as a surrogate objective for simple regret minimization (proven in \Cref{app:prove-objective}), which is particularly instructional in designing our algorithm to minimize simple regret.

\textbf{Extending notation to continuous arms.} 
In the continuous arm setting, evaluating arbitrary deterministic policies can be infeasible without extra assumptions \citep{mou2023kernel}. Thus, we focus on ``slightly randomized'' policies by generalizing $\Sigma_1$ to be the arm sets with measure one  ($\Sigma_1:=\{S\in\Sigma|\mu(S)=1\}$).\footnote{Note that our restriction to ``slightly randomized" policies for the continuous arm case results in regret bounds with respect to a ``slightly randomized" (smooth) benchmark. Hence for the continuous arm case, our cumulative regret bounds translate to smooth regret bounds from \cite{zhu2022contextual} with $K=1/h$. Where $h$ is the measure of smoothness in smooth regret (a leading objective for this setting).  } %
The granularity of these sets can be adjusted by scaling the finite measure $\mu$, which also affects the value of $K=\mu(\A)$. We then continue defining policies be maps from $\Xscript$ to $\Sigma_1$ and $\Pi$ is a class of such policies. We overload notation and define the induced kernel as $\pi(a|x)=I(a\in\pi(x))$, which is a valid definition since $\int_{a}I(a\in\pi(x))d\mu(a)=\mu(\pi(x))=1$. All the remaining definitions, including $R_f(\pi),\pi_f,\pi^*$ and $V(p,\pi)$, relied on these induced kernels and continue to hold. While there are some measure theoretic issues that remain to be discussed, we defer these details to \Cref{sec:expanded-notation}. 

\textbf{Uniform sampling.} Our algorithm frequently selects an arm uniformly from a constructed set of arms. In the context of a set $S\subseteq\A$, uniform sampling refers to selecting an arm from the distribution $q(a):=I(a\in S)/\mu(S)$. This constitutes a probability measure since its integral over $\A$ equals 1. In the discrete arm setting, uniform sampling from a set $S\subseteq\A$ implies selecting an arm according to the distribution $I(a\in S)/|S|$.

\subsection{Oracle Assumptions}
\label{sec:oracle-assumptions}

Our algorithm relies on two sub-routines. For generality, we abstract away these sub-routines by stating them as oracle assumptions, for which we describe two oracles, $\EstOracle$ and $\EvalOracle$, in Assumptions \ref{ass:regression-oracle} and \ref{ass:policy-evaluation-oracles} respectively. The $\EstOracle$ sub-routine is for estimating conditional expected reward models (\Cref{ass:regression-oracle}), and the $\EvalOracle$ sub-routine is for estimating policy values (\Cref{ass:policy-evaluation-oracles}) according to the true and estimated reward models. 

These sub-routine tasks are supervised learning problems. Hence, the average errors for the corresponding tasks can be bounded in terms of the number of samples ($n$) and a confidence parameter ($\delta'$). The oracle assumptions specify the estimation rates. We let $\xi:\N\times[0,1]\rightarrow[0,1]$ denote the estimation rate for these oracles. For simplicity, we assume that they share the same rate and that $\xi(n,\delta')$ scales polynomially in $1/n$ and $\log(1/\delta')$. In order to simplify the analysis, we also require $\xi(n/3,\delta'/n^3)$ be non-increasing in $n$.\footnote{This ensures that $\xi_m$ defined in \Cref{lemma:event-pure-exploration} is non-increasing in $m$ for any epoch schedule with increasing epoch lengths.} We now formally describe these oracle assumptions, starting with $\EstOracle$.

\begin{assumption}[Estimation Oracle]
\label{ass:regression-oracle}
We assume access to a reward model estimation oracle ($\EstOracle$) that takes as input an action selection kernel $p$, and $n$ independently and identically drawn samples from the distribution $D(p)$. The oracle then outputs an estimated model $\hatf\in\F$ such that for any $\delta'\in(0,1)$, the following holds with probability at least $1-\delta'$:
$$\E_{x\sim D_{\Xscript}}\E_{a\sim p(\cdot|x)}[ (\hatf(x, a) - f^*(x,a))^2 ] \leq B + \xi(n,\delta')$$
Where $B\geq 0$ is a fixed but unknown constant that may depend on the model class $\F$ and distribution $D$, but is independent of the action selection kernel $p$. 
\end{assumption}

In \Cref{ass:regression-oracle}, the parameter $B$ measures the bias of model class $\calF$; under realizability, $B$ equals 0. The function $\xi$ characterizes the estimation variance, which decreases with increasing sample size. \textit{As long as the variance term (which shrinks as we gather more data) is larger than the fixed unknown bias ($B$)}, we have from \Cref{ass:regression-oracle} that the expected squared error for the estimated reward model is bounded by $2\xi$. We use this bound on expected squared error to further bound how accurately the estimated reward model evaluates policies in the class $\Pi$ (\Cref{lemma:bound_reward_diff_continuous}). However, since $B$ is unknown, we need a test to detect when this policy evaluation bounds starts failing (which can only happen after the variance term gets dominated by the unknown bias term). To construct this test, our algorithm relies on $\EvalOracle$, which provides consistent independent policy value estimates and helps compare them with policy value estimates with respect to the estimated reward model.

\begin{assumption}[Evaluation Oracle]
\label{ass:policy-evaluation-oracles}
We assume access to an oracle ($\EvalOracle$) that takes as input an action selection kernel $p$, $n$ independently and identically drawn samples from the distribution $D(p)$, a set of $m$ models $\{g_i|i\in[m]\}\subseteq\F$, and another action selection kernel $q$. The oracle then outputs a policy evaluation estimator $\hatR$ of true policy value, and a set of $m$ policy evaluation estimators $\{\hatR_{g_i}|i\in[m] \}\}$ that estimate policy value with respect to the models $g_1,g_2,\dots,g_m$ respectively. Such that for any $\delta'\in(0,1)$, the following conditions simultaneously hold with probability at least $1-(m+1)\delta'$:
\begin{itemize}[itemsep=-2pt,leftmargin=10pt,topsep=-3pt]
    \item $|\hatR(\pi)-R(\pi)| \leq \sqrt{2V(p,\pi)\xi(n,\delta')} + 2\xi(n,\delta')/(\min_{(x,a)\in\Xscript\times\calA}p(a|x))$ for all $\pi\in\Pi\cup\{q\}$.
    \item $|\hatR_{f}(\pi)-R_{f}(\pi)| \leq \sqrt{2\xi(n,\delta')}$ for all $\pi\in\Pi\cup\{q\}$ and for all $f\in\{g_i|i\in[m]\}$.
\end{itemize}
\end{assumption}

When $\calF$ and $\Pi$ are finite, one can construct oracles such that Assumptions \ref{ass:regression-oracle} and \ref{ass:policy-evaluation-oracles} hold with $\xi(n,\delta')=\ordO(\log(\max(|\calF|,|\Pi|)/\delta')/n)$. One example of such a construction is given by using empirical squared loss minimization for $\EstOracle$, using inverse propensity scores (IPS) for estimating $R(\pi)$ in $\EvalOracle$, and using the empirical average for estimating $R_f(\pi)$ in $\EvalOracle$. The guarantees of these assumptions can be derived using Bernstein's inequality and union bounding. When $\calF$ has pseudo-dimension \citep{koltchinskii2011oracle} bounded by $d$ and $\Pi$ has the Natarajan-dimension bounded by $d$ \citep{jin2022policy,jin2022upper}, one can construct oracles such that Assumptions \ref{ass:regression-oracle} and \ref{ass:policy-evaluation-oracles} hold with $\xi(n,\delta')=\ordO(d\log(nK/\delta')/n)$.

\section{Algorithm}

\begin{algorithm}[ht]
  \caption{$\omega$ Risk Adjusted Proportional Response ($\omega$-RAPR)}
  \label{alg:RAPR}
  \textbf{input:} Trade-off parameter $\omega \in [1,K]$, proportional response threshold $\beta_{\max}=1/2$, and confidence parameter $\delta$ (used in definition of $\xi_m$).
  \begin{algorithmic}[1] %
  \State Let $p_1(a|x) \equiv 1/\mu(\A) = 1/K$, $\hatf_1 \equiv 0$, $\alpha_1 = 3K$, $\tau_1 = 3$, and $\safe = \true$.
  \For{epoch $m=1,2,\dots$}
  \State $\tau_m = 2\tau_{m-1}$. \Comment{Doubling epochs.}
  \If{$\safe$}
    \For{round $t=\tau_{m-1}+1,\dots, \tau_{m}$ }
      \State Observe context $x_t$, sample $a_t \sim p_m(\cdot|x_t)$, and observe $r_t(a_t)$.
    \EndFor
    \State Let $S_m$ denote the data collected in epoch $m$.
    \State We split $S_m$ into three equally sized sets $S_{m,1},S_{m,2}$ and $S_{m,3}$.
    \State Let $\hatf_{m+1} \leftarrow \EstOracle(p_m,S_{m,1})$, and let $C_{m+1}$ be given by \Cref{def:confidence_set}.%
    \State Let $\eta_{m+1}$ be the solution to \eqref{eq:choose-etam} and let $\alpha_{m+1}:=3K/\eta_{m+1}$.%
    \Comment{$S_{m,2}$ is used here.}
    \State Now let $p_{m+1}$ be given by \eqref{eq:define_pm_continuous}.
    \State Let $\hatR_{m+1}, \{\hatR_{m+1,\hatf_i}|i\in[m+1] \} \leftarrow \EvalOracle(p_m,S_{m,3},\{\hatf_i|i\in[m+1]\},p_{m+1})$.
    \If{\eqref{eq:misspecification-test} does not hold. }
        \State $\msafealg, \safe \leftarrow m, \false$. 
    \EndIf
  \Else{}
    \For{round $t=\tau_{m-1}+1,\dots, \tau_{m}$ }
      \State Observe context $x_t$, sample $a_t \sim p_{\msafealg}(\cdot|x_t)$, and observe $r_t(a_t)$.
    \EndFor
    \State Let $S_m$ denote the data collected in epoch $m$.
    \State Let $\hatR_{m+1}, \{\hatR_{m+1,\hatf_i}|i\in[\msafealg] \} \leftarrow \EvalOracle(p_{\msafealg},S_{m},\{\hatf_i|i\in[\msafealg]\},p_{\msafealg})$.
  \EndIf
  \EndFor
  \end{algorithmic}
\end{algorithm}

At a high level, our algorithm operates in two modes, indicated by a Boolean variable ``$\safe$". During mode one ($\safe=\true$), where estimated reward models are sufficiently accurate at evaluating policies in the class $\Pi$,\footnote{Where the estimated reward models pass the misspecification test.} we use our estimated models to update our action selection kernel used during exploration. During mode two ($\safe=\false$), where the condition for mode one no longer holds, we stop updating the action selection kernel used for exploration. Operationally our algorithm runs in epochs/batches indexed by $m$. Epoch $m$ begins at round $t=\tau_{m-1}+1$ and ends at $t=\tau_m$, and we use $m(t)$ to denote the epoch index containing round $t$. We let $\msafealg$ denote the critical epoch, at the end of which our algorithm changes mode (with ``$\safe$" being updated from ``$\true$" to ``$\false$"); we refer to $\msafealg$ as the \emph{algorithmic safe epoch}. For all rounds in  epoch $m\leq \msafealg$, our algorithm samples action using the action selection kernel $p_m$ defined later in \eqref{eq:define_pm_continuous}. For $m> \msafealg$, our algorithm samples action using  $p_{\msafealg}$ --the action selection kernel used in the algorithmic safe epoch $\hat{m}$.

We now describe the critical components of our algorithm. These include (i) data splitting and using oracle sub-routines; (ii)  \emph{misspecification tests}, which we use to identify the $\safe$-mode switching epoch $\msafealg$; and (iii) \emph{conformal arm sets}, which presents a new form of uncertainty quantification that is critical in constructing $p_{m+1}$ at the end of each epoch $m\in[\msafealg]$. Finally, we use these components to describe our final algorithm.

\textbf{Data splitting and oracle sub-routines.} Consider an epoch $m\in[\msafealg]$. Let $S_m$ denote the set of samples collected in this epoch: $S_m=\{(x_t,a_t,r_t(a_t))|t\in[\tau_{m-1},\tau_m]\}$. Our algorithm splits $S_m$ into three equally-sized subsets: $S_{m,1},S_{m,2}$ and $S_{m,3}$. \Cref{alg:RAPR} outlines using these subsets and the oracles (described in \Cref{sec:oracle-assumptions}) to estimate reward models and evaluate policies. Based on Assumptions \ref{ass:regression-oracle} and \ref{ass:policy-evaluation-oracles}, we bound the errors for these estimates in terms of $\xi_{m+1}=2\xi((\tau_m-\tau_{m-1})/3,\delta/(16m^3))$, where $\delta$ is a specified confidence parameter. As we will see later, our algorithm relies on these bounds to test for misspecification and construct action selection kernels.

\textbf{Misspecification test.} We first discuss the need for our misspecification test. Note that \Cref{ass:regression-oracle} is flexible and allows our reward model class $\F$ to be misspecified. In particular, the squared error of our reward model estimate may depend on an unknown bias term $B$. To account for this unknown $B$, it is useful to center our analysis around the safe epoch $\msafe:=\argmax\{m\geq 1|\xi_{m+1}\geq 2B\}$, which denotes the last epoch where variance dominates bias. We show that for any epoch $m\in[\msafe]$, the estimated reward model $\hatf_{m+1}$ is ``sufficiently accurate'' at evaluating the expected reward of any policy in $\Pi\cup\{p_{m+1}\}$. This property is critical in ensuring that the constructed action selection kernel $p_{m+1}$ has low exploration regret $\Reg_{f^*}(p_{m+1})$ and a small optimal cover ($V(p_{m+1},\pi^*)$). Since $B$ and $\msafe$ are unknown, we need to test whether the estimated reward model is sufficiently accurate at evaluating these policies. When the test fails, the algorithm sets the variable ``safe'' to $\false$ and stops updating the action selection kernel used for exploration. The core idea for this test comes from \cite{working2023mspaper} although its application to simple regret minimization is new, and the form of our test differs a bit. We now state our misspecification test \eqref{eq:misspecification-test}. 
At the end of each epoch $m$, the  test is passed if \eqref{eq:misspecification-test} holds:
\begin{equation}
\label{eq:misspecification-test}
    \begin{aligned}
             &\max_{\pi\in\Pi\cup\{p_{m+1}\}}{|\hatR_{m+1,\hatf_{m+1}}(\pi)-\hatR_{m+1}(\pi)|}  - {\sqrt{\alpha_m\xi_{m+1}}\sum_{\barm\in[m]}\frac{\hatR_{m+1,\hatf_{\barm}}(\pi_{\hatf_{\barm}})-\hatR_{m+1,\hatf_{\barm}}(\pi)}{40\barm^2 \sqrt{\alpha_{\barm-1}\xi_{\barm}}}}\\ 
             &\leq 2.05\sqrt{\alpha_m\xi_{m+1}} + 1.1\sqrt{\xi_{m+1}},
    \end{aligned}
\end{equation}
where $\alpha_{\barm}$ empirically bounds  $V(p_{\barm},\pi^*)$, the optimal cover for the action selection kernel used in epoch $\barm$ (see \eqref{eq:condition_alpham_inductive}). The first term in \eqref{eq:misspecification-test} measures how well the estimated reward model $\hatf_{m+1}$ evaluates the policy $\pi$, and the second term accounts for under-explored policies (policies that have high regret under the reward model $\hatf_{m}$ would be less explored in epoch $m$).

\textbf{Conformal arm sets.} We proceed to introduce the notion of \emph{conformal arm sets} (CASs), based on which we construct the action selection kernels employed by our algorithms. At the beginning of each epoch $m$, we construct CASs, denoted as $\{C_m(x,\zeta)|x\in\Xscript,\zeta\in[0,1]\}$; here  $\zeta$ controls the probability with which the set $C_m$ contains the optimal arm. The construction of these sets rely on the models ($\hatf_1,\dots,\hatf_m$) estimated from data up to epoch $m-1$, as defined below.

\begin{definition}[Conformal Arm Sets] 
\label{def:confidence_set}
Consider $\zeta\in(0,1)$. At epoch $m$, for context $x$, the arm set $C_{m}(x, \zeta)$ is given by \eqref{eq:conformal-arm-set}.
\begin{equation}
\label{eq:conformal-arm-set}
\begin{aligned}
    &C_{m}(x, \zeta) :=   \pi_{\hatf_m}(x) \bigcup\bar{C}_{m}(x, \zeta),\;\; \bar{C}_{m}(x, \zeta) :=   \bigcap_{\barm \in[m]} \Tilde{C}_{\barm}\Big(x, \frac{\zeta}{2\barm^2}\Big),\\
    &\Tilde{C}_{\barm}(x, \zeta') := \Bigg\{a: 
    \hatf_{\barm}(x, \pi_{\hatf_{\barm}}(x)) - \hatf_{\barm}(x,a)\leq \frac{20\sqrt{\alpha_{\barm-1}\xi_{\barm}}}{\zeta'}
    \Bigg\}  \; \forall \barm\in[m], \zeta'\in(0,1).
\end{aligned}
\end{equation}
\end{definition}

Similar to conformal prediction (CP) \citep{vovk2005algorithmic,shafer2008tutorial}, CASs have marginal coverage guarantees. We show that with high probability, we have $\pi^*(x)$ lies in $C_{m}(x,\zeta)$ with probability at least $1-\zeta$ over the context distribution. That is, $\Pr_{x\sim D_{\Xscript} }(\pi^*(x)\in C_{m}(x,\zeta)) \geq 1 - \zeta$ with high-probability (see \Cref{sec:basic-result-on-CAS}). However, there is also a key technical difference. While CP provides coverage guarantees for the conditional random outcome, CASs provide coverage guarantees for $\pi^*(x)$ -- which is not a random variable given the context $x$. Hence, intervals estimated by CP need to be wide enough to account for conditional outcome noise, whereas CASs do not. 
CASs also have several advantages compared to pointwise confidence intervals used in UCB algorithms. First, CASs are computationally easier to construct. Second, CAS widths have a polynomial dependency on model class complexity, whereas pointwise intervals may have an exponential dependence for some function classes \citep[see lower bound examples in][]{foster2020instance}. Third, pointwise intervals require realizability, whereas the guarantees of CASs hold even without realizability (as long as the misspecification test in \eqref{eq:misspecification-test} holds). However, it's important to remember that these benefits of CASs come with the risk of only covering $\pi^*(x)$ marginally over the context distribution -- that is, these sets may not contain $\pi^*(x)$ at all $x$.

\textbf{Risk Adjusted Proportional Response Algorithm.} We now describe the design of our algorithm, which is summarized in \Cref{alg:RAPR}. The algorithm depends on the following input parameters: $\omega\in[1,K]$ which controls the trade-off between simple and cumulative regret, the proportional response threshold $\propThreshold = 1/2$, and confidence parameter $\delta$. The algorithm also computes $\eta_{m+1}$ (risk adjustment parameter for $p_{m+1}$), $\alpha_{m+1}$ (empirical bound on optimal cover for $p_{m+1}$), and $\lambda_{m+1}(\cdot)$ (empirical bound on average CAS size). At the end of every epoch $m\in[\msafealg]$, we construct the action selection kernel $p_{m+1}$ given by \eqref{eq:define_pm_continuous}.   
\begin{equation}
\label{eq:define_pm_continuous}
    p_{m+1}(a|x) = \frac{(1-\beta_{\max})I[a\in C_{m+1}(x,\beta_{\max}/\eta_{m+1})]}{\mu\big(C_{m+1}(x,\beta_{\max}/\eta_{m+1})\big)} + \int_0^{\propThreshold}\frac{I[a\in C_{m+1}(x,\beta/\eta_{m+1}) ]}{\mu\big(C_{m+1}(x,\beta/\eta_{m+1})\big)}\mbox{d}\beta.
\end{equation}
At any context $x$, sampling arm $a$ from $p_{m+1}(\cdot|x)$ is equivalent to the following. Sample $\beta$ uniformly from $[0,1]$, then sample arm $a$ uniformly from the set $C_{m+1}(x,\min(\propThreshold,\beta)/ \eta_{m+1})$. A small $\beta$ results in a larger CAS and a higher probability of containing the optimal arm for the sampled context. However, uniformly sampling an arm from a larger CAS also implies a lower probability on every arm in the set. Sampling $\beta$ uniformly allows us to respond proportionately to the risk of not sampling the optimal arm while enjoying the benefits of smaller CASs. We refer to this as the \textit{Proportional Response Principle}.

Similarly note that, a larger risk-adjustment parameter $\eta_{m+1}$ encourages reliance on less risky albeit larger CASs. We want to choose $\eta_{m+1}$  to tightly bound the the optimal cover (surrogate for simple regret), subject to cumulative regret constrains imposed by the trade-off parameter $\omega$. To do this, we first let $\lambda_{m+1}(\eta)$ be a high-probability empirical upper bound on $E_{x\sim D_{\Xscript}}[\mu(C_{m+1}(x,\propThreshold/\eta))]$. Hence, using \eqref{eq:condition_alpham_inductive}, we can upper bound the optimal cover ($V(p_{m+1},\pi^*)$) by $\frac{\lambda_{m+1}(\eta_{m+1})}{1-\propThreshold} + \frac{K}{\eta_{m+1}}$. Our choice of $\eta_{m+1}$ approximately minimizes this upper bound on the optimal cover, by choosing the largest feasible $\eta\in[\eta_m,\sqrt{\omega K/\alpha_m}]$ such that $\lambda_{m+1}(\eta)\leq \frac{K}{\eta}$ (see \eqref{eq:choose-etam}). Note that this choice of $\eta_{m+1}$ balances the risk of a small $\eta$ (large $\frac{K}{\eta}$) with the benefits of a small $\lambda_{m+1}(\eta)$ (small $\frac{\lambda_{m+1}(\eta)}{1-\propThreshold}$).
\begin{equation}
    \label{eq:choose-etam}
    \begin{aligned}
        &\lambda_{m+1}(\eta) := \min\bigg(1+\frac{1}{|S_{m,2}|}\sum_{t\in S_{m,2}}\mu(\bar{C}_{m+1}\Big(x, \frac{\beta_{\max}}{\eta}\Big)) + \sqrt{\frac{K^2\ln(8|S_{m,2}|(m+1)^2/\delta)}{2|S_{m,2}|}},K\bigg),\\
        &\eta_{m+1} \leftarrow \max\Bigg\{\eta_{m},\;\max\Bigg\{\eta=\frac{|S_{m,2}|}{n}\bigg|n \in [|S_{m,2}|], \eta\leq \sqrt{\frac{\omega K}{\alpha_{m}}},\lambda_{m+1}(\eta) \leq \frac{K}{\eta}  \Bigg\}\Bigg\}.
    \end{aligned}
\end{equation}
With $\eta_{m+1}$ chosen, the action selection kernel $p_{m+1}$ is decided. Now let $\alpha_{m+1}=3K/\eta_{m+1}$, which is a high-probability empirical upper bound on $V(p_{m+1},\pi^*)$. We then use $\alpha_{m+1}$ at the end of epoch $m+1$ to construct CASs, compute the risk-adjustment parameter, and  test for misspecification. 

\textbf{Computation.} We have $\ordO(\log(T))$ epochs. At the end of any epoch $m\in[\msafealg]$, we solve three optimization problems. The first is for estimating $\hatf_{m+1}$, which often reduces to empirical squared loss minimization and is computationally tractable for several function classes $\F$. The second is for computing the risk-adjustment parameter in \eqref{eq:choose-etam} which can be solved via binary search. The third is for the misspecification test in \eqref{eq:misspecification-test}, which can be solved via two calls to a cost-sensitive classification (CSC) solver (don't need this when assuming realizability, further if we only care about cumulative regret, sufficient to use the simpler test in \cite{krishnamurthy2021adapting}). Finally, to learn a policy $\hat{\pi}$ at the end of $T$ rounds, we need to solve \eqref{eq:variance-penalized-optimization} using a CSC solver (under realizability we can set $\hat{\pi}=\pi_{\hatf_{m(T)-1}}$). Hence, overall, $\omega$-RAPR makes exponentially fewer calls to solvers compared to regression-free algorithms like \cite{li2022instance}.

\section{Main Results}

Our algorithm/analysis/results hold for both the discrete and continuous arm cases. As discussed before, minimizing optimal cover helps us ensure improved simple regret guarantees. Hence $\alpha_{m}\in[1,3K]$ (the high-probability empirical upper bound on the optimal cover $V(p_{m},\pi^*)$) will play a crititcal role thoughout this results section. We start with stating our cumulative regret bounds. 

\begin{restatable}{theorem}{thmRAPRCumulativeRegret}
\label{theorem:Cumulative-Regret-for-RAPR}
Suppose Assumptions \ref{ass:regression-oracle} and \ref{ass:policy-evaluation-oracles} hold. Then with probability $1-\delta$, $\omega$-RAPR attains the following cumulative regret guarantee. Here $\xi_{m+1} = \xi((\tau_m-\tau_{m-1})/3,\delta/(16m^3)$ for all $m$.
\begin{subequations}
    \begin{align}
            \CReg_T &\leq \Tilde{\ordO}\Bigg( \sum_{t=\tau_1+1}^{T} \sqrt{\frac{K}{\alpha_{m(t)}}\frac{\alpha_{m(t)-1}}{\alpha_{m(t)}}}\bigg(\sqrt{KB}+\sqrt{K\xi_{m(t)}}\bigg) \Bigg) \label{eq:instance-dep-cumulative-regret}\\ 
            &\leq \Tilde{\ordO}\Bigg(\sqrt{\omega KB}T + \sum_{t=\tau_1+1}^{T} \sqrt{\omega K\xi_{m(t)}} \Bigg). \label{eq:minimax-cumulative-regret} 
    \end{align}
\end{subequations}
Where we use $\Tilde{\ordO}$ to hide terms logarithmic in $T,K,\xi(T,\delta)$.
\end{restatable}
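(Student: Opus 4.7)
The plan is to condition on a single high-probability ``good event'' $\mathcal{W}$ and prove the bound deterministically on this event. I would define $\mathcal{W}$ as the intersection over all epochs $m \in [\tilde{O}(\log T)]$ of: (i) the $\EstOracle$ guarantee of Assumption \ref{ass:regression-oracle} applied to $(p_m, S_{m,1})$, (ii) the $\EvalOracle$ guarantees of Assumption \ref{ass:policy-evaluation-oracles} applied to $(p_m, S_{m,3}, \{\hat{f}_i\}_{i \leq m+1}, p_{m+1})$, (iii) a Hoeffding-style bound showing $\lambda_{m+1}(\eta)$ really upper-bounds $\E_{x}[\mu(\bar{C}_{m+1}(x, \beta_{\max}/\eta))]$ uniformly over the feasible discrete grid for $\eta$, and (iv) the marginal CAS coverage guarantee $\Pr_{x}(\pi^*(x) \in C_m(x,\zeta)) \geq 1-\zeta$. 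The epoch confidence schedule $\delta/(16m^3)$ combined with the logarithmic number of epochs makes a union bound give $\Pr(\mathcal{W}) \geq 1 - \delta$.

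The heart of the argument is an inductive loop on $m$. On $\mathcal{W}$, for each $m \leq \msafealg$, I want to maintain the invariant: (a) $V(p_m,\pi^*) \leq \alpha_m = 3K/\eta_m$, and (b) $\hat{f}_m$ evaluates any $\pi \in \Pi \cup \{p_m\}$ ``well enough,'' in the quantitative $\sqrt{\alpha_{m-1}\xi_m}$-sense used in \eqref{eq:misspecification-test}. For (a), I would integrate the mixture in \eqref{eq:define_pm_continuous}: on the event $\pi^*(x) \in C_{m+1}(x,\beta_{\max}/\eta_{m+1})$ the inverse-probability weight is at most $\mu(C_{m+1})/(1-\beta_{\max})$, and on its complement the weight is crudely bounded by $K/\eta_{m+1}$ using the $\pi_{\hat{f}_{m+1}}$ atom in $C_{m+1}$; the selection rule \eqref{eq:choose-etam} then yields $V(p_{m+1},\pi^*)\leq 3K/\eta_{m+1}$. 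For (b), the misspecification test \eqref{eq:misspecification-test} does the work: for $m\leq \msafe$, Assumption \ref{ass:regression-oracle} plus the $\EvalOracle$ bounds force the test to pass, and the form of the test directly translates into the needed uniform policy-evaluation accuracy.

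With the invariant in hand, I bound the per-round exploration regret $\Reg_{f^*}(p_m)$ via
\begin{equation*}
\Reg_{f^*}(p_m) = \bigl[R(\pi^*)-R_{\hat{f}_m}(\pi^*)\bigr] + \bigl[R_{\hat{f}_m}(\pi^*)-R_{\hat{f}_m}(p_m)\bigr] + \bigl[R_{\hat{f}_m}(p_m)-R(p_m)\bigr].
\end{equation*}
The first and third brackets are controlled by invariant (b), contributing $\tilde{O}(\sqrt{\alpha_{m-1}\xi_m}+\sqrt{B})$. For the middle bracket, the structural observation is that sampling from $p_m$ is, up to the $\pi_{\hat{f}_m}$ atom, equivalent to drawing $\beta \sim \Unif[0,1]$ and then sampling uniformly from $C_m(x,\min(\beta,\beta_{\max})/\eta_m)$, so the $\hat{f}_m$-regret of $p_m$ against $\pi_{\hat{f}_m}$ is majorized by an integral over $\beta$ of the CAS-width $20\sqrt{\alpha_{m-1}\xi_m}/(\beta/\eta_m)$, truncated at $\beta_{\max}$. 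That integral (weighted by the effective per-arm mass $1/\mu(C_m)$) yields the $\sqrt{(K/\alpha_m)(\alpha_{m-1}/\alpha_m)}\sqrt{K\xi_m}$ shape, using $K/\alpha_m = \eta_m/3$, up to polylogs hidden in $\tilde{O}$. Adding the bias-driven $\sqrt{KB}$ contribution and handling the post-$\msafealg$ rounds by freezing $p_{\msafealg}$ gives \eqref{eq:instance-dep-cumulative-regret}. For \eqref{eq:minimax-cumulative-regret}, the cap $\eta_{m+1} \leq \sqrt{\omega K/\alpha_m}$ in \eqref{eq:choose-etam} yields $K\alpha_{m-1}/\alpha_m^2 \leq \omega/9$, which collapses the instance-dependent coefficient into the $\sqrt{\omega K}$ form.

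The hardest step will be the CAS-integral bound combined with showing that \eqref{eq:misspecification-test} really implies invariant (b) uniformly over $\Pi \cup \{p_{m+1}\}$. The test is stated in estimated quantities $\hat{R}_{m+1,\hat{f}_{\bar m}}$ and is empirically discounted by an under-exploration penalty involving the whole sequence $\{\alpha_{\bar m-1}\}_{\bar m \leq m}$; threading the inductive hypothesis through that telescoping discount, and checking that the $\sqrt{\alpha_m \xi_{m+1}}$-scaled slack on the right-hand side absorbs the $\EvalOracle$ errors with factor $\alpha_{m-1}$ rather than an a priori unknown $V(p_m,\pi)$, is the delicate bookkeeping. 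Everything else is routine decomposition and summation.
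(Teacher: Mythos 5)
Your plan is correct and follows essentially the same route as the paper's proof: the same pair of high-probability oracle/Hoeffding events, the same inductive invariant coupling the cover bound $V(p_m,\pi^*)\leq\alpha_m$ with the evaluation accuracy enforced by the misspecification test, the same integral-over-$\beta$ bound on the CAS-induced exploration regret, and the same final algebra via $\alpha_m=3K/\eta_m$ and the cap $\eta_{m+1}\leq\sqrt{\omega K/\alpha_m}$. The only points to tighten in a full write-up are that the benchmark gap $R(\pi_{f^*})-R(\pi^*)\leq 2\sqrt{KB}$ must be handled separately (your decomposition starts from the in-class optimum $\pi^*$), and that the middle bracket requires bounding $\Reg_{\hat{f}_{\bar m}}(p_m)$ for every $\bar m\leq m$ (not just $\bar m=m$), since the evaluation-error term itself telescopes over all earlier models.
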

We start with discussing \eqref{eq:minimax-cumulative-regret}. The first part $\sqrt{\omega KB}T$ comes from the bias of the regression oracle with model class $\F$ and will vanish under the realizability assumption. The second part  $\sum_{t=\tau_1+1}^{T} \sqrt{\omega K\xi_{m(t)}}$, when setting $\omega=1$,  recovers near-optimal (upto logarithmic factors) minimax cumulative regret guarantees for common model classes, as demonstrated by the following examples. 
\begin{corollary}
    We consider $\omega$-RAPR with appropriate oracles in the following cases and let $B$ denote the corresponding bias terms. When $\F$ and $\Pi$ are finite, $\CReg_T\leq \Tilde{\ordO}(\sqrt{\omega KB}T + \sqrt{\omega K T \log(\max(|\F|,|\Pi|)/\delta)})$ with probability at least $1-\delta$. When $\F$ has a finite pseudo dimension $d$, $\Pi$ has a finite Natarajan dimension $d$, and $\A$ is finite, $\CReg_T\leq \Tilde{\ordO}(\sqrt{\omega KB}T + \sqrt{\omega K T d\log(TK/\delta)})$ with probability at least $1-\delta$. Note that under realizability ($B=0$), $1$-RAPR achieves near-optimal minimax cumulative guarantees.
\end{corollary}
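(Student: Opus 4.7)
The plan is to reduce the corollary to a routine instantiation of the bound \eqref{eq:minimax-cumulative-regret} from \Cref{theorem:Cumulative-Regret-for-RAPR}, combined with the explicit oracle constructions and rates $\xi(n,\delta')$ already sketched immediately after \Cref{ass:policy-evaluation-oracles}. First, I would invoke \eqref{eq:minimax-cumulative-regret} verbatim and fix the high-probability $1-\delta$ event on which it holds. The bias term $\sqrt{\omega K B}\,T$ appears on the right-hand side in both cases, so the only non-trivial work is controlling $\sum_{t=\tau_1+1}^{T}\sqrt{\omega K \xi_{m(t)}}$ for the two advertised $\xi$ functions.

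Second, I would establish the oracles. For finite $\F$ and $\Pi$, I would take $\EstOracle$ to be empirical squared-loss minimization over $\F$ on the propensity-reweighted sample, and build $\EvalOracle$ by pairing IPS-based estimates of $R(\pi)$ on $S_{m,3}$ with empirical-average estimates of $R_f(\pi)$ for each $f$ in the supplied model list. A standard Bernstein bound for a single (model, policy) pair, followed by a union bound over $\F$ for \Cref{ass:regression-oracle} and over $\Pi\cup\{q\}$ and the $m+1$ supplied models for \Cref{ass:policy-evaluation-oracles}, yields $\xi(n,\delta')=\ordO(\log(\max(|\F|,|\Pi|)/\delta')/n)$; the variance factor $V(p,\pi)$ in \Cref{ass:policy-evaluation-oracles} is exactly what Bernstein produces for IPS. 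For the pseudo/Natarajan-dimension case I would instead invoke a uniform concentration bound (e.g.\ via localized Rademacher or standard pseudo-dimension generalization bounds for $\EstOracle$, and Natarajan-dimension union bounds for $\EvalOracle$), which gives $\xi(n,\delta')=\ordO(d\log(nK/\delta')/n)$; the extra $\log K$ accommodates the need to discretize the action-dependent estimators over $\A$.

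Third, I would substitute these rates into $\xi_{m+1}=2\xi((\tau_m-\tau_{m-1})/3,\delta/(16 m^3))$. Because epochs double ($\tau_m=2\tau_{m-1}$), $\tau_m-\tau_{m-1}=\tau_{m-1}$, so in the finite case $\xi_{m+1}=\Tilde{\ordO}(\log(\max(|\F|,|\Pi|)/\delta)/\tau_{m})$ (absorbing $\log m$ and $\log(1/\delta)$ into $\Tilde{\ordO}$), and analogously $\xi_{m+1}=\Tilde{\ordO}(d\log(TK/\delta)/\tau_m)$ in the second case. Then
\[
\sum_{t=\tau_1+1}^{T}\sqrt{\omega K\,\xi_{m(t)}} \;=\; \sum_{m\geq 2}^{m(T)} (\tau_m-\tau_{m-1})\sqrt{\omega K\,\xi_{m}} \;=\; \Tilde{\ordO}\!\Bigl(\sum_{m=2}^{m(T)}\sqrt{\omega K\,\tau_{m-1}\cdot L}\Bigr),
\]
where $L$ is $\log(\max(|\F|,|\Pi|)/\delta)$ or $d\log(TK/\delta)$ respectively. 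The geometric growth of $\tau_m$ makes the final term dominate, giving $\Tilde{\ordO}(\sqrt{\omega K T L})$ up to an absolute constant from summing a geometric series. Adding back $\sqrt{\omega K B}\,T$ produces exactly the claimed bounds.

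The only real obstacle, and it is a mild one, is book-keeping: verifying that the assumed monotonicity of $\xi(n/3,\delta'/n^3)$ holds for the stated rates, that the union bounds over the (at most $\ordO(\log T)$) epochs and over the $m$ supplied models can be absorbed into $\Tilde{\ordO}$, and that the pseudo/Natarajan-dimension concentration inequalities deliver the additive-bias form of \Cref{ass:regression-oracle} rather than a multiplicative one. None of these require new ideas beyond standard empirical-process arguments, so the proof is essentially a plug-and-chug once the oracles are identified.
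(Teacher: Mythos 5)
Your proposal is correct and matches the paper's intended argument: the paper gives no separate proof of this corollary, treating it as an immediate instantiation of \eqref{eq:minimax-cumulative-regret} with the oracle rates $\xi(n,\delta')=\ordO(\log(\max(|\F|,|\Pi|)/\delta')/n)$ and $\xi(n,\delta')=\ordO(d\log(nK/\delta')/n)$ already asserted in the oracle-assumptions section, followed by exactly the geometric-sum-over-doubling-epochs computation you carry out. Your book-keeping (Bernstein plus union bounds for the finite case, pseudo/Natarajan-dimension bounds for the second case, and the dominance of the last epoch in $\sum_m\sqrt{\tau_m}$) is the same plug-and-chug the authors intend.
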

In \eqref{eq:instance-dep-cumulative-regret}, we observe that the multiplicative $\sqrt{\omega}$ cost to cumulative regret is only incurred if the empirical bound on optimal cover ($\alpha_m\in[1,3K]$) can get small. That is, our cumulative regret bounds degrade only if our algorithm better bounds the optimal cover and thus ensures better simple regret guarantees. \footnote{For large $t$, once our bounds on optimal cover can't be significantly improved, we have $\alpha_{m(t)}/\alpha_{m(t)-1}= O(1)$. Hence for large $T$, our cumulative regret is a factor of $\sqrt{K/\alpha_{m(T)+1}}$ larger than the near optimal minimax guarantees. As we will see in \Cref{thm:lower-new}, this multiplicative factor is unavoidable.} We now provide instance dependent simple regret guarantees for our algorithm.

\begin{restatable}{theorem}{thmRAPRSimpleRegret}
\label{theorem:Simple-Regret-for-RAPR}
Suppose Assumptions \ref{ass:regression-oracle} and \ref{ass:policy-evaluation-oracles} hold. For some $(\lambda,\Delta,A)\in [0,1]\times(0,1]\times[1,K]$, consider instances where for $1-\lambda$ fraction of contexts at most $A$ arms are $\Delta$ optimal (i.e. \eqref{eq:environment-conditions}  holds). 
\begin{equation}
\label{eq:environment-conditions}
    \mathbb{P}_{x\sim D_{\Xscript}}\Big(\mu\big(\{a\in\A: f^*(x,\pi_{f^*}(x)) - f^*(x,a) \leq \Delta \}\big) \leq A \Big) \geq 1- \lambda.
\end{equation}
Let $m'=\min(\msafealg,m(T))-1$. Let the learned policy $\hat{\pi}$ be given by \eqref{eq:variance-penalized-optimization} (equivalent to variance penalized policy optimization).%
\begin{equation}
\label{eq:variance-penalized-optimization}
  \hat{\pi}\in \arg\max_{\pi\in\Pi} \hatR_{m(T)}(\pi)- \frac{1}{2}\sqrt{\alpha_{m'}\xi_{m(T)}}\sum_{\barm\in[m']}\frac{\hatR_{m(T),\hatf_{\barm}}(\pi_{\hatf_{\barm}})-\hatR_{m(T),\hatf_{\barm}}(\pi)}{40\barm^2 \sqrt{\alpha_{\barm-1}\xi_{\barm}}} .
\end{equation}
Then with probability $1-\delta$, $\omega$-RAPR has the following simple regret bound when $T$ samples.
\begin{equation*}
    \begin{aligned}
            &\Reg_{\Pi}(\hat{\pi})\leq \ordO\Bigg(\sqrt{\alpha_{m'}\xi_{m(T)}}\Bigg)\\
            &\leq\!\ordO\Bigg(\sqrt{\xi_{m(T)}\min\bigg(K, A+K\lambda+\frac{K}{\omega} + \frac{K^{3/2}\omega^{1/2}}{\Delta}\sqrt{\xi_{\min(\msafe,m(T)-1)-\lceil\log_2\log_2(K)\rceil}}} \bigg) \bigg) \Bigg). %
    \end{aligned}
\end{equation*}

\end{restatable}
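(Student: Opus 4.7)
The argument decomposes into two parts: (i) a variance-penalized regret reduction establishing $\Reg_{\Pi}(\hat\pi) \leq \mathcal{O}(\sqrt{\alpha_{m'}\xi_{m(T)}})$, and (ii) a structural bound on $\alpha_{m'}$ that exploits \eqref{eq:environment-conditions}.

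For (i), I would apply Assumption~\ref{ass:policy-evaluation-oracles} at epoch $m(T)$ with input kernel $p_{m'}$ and model set $\{\hat f_i\}_{i \leq m'}$, obtaining uniformly over $\pi \in \Pi$ that $|\hat R_{m(T)}(\pi) - R(\pi)| \leq \sqrt{2V(p_{m'},\pi)\xi_{m(T)}} + 2\xi_{m(T)}/\min_{x,a}p_{m'}(a|x)$. For $\pi = \pi^*$, the marginal CAS coverage guarantee combined with the proportional-response construction in \eqref{eq:define_pm_continuous} yields $V(p_{m'}, \pi^*) \leq \alpha_{m'} = 3K/\eta_{m'}$ with high probability. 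For general $\pi$, $V(p_{m'}, \pi)$ decomposes into $\alpha_{m'}$ plus a regret-weighted correction matching the penalty term of \eqref{eq:variance-penalized-optimization}. Using the optimality of $\hat\pi$ for this penalized objective, together with standard variance-penalized regret arguments, completes (i).

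For (ii), since $\alpha_{m'} = 3K/\eta_{m'}$, I would lower-bound $\eta_{m'}$ by analyzing the two constraints in \eqref{eq:choose-etam}. The $\omega$-constraint $\eta \leq \sqrt{\omega K/\alpha_m}$ induces the recursion $\alpha_{m+1} \geq 3\sqrt{K\alpha_m/\omega}$; iterating the map $x \mapsto 3\sqrt{Kx/\omega}$ from $\alpha_1 = 3K$ converges doubly-exponentially to the fixed point $9K/\omega$ within $\lceil \log_2 \log_2 K \rceil$ epochs, producing the $K/\omega$ term. The CAS-based constraint $\lambda_{m+1}(\eta) \leq K/\eta$ couples with \eqref{eq:environment-conditions}: $\bar{C}_{m+1}(x, \beta_{\max}/\eta)$ contains only arms within width $w \sim \sqrt{\alpha_m \xi_{m+1}}\,\eta/\beta_{\max}$ of $\hat f_{m+1}$'s optimum. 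Combining Assumption~\ref{ass:regression-oracle} with a Markov-type argument controlling the fraction of contexts on which $\hat f_{m+1}$ is far from $f^*$, arms in $\bar C_{m+1}$ are $(w + O(\sqrt{\xi_{m+1}}))$-optimal under $f^*$ on a $1 - O(\lambda)$ fraction of contexts; once this quantity is $\leq \Delta$, \eqref{eq:environment-conditions} yields $\mathbb{E}_x[\mu(\bar C_{m+1})] \leq A + K\lambda$, which together with the empirical-concentration correction inside $\lambda_{m+1}$ gives the $A + K\lambda$ contribution. Evaluating $w$ at the fixed-point value $\alpha_m = K/\omega$ produces $w \sim \sqrt{K\omega\xi}/\beta_{\max}$; the residual $K^{3/2}\omega^{1/2}\sqrt{\xi}/\Delta$ then arises as the $\alpha$ value at which this $w$ equals $\Delta$, and the epoch shift by $-\lceil \log_2 \log_2 K \rceil$ inside $\xi$ accounts for the epochs consumed by the doubly-exponential convergence before the gap-exploiting regime begins.

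The main obstacles I anticipate are: (a) decomposing $V(p_{m'},\pi)$ for arbitrary $\pi \in \Pi$ into $\alpha_{m'}$ plus a regret-weighted correction compatible with the penalty in \eqref{eq:variance-penalized-optimization}, since the CAS construction is tailored to covering $\pi^*$ rather than arbitrary $\pi$; (b) uniform concentration of $\lambda_{m+1}(\eta)$ around $\mathbb{E}_x[\mu(\bar C_{m+1}(x,\beta_{\max}/\eta))]$ over the discretization of $\eta$ used in \eqref{eq:choose-etam}; and (c) handling the safe-mode switch at $\hat m$---if the misspecification test fails before the $\log \log K$ recursion completes, the bound must gracefully degrade to $\sqrt{\alpha_{\hat m - 1}\xi_{m(T)}}$, which requires that $\alpha_{\hat m - 1}$ itself inherits the structural bound established above. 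A careful union bound across epochs, over the $\eta$-discretization, and over policies then delivers the final $1-\delta$ statement.
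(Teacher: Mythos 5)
Your part (i) follows the paper's route essentially exactly: the paper's Lemma~\ref{lemma:bound_IPS_error} gives precisely the decomposition $|\hatR_{m(T)}(\pi)-R(\pi)|\leq \sqrt{\alpha_{m'}\xi_{m(T)}}+\frac12\sqrt{\alpha_{m'}\xi_{m(T)}}\sum_{\barm}\Reg_{\hatf_{\barm}}(\pi)/(2\barm^2U_{\barm})+\cdots$ you anticipate (obtained from \Cref{lemma:lower_bound_pm} and \Cref{lemma:bound_v_continuous}, which resolve your obstacle (a)), and combining it with \Cref{lemma:bound-model-regret}, the optimality of $\hat\pi$ in \eqref{eq:variance-penalized-optimization}, and $\Reg_{\hatf_{\barm}}(\pi^*)\leq U_{\barm}$ from \Cref{lemma:regret-under-algsafe-epoch} yields $\Reg_\Pi(\hat\pi)\leq 3.3\sqrt{\alpha_{m'}\xi_{m(T)}}$. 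Your doubly-exponential fixed-point iteration of $x\mapsto 3\sqrt{Kx/\omega}$ over $\lceil\log_2\log_2 K\rceil$ epochs is also exactly the paper's \Cref{cor:bound_only_alpham_theoretical}, and your obstacle (c) is handled by \Cref{lemma:misspecification-test}, which guarantees $\msafealg\geq\msafe+1$ so that $\alpha_{m'}\leq\alpha_{\min(\msafe,m(T)-1)}$ inherits the structural bound.

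The genuine gap is in how you propose to bound $\E_x[\mu(\bar C_{m+1}(x,\propThreshold/\eta))]$. You want a pointwise statement --- ``arms in $\bar C_{m+1}$ are $(w+O(\sqrt{\xi_{m+1}}))$-optimal under $f^*$ on most contexts'' --- obtained by a Markov-type argument on the fraction of contexts where $\hatf_{m+1}$ is far from $f^*$. Assumption~\ref{ass:regression-oracle} only controls $\E_{x,a\sim p_m}[(\hatf_{m+1}-f^*)^2]$, i.e.\ the squared error \emph{weighted by the exploration propensities}; converting this to a per-arm, per-context accuracy of order $\sqrt{\xi_{m+1}}$ on all but a constant fraction of contexts costs a factor of $1/\min_{x,a}p_m(a|x)$, which is of order $K/(\eta_m U_m)$ and destroys the claimed width $w+O(\sqrt{\xi_{m+1}})$. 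The paper's \Cref{lemma:bound_conformal_set_size} avoids pointwise transfer entirely: it defines the adversarial policy $\pi_0(x)\in\argmin_{S\subseteq C_m(x,\beta,\eta)}f^*(x,S)$, upper-bounds $\Reg_{f^*}(\pi_0)\leq 25(\eta/\beta)\sqrt{\alpha_{m-1}\xi_m}$ at the level of \emph{expected} regret using $\Reg_{\hatf_m}(\pi_0)\leq(\eta/\beta)U_m$ (immediate from \Cref{def:confidence_set}) together with the inductive regret-transfer machinery of \Cref{lemma:regret-under-algsafe-epoch} and \Cref{lemma:bound_reward_diff_continuous_refined}, and lower-bounds the same quantity by $(\frac{C-A-1}{K}-\lambda)\Delta$ using \eqref{eq:environment-conditions} and a union bound; solving for $C=\E[\mu(C_m)]$ gives the \emph{soft} bound $C\leq A+1+K\lambda+25\frac{K}{\Delta}\frac{\eta}{\beta}\sqrt{\alpha_{m-1}\xi_m}$, whose last term is exactly the source of the $K^{3/2}\omega^{1/2}\sqrt{\xi}/\Delta$ residual (no thresholding on ``$w\leq\Delta$'' is needed). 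Without this policy-level duality --- or some substitute that likewise works in expectation over contexts rather than pointwise --- your part (ii) does not close.
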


Under \eqref{eq:environment-conditions}, we can only argue that the expected (over context distribution) measure of $\Delta$ optimal arms is at most $(1-\lambda)A+K\lambda = O(A+K\lambda)$. Hence for large $T$, the best we can hope for is instance-dependant simple regret guarantees that shrink/improve over minimax guarantees by a factor of $\ordO(\sqrt{(A+K\lambda)/K})$. We show that this is guaranteed by \Cref{theorem:Simple-Regret-for-RAPR}. Suppose $\omega=K$, $\F$ has a finite pseudo dimension bounded by $d$, and $\Pi$ has a finite Natarajan dimension bounded by $d$. The simple regret guarantee of \Cref{theorem:Simple-Regret-for-RAPR} reduces to $\tilde{\ordO}(\min((\sqrt{Kd/{T}}, \sqrt{(A+K\lambda)d/T}+(K/\sqrt{\Delta})\sqrt{d/T}\sqrt[4]{B+d/T}))$. When the reward model estimation bias $B$ is small enough, the term $(K/\sqrt{\Delta})\sqrt{d/T}\sqrt[4]{B+d/T})$ is dominated by the remaining terms for large $T$. Hence, in this case, we get a simple regret bound of $\tilde{\ordO}(\sqrt{(A+K\lambda)d/T})$ for large $T$. As promised, this improves upon the minimax guarantees by a factor of $\ordO(\sqrt{(A+K\lambda)/K})$.

Note that \Cref{theorem:Cumulative-Regret-for-RAPR} guarantees are better for $\omega$ closer to $1$ whereas \Cref{theorem:Simple-Regret-for-RAPR} guarantees are better for $\omega$ closer to $K$. Hence these theorems show a tradeoff between the cumulative and simple regret guarantees for $\omega$-RAPR. \Cref{thm:lower-new} shows that improving upon minimax simple regret guarantees for instances satisfying \eqref{eq:environment-conditions} may come at the unavoidable cost of worse than minimax optimal cumulative regret guarantees. This contrasts with non-contextual bandits, where successive elimination ensures improved (compared to minimax) gap-dependent guarantees for both simple and cumulative regret. %

\begin{restatable}{theorem}{thmLowerBound}
\label{thm:lower-new}
Given parameters $K,F,T\in\N$ and $\phi\in [1,\infty)$. There exists a context space $\Xscript$ and a function class $\F\subseteq (\Xscript\times\A \rightarrow [0,1])$ with $K$ actions such that $|\F|\leq F$ and the following lower bound on cumulative regret holds:
\begin{align*}
  \inf_{\mathbf{A}\in\Psi_{\phi}}  \sup_{D\in \mathcal{D}} \; \E_D\bigg[\sum_{t=1}^T \big(r_t(\pi^*(x_t)) - r_t(a_{t}) \big) \bigg]\geq \Tilde{\Omega}\bigg( \sqrt{\frac{K}{\phi}} \sqrt{KT\log F} \bigg)
\end{align*}
Here $(a_{1},\dots a_{T})$ denotes the actions selected by an algorithm $A$. $\mathcal{D}$ denotes the set of environments such that $f^*\in\F$ and \eqref{eq:environment-conditions} hold with $(A,\lambda,\Delta)=(1,0,0.24)$. $\Pi$ denotes policies induced by $\F$. $\Psi_{\phi}$ denotes the set of CB algorithms that run for $T$ rounds and output a learned policy with a simple regret guarantee of $\sqrt{\phi \log F/T}$ for any instance in $\mathcal{D}$ with confidence at least $0.95$, i.e.,  $\Psi_{\phi}:=\{A: \mathbb{P}(\Reg(\hat{\pi}_{\mathbf{A}})\leq\sqrt{\phi \log F/T})\geq 0.95 \text{ for any instance in }\mathcal{D}\}$. Finally, $\Tilde{\Omega}(\cdot)$ hides factors logarithmic in $K$ and $T$.

\end{restatable}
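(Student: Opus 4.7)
The plan is to prove this lower bound through an information-theoretic argument on a carefully designed family of hard instances, combining standard contextual bandit minimax techniques with the pure-exploration constraint imposed by membership in $\Psi_\phi$.

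\textbf{Hard instance family.} I would take $\Xscript = [N]$ with $N = \lfloor \log F / \log K \rfloor$ and context distribution uniform on $[N]$. For each map $\sigma : [N] \to [K]$, define $f_\sigma(c, a) = 1/2 + \Delta/2$ if $a = \sigma(c)$ and $1/2 - \Delta/2$ otherwise, with $\Delta = 0.24$ and rewards $\Ber(f_\sigma(c,a))$. Then $|\F| = K^N \leq F$, each instance has a unique optimal policy $c \mapsto \sigma(c)$, and \eqref{eq:environment-conditions} holds with $(A,\lambda,\Delta)=(1,0,0.24)$ since only $\sigma(c)$ is $\Delta$-optimal at each $c$. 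The induced policy class $\Pi$ consists of the $K^N$ ``one-arm-per-context'' policies.

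\textbf{Identification requirement.} I would then translate the $\Psi_\phi$ guarantee into a near-identification statement on $\hat\pi$. A simple regret bound $\epsilon := \sqrt{\phi \log F/T}$ with probability at least $0.95$, combined with the constant per-context gap $\Delta$, forces $\hat\pi$ to agree with $\pi_{f_\sigma}$ on a $1 - O(\epsilon/\Delta)$ fraction of contexts with probability $\geq 0.95$ (since each wrong-arm context contributes $\Delta/N$ to the expected simple regret). In the meaningful regime $\epsilon \ll \Delta$, this is effectively a policy-identification guarantee.

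\textbf{Change of measure.} I would apply a Bretagnolle--Huber / Kaufmann--Capp\'e--Garivier style argument. For any suboptimal pair $(c,a)$ with $a \neq \sigma(c)$, form the alternative $\sigma'$ that reassigns the optimum at $c$ from $\sigma(c)$ to $a$; the event ``$\hat\pi(c)=\sigma(c)$'' has probability $\geq 0.95$ under $\sigma$ and $\leq 0.05$ under $\sigma'$. By the KL chain rule this yields $\E_\sigma[N_T(c,a)] \cdot \dkl(\Ber(1/2-\Delta/2)\,\|\,\Ber(1/2+\Delta/2)) = \Omega(1)$, so $\E_\sigma[N_T(c,a)] = \Omega(1/\Delta^2)$. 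Summing against the per-pull regret $\Delta$ over $(c,a)$ pairs produces an initial cumulative regret lower bound.

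\textbf{Main obstacle.} The naive summation above only reproduces the standard $\sqrt{KT\log F}$ baseline and is independent of $\phi$. The crux is extracting the $\sqrt{K/\phi}$ amplification. I expect to couple the Bretagnolle--Huber bound with a Fano-type averaging over the $K^N$ alternatives, parameterized by $\phi$: when $\phi$ is large, the algorithm can commit after $\Theta(\log F/\Delta^2)$ exploratory samples per context (recovering the minimax rate), but when $\phi$ is small, the tight simple-regret requirement forces near-uniform exploration of all $K$ arms at essentially every context, introducing the extra $\sqrt{K/\phi}$ factor. Making this precise will be the main technical step; analogous to the $(0,\delta)$-PAC vs. regret trade-off of Li et al.\ that the theorem deliberately parallels, it likely requires a saddle-point argument balancing per-context arm-sampling cost against the identification confidence dictated by $\epsilon$, with the parameters of the construction (in particular $N$ and the per-context gap profile) tuned so that both constraints bind simultaneously.
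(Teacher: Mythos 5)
Your construction cannot yield the claimed bound, and the obstacle you flag at the end is not a technical refinement you can add later but a sign that the instance family itself is wrong. With a uniform constant gap $\Delta=0.24$ at every one of $N=O(\log F/\log K)$ equiprobable contexts, your own change-of-measure argument gives $\E[N_T(c,a)]=\Omega(1/\Delta^2)$ per suboptimal cell, hence a cumulative regret lower bound of order $NK/\Delta=O(K\log F)$ --- a quantity independent of $T$ and of $\phi$, and far below $\sqrt{KT\log F}$. (The standard minimax construction recovers $\sqrt{KT\log F}$ only by taking $\Delta\asymp\sqrt{K\log F/T}$, which is forbidden here because membership in $\mathcal{D}$ pins the gap at $0.24$.) Moreover, the number of context--arm cells that the simple-regret constraint forces the learner to resolve is fixed at $NK$ in your family no matter what $\phi$ is, so no Fano-type averaging or saddle-point argument over these alternatives can introduce the $\sqrt{K/\phi}$ amplification: the quantity that must scale like $1/\sqrt{\phi}$ is the \emph{number of cells whose identification is forced}, and in your construction that number does not move.

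The paper's construction supplies exactly the missing structure. The context space is split into $d\approx\log F/\log(KT)$ blocks, each containing one high-probability ``default'' context and $k\approx1/\epsilon$ rare contexts of probability $\epsilon$ each, with $\epsilon\asymp\sqrt{\phi\log F/T}/\Delta$. A safe arm is worth $1/2+\Delta$ at every context; each alternative environment plants a single rare-context/arm cell worth $1/2+2\Delta$. A policy that misses the planted cell suffers block simple regret at least $\epsilon\Delta\asymp\sqrt{\phi\log F/T}$, so membership in $\Psi_{\phi}$ forces exact identification of the planted cell in most blocks; Fano's inequality in reverse-KL form then forces $\Omega(Kk/\Delta^{2})$ pulls of non-default arms at rare contexts per block under the null, each such pull costing constant regret $\Delta$. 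Summing gives cumulative regret $\Omega(dKk/\Delta)=\Omega\big(dK/(\Delta\epsilon)\big)=\tilde\Omega\big(\sqrt{K/\phi}\sqrt{KT\log F}\big)$. The two features your proposal lacks --- rare contexts whose probability is calibrated to the simple-regret budget, and a number of such contexts growing like $\sqrt{T/(\phi\log F)}$ --- are precisely what make the bound grow with $T$ despite the constant gap and decay with $\phi$.
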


\textbf{Near optimal trade-off of RAPR.} Note that the environments constructed in \Cref{thm:lower-new} satisfy $f^*\in\F$ with $\max(|\F|,|\Pi|)\leq F$ and  also satisfy \eqref{eq:environment-conditions} with $(A,\lambda,\Delta)=(1,0,0.24)$. With appropriate oracles, Assumptions \ref{ass:regression-oracle} and \ref{ass:policy-evaluation-oracles} are satisfied with $B=0$ (i.e. $\msafe=\infty$) and $\xi(n,\delta')=\ordO(\log(F/\delta')/n)$. Hence for large enough $T$, $\omega$-RAPR achieves a simple regret bound of $\tilde{\ordO}(\sqrt{(K/\omega)\log F/T})$ with probability at least $0.95$ and thus is a member of $\Psi_{\phi}$ for some $\phi=\tilde{\ordO}(K/\omega)$. \Cref{thm:lower-new} lower bounds the cumulative regret of such algorithms by $\Tilde{\Omega}( \sqrt{K/\phi} \sqrt{KT\log F} )=\Tilde{\Omega}( \sqrt{\omega} \sqrt{KT\log F} )$. Up to logarithmic factors, this matches the cumulative regret upper bound for $\omega$-RAPR. Re-emphasizing that the trade-off observed in Theorems \ref{theorem:Cumulative-Regret-for-RAPR} and \ref{theorem:Simple-Regret-for-RAPR} is near optimal for large $T$.

\textbf{Simulations.} To demonstrate the computational tractability of our approach, we ran a simulation on setting within a  $\mathbb{R}^2$context space, eight arms, linear models, and an exploration horizon of $5000$. Our algorithms ran in less than $9$ seconds on a Macbook M1 Pro. We also compare with other baselines on simple/cumulative regret. See \Cref{sec:simulations} for details.\\
\textbf{Conclusion.} We develop Risk Adjusted Proportional Response (RAPR), a computationally efficient regression-based contextual bandit algorithm. It is the first contextual bandit algorithm capable of trading-off worst-case cumulative regret guarantees with instance-dependent simple regret guarantees. The versatility of our algorithm allows for general reward models, handles misspecification, extends to finite and continuous arm settings, and allows us to choose the trade-off between simple and cumulative regret guarantees. The key ideas underlying RAPR are conformal arm sets (CASs) to quantify uncertainty, proportional response principle for cumulative regret minimization, optimal cover as a surrogate for simple regret, and risk adjustment for better bounds on the optimal cover. \footnote{
S.A. and S.K.K. are grateful for the support provided by Golub Capital Social Impact Lab and the ONR grant N00014-19-1-2468. E.B. is grateful for the support of NSF grant 2112926.}\\
\textbf{Limitations.} A limitation of our approach is that we do not utilize the structure of the policy class being explored. Further refining CASs with other forms of uncertainty quantification that leverage such structure can lead to significant improvements, and potentially avoid trade-offs between simple/cumulative regret when policy class structure allows for it.

\bibliography{ref}
\bibliographystyle{abbrvnat}

\appendix
\newpage

\section{Expanded Notations}
\label{sec:expanded-notation}

We start with expanding our notation from \Cref{sec:scb-setting} to include notation helpful for our proofs and expand to the continuous arm setting. 

\textbf{Measure over arms.} To recap, our algorithm and analysis adapt to both discrete and continuous arm spaces, where we consider a finite measure space $(\A,\Sigma,\mu)$ over the set of arms (i.e.~$\mu(\A)$ is finite) to unify the notation.\footnote{Here $\Sigma$ is a $\sigma$-algebra over $\A$ and $\mu$ is a bounded set function from $\Sigma$ to the real line.} As short hand, we use $K$ in lieu of $\mu(\A)$. We let $\Sigma_1$  be a set of arms in $\Sigma$ with measure one, i.e.~$\Sigma_1:=\{S\in\Sigma| \mu(S)=1 \}$. 

\textbf{Policies.} Let $\Tilde{\Pi}$ denote the universal set of policies. That is, $\Tilde{\Pi}$ is the set of all functions from $\Xscript$ to  $\Sigma_1$. The policy class $\Pi$ is a subset of $\Tilde{\Pi}$. We use $\pi(x)$ to denote the set of arms given $x\in\Xscript$ and use  $p_{\pi}(a|x)=\mathbb{I}(a\in\pi(x))$ to denote the induced probability measure over arms at $x$.\footnote{Note that for any $\pi\in\Tilde{\Pi}$, we have  $\int_{a\in\A} p_{\pi}(a|x) d\mu = \mu(\pi(x)) = 1$ at any $x\in\Xscript$.} With some abuse of notation, we use the notation $\pi(a|x)$ in lieu of $p_{\pi}(a|x)$. Below is the elaboration of our notation to both discrete and continuous arm spaces. 
\begin{itemize}[itemsep=-2pt,leftmargin=10pt,topsep=-1pt]
     \item \emph{Discrete arm space.} We choose $\mu$ to be the count-measure, where $\mu(S)=|S|$ for any $S\subseteq\A$ and $\mu(\A)=K$. In this case, $\Sigma_1$ contains singleton arm sets, and $\Tilde{\Pi}$ denotes  deterministic policies from $\Xscript$ to $\A$ where each policy maps a context to an action.
     \item \emph{Continuous arm space.} We choose $\mu$ to any finite measure, where $\mu(S) =\int_{ S}d\mu(a)$ for any $S\subseteq\A$, and in particular $\mu(\A)=K$. In this case, $\Sigma_1$ contains arm sets that may have an infinite number of arms but with total measure be 1 with respect to $\mu$.
\end{itemize}

\textbf{Space of action selection kernels.} In this paper, we will always define our action selection kernels with respect to the reference measure $\mu$, that is $p(S|x)=\int_{a\in S} p(a|x)d\mu$ for any $S\in\Sigma$ and $x\in\calX$. Based on the notation in \Cref{sec:scb-setting}, for any kernel $p$, we let $\Reg_f(p) = R_f(\pi_f) - R_f(p)$.

Now let $\calP$ denote the set of action selection kernels such that $p(a|x)\leq 1$ for all $(x,a)\in\calX\times \calA$, and in particular, we have the policy class $\Tilde{\Pi}\subset \calP$. We note that all  action selection kernels ($p$) considered in this paper  belong to the set $\calP$, allowing our analysis to rely on the fact that $p(\cdot|\cdot)\leq 1$. 

Note that, $\Reg_f(p)$ is non-negative for any $p\in\calP$. To see this, consider any context $x$. Recall that $\pi_f(x)\in \arg\max_{S\in\Sigma_1} f(x,S)$ for all $x$. Since $p(\cdot|\cdot)\leq 1$ for any $p\in\calP$, we have $\int_{a\in\A}p(a|x)f(x,a) d\mu$ is maximized when $p(a|x)=1$ for all $a\in\pi_f(x)$. That is, $f(x,\pi_f(x))=\max_{p\in\calP} \E_{a\sim p(\cdot|x)}[f^*(x,a)]$. Hence, $\max_{p\in\mathcal{P}}R_{f}(p) = R_{f}(\pi_f)$, so $\Reg_f(p)$ is non-negative for any $p\in\calP$.

\textbf{Connection to smooth regret \cite{zhu2022contextual}.} Recall that we define cumulative regret as $\CReg_T:= \sum_{t=1}^T \Reg_{f^*}(p_t)$, which measures regret w.r.t the benchmark $R_{f^*}(\pi_{f^*}) = \max_{p\in\mathcal{P}}R_{f^*}(p)$. As discussed earlier, \cite{zhu2022contextual} shows that smooth regret bounds are stronger than several other definitions of cumulative regret in the continuous arm setting \cite[e.g.,][]{majzoubi2020efficient}. Hence to show that our bounds are comparable/competitive for the continuous arm setting, we argue  that our definition of cumulative regret ($\CReg_T$) is equivalent to the definition of smooth regret in \cite{zhu2022contextual}. 

Let the loss vectors $l_t$ in \cite{zhu2022contextual} be given by $-r_t$. Let the smoothness parameter $h$ in \cite{zhu2022contextual} be given by $1/K$. And, let the base probability measure in \cite{zhu2022contextual} be given by $\mu/K$. Then, our benchmark ($\max_{p\in\mathcal{P}}R_{f^*}(p)$) is equal to the smooth benchmark ($\E[\text{Smooth}_h(x)]$) considered in \cite{zhu2022contextual}. Hence, our definition of cumulative regret ($\CReg_T$) is equal to smooth regret ($\Reg_{\text{CB},h}(T)$) when the loss, smoothness parameter, and base probability measure are given as above. This shows the equivalence in our definitions. 

Hence our near-optimal cumulative regret bounds (with $\omega=1$) recover several existing results for the stochastic contextual bandit setting up to logarithmic factors using only offline regression oracles. Our algorithm also handles reward model misspecification and does not assume realizability. We also provide instance-dependent simple regret bounds (for larger choices of $\omega$). The parameter $\omega$ allows us to trade-off between simple and cumulative regret bounds.

\textbf{Measure theoretic issues with continuous arms.} To avoid measure-theoretic issues, we require that for all models $f\in\F\cup\{f^*\}$, all contexts $x\in\Xscript$, and all real numbers $z\in\mathbb{R}$, we have the level set of arms $\{a| f(x,a)\leq z \}$ must lie in $\Sigma$. That is the reward models $f\in\F\cup\{f^*\}$ are measurable at every context $x$ with the Lebesgue measure on the range of $f(x,\cdot)$ and the measure $(\A,\Sigma,\mu)$ on the domain of $f(x,\cdot)$. We note that this isn't a strong condition and usually trivially holds.

Moreover, we require an additional condition as follows to simplify our arguments and allow for easy construction of our uncertainty sets (see \Cref{def:confidence_set}). We require that for all models $f\in\F$ and all contexts $x\in\Xscript$, we have $f(x,\pi_f(x))$ is equal to $\max_{a\in\A} f(x,a)$. This condition trivially holds for the finite-arm setting with $\mu$ as a count measure. For the continuous arm setting, this condition follows from requiring $\argmax_{a\in\A} f(x,a)$ lies in $\Sigma$ and has measure of at least one. 

\textbf{Additional notation.} For notational convenience, we let $U_{m}=20\sqrt{\alpha_{m-1}\xi_{m}}$ for any epoch $m$. Note that by construction (\eqref{eq:choose-etam} and $\alpha_m=3K/\eta_m$) $\alpha_m$ is non-increasing in $m$. Further, from the conditions in \Cref{sec:oracle-assumptions}, we have $\xi_{m+1}=2\xi((\tau_m-\tau_{m-1})/3,\delta/(16m^3))$ is non-increasing in $m$. Hence $U_m$ is also non-increasing in $m$. We also let $\alpha_0:=\alpha_1=3K$, and let $\alpha_m:=\alpha_{\msafealg}$ for any epoch $m\geq \msafealg$. Similarly, we let $\eta_0:=\eta_1=1$, and let $\eta_m:=\eta_{\msafealg}$ for any epoch $m\geq \msafealg$. Sometimes, we use use $C_m(x,\beta,\eta)$ in lieu of $C_m(x,\beta/\eta)$.

\begin{table}%
    \centering
    \begin{tabular}{|l|l|}
    \hline
        &Environment distribution\\
    \hline
        $\Xscript,\A$ & set of contexts and set of arms (respectively). \\
        $D$ & joint distribution over contexts and arm rewards.\\
        $D_{\Xscript}$ & marginal distribution over contexts.\\
        $D(p)$ & distribution over $\Xscript\times\A\times[0,1]$ induced by action selection kernel $p$.\\
        $f^*$ & true conditional expected reward, $f^*(x,a):=\E_D[r_t(a)|x_t=x]$.\\
    \hline
        & Measure space over arm sets\\
    \hline
        $(\A,\Sigma,\mu)$ & measure space over the set of arms with $K:=\mu(\A)$. \\
        $\Sigma_1$ & set of measurable arm sets with measure one, $\Sigma_1:=\{S\in\Sigma| \mu(S)=1 \}$. \\
        $\Tilde{\Pi}$ & set of all policies (functions from $\Xscript$ to  $\Sigma_1$).\\
        $\calP$ & set of kernels such that $p(a|x)\leq 1$ for all $(x,a)\in\calX\times \calA$.\\
    \hline
    & Algorithm inputs\\
    \hline
        $\propThreshold$ & proportional response threshold ($\propThreshold=1/2$).\\
        $\omega$ & trade-off parameter ($\omega\in[1,K]$).\\
        $\delta$ & confidence parameter ($\delta\in(0,1)$).\\
        $\F,\Pi$ & give reward model class and policy class.\\
    \hline
        & Policy value and optimal policy\\
    \hline
        $R_f(p)$ & $:=\E_{x \sim D_{\Xscript}}\E_{a\sim p(\cdot|x)}[f(x, a)]$, denotes the value of kernel $p$ under model $f$.\\
        $R(p)$ & $:=R_{f^*}(p)$, denotes the true value of kernel $p$.\\
        $\pi_f$ & $\in \arg\max_{\pi\in\Tilde{\Pi}} R_f(\pi)$, denotes the best universal policy under model $f$.\\
        $\pi^*$ & $\in\arg\max_{\pi\in{\Pi}} R(\pi)$, denotes the best policy in the class $\Pi$.\\
    \hline
        & Regret and cover\\
    \hline
        $\Reg_f(p)$ & $:=R_f(\pi_f)-R_f(p)$, denotes the regret of kernel $p$ under model $f$.\\
        $R(p)$ & $:=R_{f^*}(p)$, denotes the true value of kernel $p$.\\
        $\Reg_{\Pi}(p)$ & $:=R_f(\pi_f)-R_f(p)$, denotes the regret of kernel $p$ under model $f$.\\
        $V(p,q)$ & cover $V(p,q) := \E_{x\sim D_{\Xscript},a\sim q(\cdot|x)}\big[{q(a|x)}/{p(a|x)}\big]$.\\
    \hline
        & Epochs\\
    \hline
        $m$ & epoch index.\\
        $\xi_{m+1}$ & $:=2\xi((\tau_m-\tau_{m-1})/3,\delta/(16m^3))$, where $\xi$ is given in \Cref{sec:oracle-assumptions}. \\
        $\msafe$ & safe epoch, last epoch where $\xi_{m+1}\geq 2B$ (variance dominates bias). \\
        $\msafealg$ & last epoch that starts with $\safe$ set as $\true$. \\
    \hline
        & Algorithmic parameters\\
    \hline
        $\hatf_{m+1}\in\mathcal{F}$ & fitted reward model via regressions on samples collected in epoch $m$.\\
        $C_{m+1}$ & conformal arm set defined in \Cref{def:confidence_set}. \\
        $\eta_{m+1}$ & risk adjustment parameter \eqref{eq:choose-etam}.\\
        $\alpha_{m+1}$ & empirical bound on optimal cover ($\alpha_{m+1}=\frac{3K}{\eta_{m+1}}$).\\
        $p_{m+1}$ & action selection kernel corresponding to epoch $m+1$ defined in \eqref{eq:define_pm_continuous}. \\
        $U_{m+1}$ & $:=20\sqrt{\alpha_m\xi_{m+1}}$.\\
    \hline
    \end{tabular}
    \caption{Table of notations}
    \label{tab:notation}
\end{table}

\section{Bounding Cumulative Regret}
\label{sec:bounding-cumulative-regret}

This section derives the cumulative regret bounds for $\omega$-RAPR. We start with analyzing the output of oracles described in Assumptions \ref{ass:regression-oracle} and \ref{ass:policy-evaluation-oracles}. Note that we do not make the ``realizability" assumption in this work -- i.e., we do not assume that $f^*$ lies in $\F$. Hence, as in \Cref{ass:regression-oracle}, the expected squared error of our estimated models need not go to zero (even with infinite data) and may contain an unknown non-zero irreducible error term ($B$) that captures the bias of the model class $\F$. It is useful to split our analysis into two regimes to handle this unknown term $B$, similar to the approach in \cite{krishnamurthy2021adapting}. In particular, we separately analyze oracle outputs for epochs before and after a so-called ``safe epoch". Where we define the safe epoch $\msafe$ as the epoch where the variance of estimating from the model class $\F$ ($\xi_m$) is dominated by the bias of estimating from the class $\F$ ($B$). That is, $\msafe:=\argmax\{m\geq 1|\xi_{m+1}\geq 2B\}$.

\subsection{High Probability Events}
\label{sec:high-probability-events}

We start with defining high-probability events under which our key theoretical guarantees hold. The first high probability event characterizes the accuracy of estimated reward models and the policy evaluation estimators, the tail bound of which can be obtained by taking a union bound of each epoch-specific event that happens with probability $1-\frac{\delta}{4m^2}$ under assumptions in \Cref{sec:oracle-assumptions}.

\begin{lemma}
\label{lemma:event-pure-exploration}
Suppose \Cref{ass:regression-oracle} and \Cref{ass:policy-evaluation-oracles} hold. The following event holds with probability $1-\delta/2$,
\begin{equation}
    \label{eq:w-event-Pure-Exp}
    \begin{aligned}
      \eventModelOracle := \Bigg\{ &\forall m , \forall \pi\in\Pi\cup\{p_{m+1}\}, \forall f\in \{\hatf_1,\dots,\hatf_{m+1}\}, \\ 
      &\E_{x\sim D_{\Xscript}}\E_{a\sim p_m(\cdot|x)}[ (\hatf_{m+1}(x, a) - f^*(x,a))^2 ] \leq B + \xi_{m+1}/2\\
    & |\hatR_{m+1,f}(\pi)-R_{f}(\pi)| \leq \sqrt{\xi_{m+1}},\\
    & |\hatR_{m+1}(\pi)-R(\pi)| \leq \sqrt{V(p_m,\pi)\xi_{m+1}} + \xi_{m+1}/(\min_{(x,a)\in\Xscript\times\calA}p_m(a|x)). \Big) \Bigg\}.
    \end{aligned}
\end{equation}
Where $\xi_{m+1}=2\xi((\tau_m-\tau_{m-1})/3,\delta/(16m^3))$.\footnote{Our epoch schedules will always be increasing in epoch length. Under such conditions, we have $\xi_m$ is non-increasing in $m$.} %
\end{lemma}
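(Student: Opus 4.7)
The plan is a direct application of Assumptions~\ref{ass:regression-oracle} and~\ref{ass:policy-evaluation-oracles} at each epoch, combined with a union bound over epochs. I would proceed epoch-by-epoch, in the order the oracles are actually invoked by \Cref{alg:RAPR}, establishing the estimation bound and the two evaluation bounds simultaneously at a fixed epoch before summing.

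Fix an epoch $m \ge 1$ and condition on the history through epoch $m-1$, so that the kernel $p_m$ (or $p_{\msafealg}$ in the unsafe branch) is deterministic. The data-splitting step ensures that $S_{m,1}$ and $S_{m,3}$ each contain $n_m := (\tau_m - \tau_{m-1})/3$ samples that are i.i.d.\ from $D(p_m)$ conditional on the past. Setting $\delta'_m := \delta/(16 m^3)$ gives $\xi(n_m, \delta'_m) = \xi_{m+1}/2$ by the definition of $\xi_{m+1}$. I would then apply \Cref{ass:regression-oracle} to the call $\EstOracle(p_m, S_{m,1})$ to recover the first line of $\eventModelOracle$ with probability at least $1 - \delta'_m$, using $B + \xi(n_m,\delta'_m) = B + \xi_{m+1}/2$. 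A single call $\EvalOracle(p_m, S_{m,3}, \{\hatf_1,\dots,\hatf_{m+1}\}, p_{m+1})$ then simultaneously yields the second and third lines for all $\pi \in \Pi \cup \{p_{m+1}\}$ with probability at least $1 - (m+2)\delta'_m$: the identities $\sqrt{2 V(p_m,\pi)\xi(n_m,\delta'_m)} = \sqrt{V(p_m,\pi)\xi_{m+1}}$, $2\xi(n_m,\delta'_m) = \xi_{m+1}$, and $\sqrt{2\xi(n_m,\delta'_m)} = \sqrt{\xi_{m+1}}$ match the three quantities in the event exactly.

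To close the argument I would combine per-epoch failure probabilities. At epoch $m$ the total failure is at most $(m+3)\delta'_m = (m+3)\delta/(16 m^3) \le \delta/(4 m^2)$, using $(m+3)/m \le 4$ for $m \ge 1$. A union bound over all epochs then gives total failure at most $\sum_{m=1}^{\infty} \delta/(4 m^2) = \delta \pi^2/24 < \delta/2$, matching the claim.

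The only real subtlety is making sure the oracle guarantees are applied \emph{conditionally} on the history, so that the history-dependent kernel $p_m$ may be treated as a fixed kernel by the oracle assumptions; this is exactly what the three-way split $S_{m,1}, S_{m,2}, S_{m,3}$ buys. In the unsafe branch ($m > \msafealg$) the estimation oracle is not invoked and the evaluation call uses the frozen $p_{\msafealg}$ with $\msafealg$ models over all $\tau_m - \tau_{m-1}$ samples; since $\xi$ is non-increasing in the sample size, the bounds hold a fortiori with the same $\delta'_m$. Beyond this bookkeeping there is no real difficulty, since all the probabilistic content is already encoded in the oracle assumptions.
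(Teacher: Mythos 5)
Your proposal is correct and follows essentially the same route as the paper, which only sketches this lemma as "a union bound of each epoch-specific event that happens with probability $1-\frac{\delta}{4m^2}$": you instantiate both oracle assumptions at epoch $m$ with $\delta'_m=\delta/(16m^3)$ so that $\xi(n_m,\delta'_m)=\xi_{m+1}/2$ reproduces the three bounds exactly, bound the per-epoch failure by $(m+3)\delta'_m\le \delta/(4m^2)$, and sum to $\delta\pi^2/24<\delta/2$. Your added care about conditioning on the history so that $p_m$ is a fixed kernel for the oracle, and about the unsafe branch where the frozen $p_{\msafealg}$ and the full (unsplit) sample are used, fills in bookkeeping the paper leaves implicit.
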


The second high probability event characterizes the measure of conformal arm sets, which  directly follows from Hoeffding's inequality and union bound. 
\begin{lemma}
\label{lemma:event-risk-estimation}
The following event holds with probability $1-\delta/2$,
\begin{equation}
    \label{eq:event-risk-estimation}
    \begin{aligned}
      \eventRiskEstimation := \Bigg\{ &\forall m, \forall \eta\in \Big\{\frac{|S_{m,2}|}{n}|n \in [|S_{m,2}|] \Big\}, \\ 
      & \bigg| \E_{x\sim D_{\Xscript}}\bigg[\mu\bigg( \bar{C}_{m+1}\bigg(x, \frac{\propThreshold}{\eta}\bigg)\bigg)\bigg] - \frac{1}{|S_{m,2}|}\sum_{t\in S_{m,2}}\mu\bigg(\bar{C}_{m+1}\bigg(x_t,\frac{\propThreshold}{\eta}\bigg)\bigg) \bigg|\\ 
      & \leq \sqrt{\frac{K^2\ln(8|S_{m,2}|m^2/\delta)}{2|S_{m,2}|}} \Bigg\}.
    \end{aligned}
\end{equation}
\end{lemma}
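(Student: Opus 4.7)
\textbf{Proof proposal for \Cref{lemma:event-risk-estimation}.} The plan is a standard concentration argument: apply Hoeffding's inequality to the empirical mean of $\mu(\bar{C}_{m+1}(\cdot,\propThreshold/\eta))$ over $S_{m,2}$, then union bound over the finite grid of candidate $\eta$ values and over epochs $m$. The only subtlety is to verify the i.i.d.\ structure, which requires being careful about what conditioning makes $\bar{C}_{m+1}$ deterministic.

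\textbf{Step 1 (identifying the correct conditioning).} Fix an epoch $m$ and any $\eta \in \{|S_{m,2}|/n : n\in[|S_{m,2}|]\}$. Inspecting \Cref{def:confidence_set}, the set $\bar{C}_{m+1}(x,\propThreshold/\eta)$ depends only on $\hatf_1,\dots,\hatf_{m+1}$ and on $\alpha_0,\dots,\alpha_m$. By \Cref{alg:RAPR}, $\hatf_{m+1}$ is the output of $\EstOracle$ on $S_{m,1}$, and $\alpha_m=3K/\eta_m$ is determined from $S_{m-1,2}$. Thus, conditional on the $\sigma$-algebra $\mathcal{G}_m$ generated by all data collected strictly before the $S_{m,2}$ split, the map $x\mapsto \mu(\bar{C}_{m+1}(x,\propThreshold/\eta))$ is a deterministic bounded function taking values in $[0,K]$. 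The contexts $\{x_t\}_{t\in S_{m,2}}$ are, given $\mathcal{G}_m$, i.i.d.\ draws from $D_{\Xscript}$ (the action rule used during epoch $m$ uses only $p_m$, which is $\mathcal{G}_m$-measurable, so no further dependence is introduced on the marginal $x_t$).

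\textbf{Step 2 (Hoeffding for one $(m,\eta)$ pair).} Conditional on $\mathcal{G}_m$, the random variables $Y_t := \mu(\bar{C}_{m+1}(x_t,\propThreshold/\eta))\in[0,K]$ for $t\in S_{m,2}$ are i.i.d.\ with mean $\E_{x\sim D_{\Xscript}}[\mu(\bar{C}_{m+1}(x,\propThreshold/\eta))]$. By Hoeffding's inequality, for any $\delta'\in(0,1)$,
\begin{equation*}
\Pr\Bigg( \bigg|\tfrac{1}{|S_{m,2}|}\sum_{t\in S_{m,2}} Y_t - \E[Y_t]\bigg| > \sqrt{\tfrac{K^2\ln(2/\delta')}{2|S_{m,2}|}} \,\Big|\, \mathcal{G}_m \Bigg)\leq \delta'.
\end{equation*}

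\textbf{Step 3 (union bounds and tower property).} Set $\delta' = \delta/(4|S_{m,2}|m^2)$, then take a union bound over the $|S_{m,2}|$ candidate values of $\eta$ in the grid: the per-epoch failure probability is at most $|S_{m,2}|\cdot\delta/(4|S_{m,2}|m^2) = \delta/(4m^2)$. Plugging in yields the stated radius $\sqrt{K^2\ln(8|S_{m,2}|m^2/\delta)/(2|S_{m,2}|)}$. Taking the tower property removes the conditioning on $\mathcal{G}_m$, and summing over epochs gives total failure probability $\sum_{m\geq 1}\delta/(4m^2)\leq \delta\pi^2/24 < \delta/2$, completing the argument.

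\textbf{Main obstacle.} The computations are routine; the only thing that needs care is the dependence structure, namely showing that, when restricted to $\eta$ values on the pre-specified grid, $\bar{C}_{m+1}(\cdot,\propThreshold/\eta)$ is determined before $S_{m,2}$ is observed. This holds because $\bar{C}_{m+1}$ does not involve $\eta_{m+1}$ or $\alpha_{m+1}$ (which depend on $S_{m,2}$), only on $\alpha_0,\dots,\alpha_m$ and $\hatf_1,\dots,\hatf_{m+1}$. If the grid of $\eta$ values were data-dependent in a way that mixed in $S_{m,2}$, one would need a covering argument; here the grid is fixed given $|S_{m,2}|$, so a simple union bound suffices.
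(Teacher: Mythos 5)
Your proposal is correct and matches the paper's approach: the paper proves this lemma by exactly the Hoeffding-plus-union-bound argument you give (over the $|S_{m,2}|$ grid values of $\eta$ and over epochs), and your constants check out ($\delta'=\delta/(4|S_{m,2}|m^2)$ yields the stated radius and a total failure probability of $\delta\pi^2/24<\delta/2$). Your Step 1 measurability discussion, verifying that $\bar{C}_{m+1}$ depends only on $\hatf_1,\dots,\hatf_{m+1}$ and $\alpha_0,\dots,\alpha_m$ and is hence determined before $S_{m,2}$ is revealed, is a detail the paper leaves implicit but is exactly the right thing to check.
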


Together both $\eventModelOracle$ and $\eventRiskEstimation$ hold with probability $1-\delta$. The rest of our analysis works under these events.

\subsection{Analyzing the Cover} \label{sec:analyzing-the-cover}

In this sub-section, we upper bound the cover ($V(p_m,\cdot)$) for the action selection kernel used in epoch $m$.\footnote{Recall that $V(p,q):=\E_{x\sim D_{\Xscript}, a\sim q(\cdot|x)}[q(a|x)/p(a|x)]$.} To upper bound $V(p_m,q)$, we first lower bound $p_m(\cdot|\cdot)$. Recall that in \Cref{sec:expanded-notation}, we define $U_{m}=20\sqrt{\alpha_{m-1}\xi_{m}}$ for any epoch $m$. Starting from here, our lemmas and proofs will use $U_{m}$ and $20\sqrt{\alpha_{m-1}\xi_{m}}$ interchangeably.

\begin{lemma}
\label{lemma:lower_bound_pm}
    For any epoch $m$, we have \eqref{eq:pm_lower_bound} holds.
    \begin{equation}
        \label{eq:pm_lower_bound}
        \begin{aligned}
        p_m(a|x)&\geq \begin{cases}
			\frac{1-\propThreshold}{\mu(C_m(x, \propThreshold, \eta_m))} + \frac{\propThreshold}{\mu(\A)}, & \text{if $a\in C_m(x, \propThreshold, \eta_m)$}\\
            \frac{\eta_{m}}{\mu(\calA)} \min_{\barm\in[m]}  \frac{2\barm^2U_{\barm}}{\hatf_{\barm}(x,\pi_{\hatf_{\barm}}(x))-\hatf_{\barm}(x,a)}, & \text{if $a\notin C_m(x, \propThreshold, \eta_m)$}
		 \end{cases}\\
            &\geq \begin{cases}
			\frac{1}{\mu(\A)}, & \text{if $a\in C_m(x, \propThreshold, \eta_m)$}\\
            \frac{\eta_{m}\min_{\barm\in[m]}  2\barm^2U_{\barm}}{\mu(\calA)}, & \text{if $a\notin C_m(x, \propThreshold, \eta_m)$}
		 \end{cases}
        \end{aligned}
    \end{equation}
\end{lemma}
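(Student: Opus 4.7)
The plan is to split on whether $a$ lies in the ``safe'' conformal arm set $C_m(x,\propThreshold/\eta_m)$ and exploit the monotonicity of CASs in the risk parameter $\zeta$. Specifically, inspecting \Cref{def:confidence_set}, for each $\barm$ the set $\Tilde C_{\barm}(x,\zeta')$ is defined by the inequality $\hatf_{\barm}(x,\pi_{\hatf_{\barm}}(x)) - \hatf_{\barm}(x,a) \leq U_{\barm}/\zeta'$, so $\zeta' \mapsto \Tilde C_{\barm}(x,\zeta')$ is non-increasing (smaller $\zeta'$ widens the threshold). Consequently $\bar C_m(x,\zeta)$, and hence $C_m(x,\zeta) = \pi_{\hatf_m}(x) \cup \bar C_m(x,\zeta)$, are non-increasing in $\zeta$. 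This monotonicity is the only structural fact I will need; the rest is bookkeeping against the definition \eqref{eq:define_pm_continuous} of $p_m$.

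\textbf{Case 1: $a\in C_m(x,\propThreshold/\eta_m)$.} The first term in \eqref{eq:define_pm_continuous} directly contributes $(1-\propThreshold)/\mu(C_m(x,\propThreshold/\eta_m))$. For the integral term, I would argue that for every $\beta\in(0,\propThreshold]$ the monotonicity gives $C_m(x,\beta/\eta_m) \supseteq C_m(x,\propThreshold/\eta_m) \ni a$, so the indicator equals $1$ on all of $(0,\propThreshold]$. Using $\mu(C_m(x,\beta/\eta_m)) \leq \mu(\A)$ then bounds the integrand below by $1/\mu(\A)$, and the integral by $\propThreshold/\mu(\A)$. Summing gives the claimed first branch of \eqref{eq:pm_lower_bound}.

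\textbf{Case 2: $a\notin C_m(x,\propThreshold/\eta_m)$.} Since $\pi_{\hatf_m}(x)\subseteq C_m(x,\zeta)$ for every $\zeta$, this case in particular implies $a\notin\pi_{\hatf_m}(x)$, so membership in $C_m$ reduces to membership in $\bar C_m = \bigcap_\barm \Tilde C_\barm(\cdot,\zeta/(2\barm^2))$. Solving $a\in\Tilde C_\barm(x,\beta/(2\barm^2\eta_m))$ for $\beta$ yields $\beta \leq 2\barm^2 \eta_m U_\barm / (\hatf_\barm(x,\pi_{\hatf_\barm}(x))-\hatf_\barm(x,a))$, so setting
\[
\beta^\star \;:=\; \eta_m\,\min_{\barm\in[m]}\frac{2\barm^2 U_\barm}{\hatf_\barm(x,\pi_{\hatf_\barm}(x))-\hatf_\barm(x,a)},
\]
we have $a\in C_m(x,\beta/\eta_m)$ iff $\beta\le\beta^\star$, and the hypothesis of Case 2 forces $\beta^\star\le\propThreshold$. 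The first summand of $p_m$ vanishes, and the integral restricts to $(0,\beta^\star]$ where the integrand is at least $1/\mu(\A)$; this gives $p_m(a|x)\ge\beta^\star/\mu(\A)$, matching the second branch of \eqref{eq:pm_lower_bound}.

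\textbf{The simpler second bound.} For Case 1, I bound $\mu(C_m(x,\propThreshold,\eta_m))\le\mu(\A)$ so the first term is at least $(1-\propThreshold)/\mu(\A)$, and combining with $\propThreshold/\mu(\A)$ gives the $1/\mu(\A)$ bound. For Case 2, since $\hatf$ takes values in $[0,1]$ the denominator $\hatf_\barm(x,\pi_{\hatf_\barm}(x))-\hatf_\barm(x,a)$ is at most $1$, so dropping it yields the stated simpler lower bound. No step is really an obstacle: the only thing to be careful with is keeping the $\beta$-to-$\zeta$ reparameterization and the $2\barm^2$ factors from the union-bound decomposition of $\bar C_m$ straight, and verifying that $a\notin\pi_{\hatf_m}(x)$ in Case 2 so the first summand of $p_m$ genuinely vanishes.
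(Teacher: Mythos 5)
Your proposal is correct and follows essentially the same route as the paper's proof: the same case split on membership in $C_m(x,\propThreshold,\eta_m)$, the same monotonicity-in-$\zeta$ argument for Case~1, and the same computation of the set of $\beta$ for which the indicator survives in Case~2 (the paper extends the integral to $[0,1]$ and integrates the product of indicators, which is just another way of writing your $\beta\le\beta^\star$ characterization). Your observation that $a\notin\pi_{\hatf_m}(x)$ in Case~2 is exactly the footnoted point the paper relies on, so there is no gap.
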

\begin{proof}
Recall that $p_{m}$ given by \eqref{eq:define_pm_continuous_2}.
\begin{equation}
\label{eq:define_pm_continuous_2}
    p_{m}(a|x) = \frac{(1-\beta_{\max})I[a\in C_{m}(x,\propThreshold,\eta_{m})]}{\mu\big(C_{m}(x,\beta_{\max},\eta_{m})\big)} + \int_0^{\propThreshold}\frac{I[a\in C_{m}(x,\beta,\eta_{m}) ]}{\mu\big(C_{m}(x,\beta,\eta_{m})\big)}\mbox{d}\beta.
\end{equation}
We divide our analysis into two cases based on whether $a$ lies in $C_m(x, \propThreshold, \eta_m)$, and lower bound $p_m(a|x)$ in each case.

\textbf{Case 1 ($a\in C_m(x, \propThreshold, \eta_m)$).} Note that $C_m(x, \propThreshold, \eta_m)\subseteq C_m(x, \beta, \eta_m) \subseteq \A$ for all $\beta\in[0,\propThreshold]$. Hence, $a\in C_m(x, \beta, \eta_m)$ and $\mu(C_m(x, \beta, \eta_m))\leq \mu(\A)$ for all $\beta\in[0,\propThreshold]$. Therefore, in this case, $p_m(a|x)\geq \frac{(1-\beta_{\max})}{\mu\big(C_{m}(x,\beta_{\max},\eta_{m})\big)}+\frac{\propThreshold}{\mu(\A)}\geq \frac{1}{\mu(\A)}$.

\textbf{Case 2 ($a\notin C_m(x, \propThreshold, \eta_m)$).} For this case, the proof follows from \eqref{eq:lower_bound_term2_in_pm}.

\begin{equation}
\label{eq:lower_bound_term2_in_pm}
    \begin{aligned}
        p_m(a|x) \geq & \int_0^{\propThreshold}\frac{I[a\in C_{m}(x,\beta,\eta_{m}) ]}{\mu\big(C_{m}(x,\beta,\eta_{m})\big)}\mbox{d}\beta\\
       \stackrel{(i)}{\geq}  & \frac{1}{\mu(\calA)} \int_0^{\propThreshold}I[a\in C_{m}(x,\beta,\eta_{m}) ]\mbox{d}\beta\\
       \stackrel{(ii)}{\geq}  & \frac{I(a\notin C_{m}(x,\propThreshold,\eta_{m}))}{\mu(\calA)} \int_0^{\propThreshold}I[a\in C_{m}(x,\beta,\eta_{m}) ]\mbox{d}\beta\\
       \stackrel{(iii)}{=}  & \frac{I(a\notin C_{m}(x,\propThreshold,\eta_{m}))}{\mu(\calA)} \int_0^{1}I[a\in C_{m}(x,\beta,\eta_{m}) ]\mbox{d}\beta\\
      \stackrel{(iv)}{=} & \frac{I(a\notin C_{m}(x,\propThreshold,\eta_{m}))}{\mu(\calA)} \int_0^{1} \prod_{\barm\in[m]} I\big[\hatf_{\barm}(x,\pi_{\hatf_{\barm}}(x))-\hatf_{\barm}(x,a)\leq \frac{2\barm^2\eta_{m}U_{\barm}}{\beta}\big]\mbox{d}\beta\\
      = & \frac{I(a\notin C_{m}(x,\propThreshold,\eta_{m}))}{\mu(\calA)} \int_0^{1} \prod_{\barm\in[m]} I\big[\beta\leq \frac{2\barm^2\eta_{m}U_{\barm}}{\hatf_{\barm}(x,\pi_{\hatf_{\barm}}(x))-\hatf_{\barm}(x,a)}\big]\mbox{d}\beta\\
      = & \frac{\eta_{m}I(a\notin C_{m}(x,\propThreshold,\eta_{m}))}{\mu(\calA)} \min_{\barm\in[m]}  \frac{2\barm^2U_{\barm}}{\hatf_{\barm}(x,\pi_{\hatf_{\barm}}(x))-\hatf_{\barm}(x,a)}\\
      \stackrel{(v)}{\geq} & \frac{\eta_{m}I(a\notin C_{m}(x,\propThreshold,\eta_{m}))}{\mu(\calA)} \min_{\barm\in[m]} 2\barm^2U_{\barm}
    \end{aligned}
\end{equation}
where (i) is because the measure of the conformal set $C_m$ can be no larger than the measure of the action space $\calA$; (ii) follows from $I(a\notin C_{m}(x,\propThreshold,\eta_{m}))\leq 1$; (iii) follows from the fact that if $a\notin C_{m}(x,\propThreshold,\eta_{m})$ then $a\notin C_{m}(x,\beta,\eta_{m})$ for all $\beta\geq \propThreshold$; (iv) follows from \Cref{def:confidence_set}; and (v) follows from .\footnote{Note that if $a\in\pi_{\hatf_m}(x)$ then $I(a\notin C_{m}(x,\propThreshold,\eta_{m})) = 0$.}
\end{proof}

Using \Cref{lemma:lower_bound_pm}, we get an upper bound on $V(p_m,q)$ in terms of $\E[\mu(C_m(x, \propThreshold, \eta_m))]$, $K/\eta_m$, and expected regret with respect to the models $\hatf_1,\dots,\hatf_m$. 

\begin{restatable}{lemma}{lemBoundVCont}
\label{lemma:bound_v_continuous}
For any epoch $m$ and any action selection kernel $q\in\calP$, we have \eqref{eq:v_upper_bound_1} holds.
\begin{equation}
    \label{eq:v_upper_bound_1}
    V(p_m, q)\leq \frac{\E[\mu(C_m(x, \propThreshold, \eta_m))]}{1-\propThreshold} + \frac{K}{\eta_m}\sum_{\barm\in[m]}\frac{\Reg_{\hatf_{\barm}}(q)}{2\barm^2U_{\barm}}.
\end{equation}
\end{restatable}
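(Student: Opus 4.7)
The plan is to directly apply the lower bound on $p_m(a|x)$ from \Cref{lemma:lower_bound_pm} by partitioning the integral defining $V(p_m,q)$ according to whether the arm $a$ lies in $C_m(x, \propThreshold, \eta_m)$ or not. Since $q \in \calP$ implies $q(a|x) \leq 1$, we have $q(a|x)^2 \leq q(a|x)$, which will let us pull a factor of $q(a|x)$ out in each region and use the fact that $\int_a q(a|x)\, d\mu(a) = 1$.

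First I would write
\[
V(p_m, q) = \E_{x\sim D_{\Xscript}}\!\left[ \int_{C_m(x, \propThreshold, \eta_m)} \frac{q(a|x)^2}{p_m(a|x)}\, d\mu(a) \;+\; \int_{\A \setminus C_m(x, \propThreshold, \eta_m)} \frac{q(a|x)^2}{p_m(a|x)}\, d\mu(a) \right].
\]
For the first piece, I would apply the ``in-set'' bound from \Cref{lemma:lower_bound_pm}, namely $p_m(a|x) \geq (1-\propThreshold)/\mu(C_m(x,\propThreshold,\eta_m))$, and combine with $q(a|x)^2 \leq q(a|x)$ to get
\[
\int_{C_m(x, \propThreshold, \eta_m)} \frac{q(a|x)^2}{p_m(a|x)}\, d\mu(a) \;\leq\; \frac{\mu(C_m(x, \propThreshold, \eta_m))}{1-\propThreshold} \int_{a\in\A} q(a|x)\, d\mu(a) \;=\; \frac{\mu(C_m(x,\propThreshold,\eta_m))}{1-\propThreshold}.
\]
Taking expectation over $x$ contributes the first term of the claimed bound.

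For the second piece, the ``out-of-set'' lower bound from \Cref{lemma:lower_bound_pm} gives
\[
\frac{1}{p_m(a|x)} \;\leq\; \frac{\mu(\A)}{\eta_m}\, \max_{\barm\in[m]} \frac{\hatf_{\barm}(x,\pi_{\hatf_{\barm}}(x)) - \hatf_{\barm}(x,a)}{2\barm^2 U_{\barm}}.
\]
Since every term in the max is nonnegative, I would bound the max by the sum over $\barm$, and once again use $q(a|x)^2 \leq q(a|x)$ to obtain
\[
\int_{\A\setminus C_m} \frac{q(a|x)^2}{p_m(a|x)}\, d\mu(a) \;\leq\; \frac{K}{\eta_m} \sum_{\barm\in[m]} \frac{1}{2\barm^2 U_{\barm}} \int_{a\in\A} q(a|x)\bigl(\hatf_{\barm}(x,\pi_{\hatf_{\barm}}(x)) - \hatf_{\barm}(x,a)\bigr)\, d\mu(a),
\]
where I extended the inner integration to all of $\A$ using nonnegativity of the integrand. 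Taking expectation over $x$ and recognizing that
\[
\E_{x\sim D_{\Xscript}}\E_{a\sim q(\cdot|x)}\bigl[\hatf_{\barm}(x,\pi_{\hatf_{\barm}}(x)) - \hatf_{\barm}(x,a)\bigr] \;=\; R_{\hatf_{\barm}}(\pi_{\hatf_{\barm}}) - R_{\hatf_{\barm}}(q) \;=\; \Reg_{\hatf_{\barm}}(q)
\]
yields the second term of the target bound. Summing the two pieces completes the proof.

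The main obstacle is essentially bookkeeping: correctly extracting $q(a|x) \leq 1$ so the inner integral telescopes against $\int q(a|x)\,d\mu(a) = 1$, handling the measure-theoretic technicality that $q \in \calP$ is integrated against $\mu$ (already set up in \Cref{sec:expanded-notation}), and replacing the max over $\barm$ by a sum to convert a pointwise bound into a sum of expected regrets. Nothing delicate beyond these routine steps is required, as \Cref{lemma:lower_bound_pm} has already absorbed the structural details of the kernel $p_m$.
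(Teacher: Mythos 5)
Your proposal is correct and follows essentially the same route as the paper's proof: both bound $q(a|x)/p_m(a|x)$ by $1/p_m(a|x)$ using $q\in\calP$, split on membership in $C_m(x,\propThreshold,\eta_m)$, apply the two lower bounds from \Cref{lemma:lower_bound_pm}, and convert the max over $\barm$ into a sum whose expectation is $\Reg_{\hatf_{\barm}}(q)$. The only difference is notational (explicit integrals versus expectation notation), so there is nothing substantive to flag.
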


\begin{proof}
From \Cref{lemma:lower_bound_pm}, we have \eqref{eq:pm_lowerbounds} holds. \footnote{Note that we require $f(x,\pi_{f}(x))\geq f(x,a)$, for all $x\in\Xscript$, $a\in\A$, and $f\in\F$. }
\begin{equation}
\label{eq:pm_lowerbounds}
    \begin{aligned}
        & \frac{I(a\in C_m(x, \propThreshold, \eta_m))}{p_m(a|x)}\leq  \frac{\mu\big(C_{m}(x,\propThreshold,\eta_{m})\big)}{1-\propThreshold},\\
        \mbox{and}\quad & \frac{I(a\notin C_m(x, \propThreshold, \eta_m))}{p_m(a|x)}\leq \frac{\mu(\calA)}{\eta_m} \max_{\barm\in[m]}  \frac{\hatf_{\barm}(x,\pi_{\hatf_{\barm}}(x))-\hatf_{\barm}(x,a)}{2\barm^2U_{\barm}}.
    \end{aligned}
\end{equation}
We now bound the cover $V(p_m,q)$ as follows,
\begin{equation}
\begin{aligned}
    &V(p_m,q)=\E_{x\sim D_{\Xscript}, a\sim q(\cdot|x)}\bigg[\frac{q(a|x)}{p_m(a|x)}\bigg]\\ %
    \stackrel{(i)}{\leq} & \E_{x\sim D_{\Xscript}, a\sim q(\cdot|x)}\bigg[\frac{1}{p_m(a|x)}\bigg]\\
    = & \E_{x\sim D_{\Xscript}, a\sim q(\cdot|x)}\bigg[\frac{I[a\in C_m(x,\propThreshold, \eta_m)] + I[a\notin C_m(x,\propThreshold, \eta_m)]}{p_m(a|x)}\bigg]\\
    \stackrel{(ii)}{\leq} & \E_{x\sim D_{\Xscript}, a\sim q(\cdot|x)}\bigg[\frac{\mu\big(C_{m}(x,\propThreshold,\eta_{m})\big)}{1-\propThreshold}\bigg] + \E_{x\sim D_{\Xscript}, a\sim q(\cdot|x)}\bigg[\frac{\mu(\calA)}{\eta_m} \max_{\barm\in[m]}  \frac{\hatf_{\barm}(x,\pi_{\hatf_{\barm}}(x))-\hatf_{\barm}(x,a)}{2\barm^2U_{\barm}}\bigg]\\
    \leq  & \frac{\E[\mu\big(C_m(x, \propThreshold, \eta_m)\big)]}{1-\propThreshold} + \frac{\mu(\calA)}{\eta_m} \sum_{\barm\in[m]}  \frac{\Reg_{\hatf_{\barm}}(q)}{2\barm^2U_{\barm}}. 
\end{aligned}
\end{equation}
Here (i) follows from the fact that $q\in\calP$ and (ii) follows from \eqref{eq:pm_lowerbounds}.
\end{proof}

Having bounded the cover for the kernel $p_m$ in terms of $\E[\mu(C_m(x, \propThreshold, \eta_m))]$ and $K/\eta_m$. We now bound these terms with $\alpha_m$. 
\begin{restatable}{lemma}{lemConditionAlpham}
\label{lemma:proving-condition-alpham}
Suppose $\eventRiskEstimation$ holds. Then for any epoch $m$, we have \eqref{eq:condition_alpham} holds.
\begin{equation}
\label{eq:condition_alpham}
\begin{aligned}
   \frac{\E[\mu( C_{m}(x, \propThreshold, \eta_{m}))]}{1-\propThreshold} + \frac{K}{\eta_{m}}\leq \alpha_{m} \leq \frac{3K}{\eta_{m}}.
\end{aligned}
\end{equation}
\end{restatable}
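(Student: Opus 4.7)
The lemma contains two inequalities. The upper bound $\alpha_m \leq 3K/\eta_m$ is immediate from the algorithm's assignment $\alpha_m := 3K/\eta_m$. For the lower bound, substituting $\propThreshold = 1/2$ and $\alpha_m = 3K/\eta_m$ reduces the claim to
$$\E_{x\sim D_{\Xscript}}[\mu(C_m(x, \propThreshold, \eta_m))] \leq K/\eta_m,$$
which I will prove by induction on $m$ under the event $\eventRiskEstimation$ of \Cref{lemma:event-risk-estimation}. The base case $m=1$ is immediate: because $\hatf_1\equiv 0$, the defining inequality in \eqref{eq:conformal-arm-set} is satisfied by every arm, so $C_1(x,\zeta)=\A$ and therefore $\mu(C_1)=K=K/\eta_1$ (recall $\eta_1=1$).

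For the inductive step, the key preliminary fact is that on $\eventRiskEstimation$, for every $\eta$ in the grid $\{|S_{m,2}|/n : n\in[|S_{m,2}|]\}$,
$$\E[\mu(C_{m+1}(x, \propThreshold, \eta))] \leq \lambda_{m+1}(\eta).$$
This follows from the pointwise bound $\mu(C_{m+1}(x,\propThreshold,\eta))\leq \min(1+\mu(\bar C_{m+1}(x,\propThreshold/\eta)),K)$, which uses $C_{m+1}=\pi_{\hatf_{m+1}}\cup \bar C_{m+1}$ with $\mu(\pi_{\hatf_{m+1}})=1$ together with $C_{m+1}\subseteq\A$, combined with the concentration statement in $\eventRiskEstimation$ applied to the empirical average of $\mu(\bar C_{m+1})$ over $S_{m,2}$; taking minima exactly reproduces the definition of $\lambda_{m+1}$ in \eqref{eq:choose-etam}.

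The induction then splits according to how $\eta_{m+1}$ is selected. If $\eta_{m+1}$ is attained by the inner maximum in \eqref{eq:choose-etam}, the feasibility constraint $\lambda_{m+1}(\eta_{m+1})\leq K/\eta_{m+1}$ combined with the preliminary fact immediately gives $\E[\mu(C_{m+1}(x,\propThreshold,\eta_{m+1}))]\leq K/\eta_{m+1}$, finishing the step. If the outer maximum binds so that $\eta_{m+1}=\eta_m$, I will instead use the monotonicity $\bar C_{m+1}(x,\zeta)\subseteq \bar C_m(x,\zeta)$ for all $\zeta$ (the intersection in \eqref{eq:conformal-arm-set} ranges over one extra index $\barm=m+1$ at epoch $m+1$), combined with the inductive hypothesis applied to $\bar C_m$.

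The main obstacle is the outer-max case: passing from $\mu(\bar C_{m+1})\leq \mu(\bar C_m)$ to a bound on $\mu(C_{m+1})\leq 1+\mu(\bar C_m)$ loses an additive $1$ relative to the inductive bound on $\mu(C_m)$. I plan to absorb this slack by carrying a strengthened inductive hypothesis of the form $\E[\mu(\bar C_m(x,\propThreshold/\eta_m))]\leq (K/\eta_m)-1$ whenever $\eta_m>1$, and propagating it using the observation that in the inner-max case with $\eta_{m+1}>1$ one has $\lambda_{m+1}(\eta_{m+1})\leq K/\eta_{m+1}<K$, so the clipping at $K$ inside $\lambda_{m+1}$ is inactive; consequently $\lambda_{m+1}(\eta_{m+1})$ equals $1+(\text{empirical average})+\sqrt{\cdot}$, and the concentration inequality from $\eventRiskEstimation$ transfers the bound from the empirical average to $\E[\mu(\bar C_{m+1})]$. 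The degenerate case $\eta_{m+1}=1$ is handled by the trivial $\mu(C_{m+1})\leq K=K/\eta_{m+1}$, which already delivers the (unstrengthened) bound needed to close the induction.
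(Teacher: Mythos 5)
Your proposal is correct, and it rests on exactly the same ingredients as the paper's proof: the reduction (via $\propThreshold=1/2$ and $\alpha_m=3K/\eta_m$) to showing $\E[\mu(C_m(x,\propThreshold,\eta_m))]\le K/\eta_m$, the monotonicity $\bar{C}_{m}\subseteq\bar{C}_{\barm}$ for $\barm\le m$, and the observation that whenever $\eta$ strictly increases it must have been produced by the inner constrained maximization in \eqref{eq:choose-etam}, so that $\lambda(\eta)\le K/\eta$ held at that epoch and $\eventRiskEstimation$ transfers the bound to the population quantity. The difference is purely organizational: the paper does not induct. It handles $\eta_m=1$ trivially, and otherwise lets $m'$ be the \emph{first} epoch with $\eta_{m'}=\eta_m$ and bounds $\E[\mu(C_m)]\le\min\{1+\E[\mu(\bar{C}_{m'})],K\}\le\lambda_{m'}(\eta_m)\le K/\eta_m$ in a single chain; the additive $1$ contributed by $\pi_{\hatf_m}(x)$ is matched directly against the $1$ built into the definition of $\lambda_{m'}$, so the ``$+1$ slack'' you work to absorb never arises. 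Your epoch-by-epoch induction with the strengthened hypothesis $\E[\mu(\bar{C}_m(x,\propThreshold/\eta_m))]\le K/\eta_m-1$ for $\eta_m>1$ does close correctly --- in particular your point that the clip at $K$ inside $\lambda_{m+1}$ must be inactive when the constraint $\lambda_{m+1}(\eta_{m+1})\le K/\eta_{m+1}<K$ binds is the right way to extract the strengthened bound in the inner-max case --- but it is a more roundabout route to the same conclusion.
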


\begin{proof}
Since $\mu( C_{m}(x, \propThreshold, \eta_{m}))\leq K$ and $\propThreshold = 1/2$, the bound trivially holds if $\eta_m=1$.  Suppose $\eta_m>1$. Note that $\eta_{\barm}$ is non-decreasing in $\barm$ by construction. Let $m'$ be the smallest epoch index such that $\eta_{m'}=\eta_m$. We now have the following holds.
\begin{equation}
\label{eq:implication-of-eventRE-3}
    \begin{aligned}
        &\frac{\E[\mu( C_{m}(x, \propThreshold, \eta_{m}))]}{1-\propThreshold} + \frac{K}{\eta_{m}}\\ 
        &\stackrel{(i)}{\leq} \frac{\min\{1+\E[\mu( \bar{C}_{m}(x, \propThreshold, \eta_{m}))],K\}}{1-\propThreshold} + \frac{K}{\eta_{m}} \\
        &\stackrel{(ii)}{\leq} \frac{\min\{1+\E[\mu( \bar{C}_{m'}(x, \propThreshold, \eta_{m}))],K\}}{1-\propThreshold} + \frac{K}{\eta_{m}} \\
        &\stackrel{(iii)}{\leq} \frac{\lambda_{m'}(\eta_m)}{1-\propThreshold}+\frac{K}{\eta_{m}}\\
        &\stackrel{(iv)}{\leq} 2\lambda_{m'}(\eta_m)+\frac{K}{\eta_{m}}\\
        &\stackrel{(v)}{\leq} \frac{3K}{\eta_{m}} 
    \end{aligned}
\end{equation}
Here (i) follows from the definition of CASs. (ii) follows from $\bar{C}_{m}\subseteq\bar{C}_{m'}$ which follows from the fact that $\bar{C}_m=\cap_{\barm\in[m]}\tilde{C}_{\barm}$ and $m'\leq m$. (iii) follows from $\eventRiskEstimation$ and the definition of $\lambda_{m'}$ in \eqref{eq:choose-etam}. (iv) follows from the fact that $\propThreshold= 1/2$. (v) follows from \eqref{eq:choose-etam} and $\eta_{m'-1}<\eta_{m'}=\eta_m$ -- note that $\eta_{m'-1}\neq \eta_{m'}$ gives us that $\eta_{m'}$ was set using the constrained maximization procedure in \eqref{eq:choose-etam}, hence the constraint $\lambda_{m'}(\eta)\leq K/\eta$ is satisfied at $\eta=\eta_{m'}=\eta_m$. 
\end{proof}

\subsection{Evaluation Guarantees Under Safe Epoch}\label{sec:evaluation-guarantees-under-safe-epoch}

This sub-section provides guarantees on how accurate $R_{\hatf_{m+1}}$ is at evaluating policies when we are within the safe epoch. 

\begin{restatable}{lemma}{lemBoundSafeError}
\label{lemma:bound_reward_diff_continuous}
Suppose $\eventModelOracle$ holds. For all epochs $m\in [\msafe]$, for any $q\in \calP$, we have,
\begin{equation}
    |R_{\hat{f}_{m+1}}(q)-R(q)|\leq \sqrt{V(p_m, q)\xi_{m+1}}.
\end{equation}
\end{restatable}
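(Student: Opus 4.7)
The plan is to rewrite $R_{\hat{f}_{m+1}}(q) - R(q)$ as a single expectation of the pointwise error $\hat{f}_{m+1}(x,a) - f^*(x,a)$ under $q$, perform a change of measure to $p_m$ via the importance weight $q(a|x)/p_m(a|x)$, and then apply Cauchy--Schwarz so that one factor is exactly the cover $V(p_m,q)$ and the other is the mean squared error of $\hat{f}_{m+1}$ under $p_m$, which $\eventModelOracle$ controls.

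More concretely, by definition,
\begin{equation*}
R_{\hat{f}_{m+1}}(q) - R(q) = \E_{x\sim D_{\Xscript}}\E_{a\sim q(\cdot|x)}\big[\hat{f}_{m+1}(x,a) - f^*(x,a)\big].
\end{equation*}
Since \Cref{lemma:lower_bound_pm} guarantees $p_m(a|x) > 0$ for all $(x,a)$, I can rewrite the inner integral against $q$ as an integral against $p_m$ weighted by $q(a|x)/p_m(a|x)$. Then I factor the integrand as
\begin{equation*}
\frac{q(a|x)}{p_m(a|x)} \big(\hat{f}_{m+1}(x,a) - f^*(x,a)\big) = \sqrt{\tfrac{q(a|x)^2}{p_m(a|x)^2}} \cdot \big(\hat{f}_{m+1}(x,a) - f^*(x,a)\big),
\end{equation*}
and apply Cauchy--Schwarz to the joint $(x,a)$-integral with respect to the measure $D_{\Xscript}(dx)\, p_m(a|x)\, d\mu(a)$. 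This yields
\begin{equation*}
\big(R_{\hat{f}_{m+1}}(q) - R(q)\big)^2 \;\leq\; \underbrace{\E_{x\sim D_{\Xscript}}\E_{a\sim p_m(\cdot|x)}\!\left[\tfrac{q(a|x)^2}{p_m(a|x)^2}\right]}_{=\,V(p_m,q)} \cdot \E_{x\sim D_{\Xscript}}\E_{a\sim p_m(\cdot|x)}\big[(\hat{f}_{m+1}(x,a) - f^*(x,a))^2\big].
\end{equation*}
The first factor matches the definition of $V(p_m,q)$ after collapsing one power of $p_m$ with the sampling measure, giving $\E_{x,a\sim q}[q(a|x)/p_m(a|x)]$.

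To finish, I invoke $\eventModelOracle$ (\Cref{lemma:event-pure-exploration}), which bounds the second factor by $B + \xi_{m+1}/2$. Since $m \in [\msafe]$, the definition $\msafe = \arg\max\{m \geq 1 : \xi_{m+1} \geq 2B\}$ gives $B \leq \xi_{m+1}/2$, so the second factor is at most $\xi_{m+1}$. Taking square roots yields the claim.

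The main obstacle is essentially bookkeeping: verifying that the Cauchy--Schwarz factorization produces exactly $V(p_m,q)$ (rather than some related quantity like $\E_{x,a\sim p_m}[q^2/p_m^2]$ which requires rewriting one factor of $p_m(a|x)$ back into the sampling measure) and confirming that $q \in \calP$ together with the positivity of $p_m$ makes all the importance-weighted expressions well-defined. Both are essentially immediate from the definitions, so the argument is short.
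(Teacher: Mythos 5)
Your proposal is correct and follows essentially the same route as the paper's proof: change of measure from $q$ to $p_m$, Cauchy--Schwarz to split off $V(p_m,q)$ from the mean squared error under $p_m$, and the event $\eventModelOracle$ to control the latter. You are in fact slightly more explicit than the paper at the final step, where the safe-epoch condition $\xi_{m+1}\geq 2B$ is needed to absorb the bias term $B$ into $\xi_{m+1}$ (the paper's step (iv) attributes this only to $\eventModelOracle$, leaving the use of $m\in[\msafe]$ implicit).
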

\begin{proof}
Consider any epoch $m\in[\msafe]$ and policy $q\in\calP$. We then have,
\begin{equation}
\begin{aligned}
  |R_{\hat{f}_{m+1}}(q)-R(q)| &=| \E_{x\sim D_{\calX}, a\sim q}[\hatf_{m+1}(x,a)-f^*(x,a)]|\\
  &\stackrel{(i)}{=}\bigg| \E_{x\sim D_{\calX}, a\sim p_m}\Big[\frac{q(a|x)}{p_m(a|x)}\big(\hatf_{m+1}(x,a)-f^*(x,a)\big)\Big]\bigg|\\
  &\leq \E_{x\sim D_{\calX}, a\sim p_m}\Big[\frac{q(a|x)}{p_m(a|x)}\big|\hatf_{m+1}(x,a)-f^*(x,a)\big|\Big]\\
   &= \E_{x\sim D_{\calX}, a\sim p_m}\Big[\sqrt{\Big(\frac{q(a|x)}{p_m(a|x)}\Big)^2\big|\hatf_{m+1}(x,a)-f^*(x,a)\big|^2}\Big]\\
  &\stackrel{(ii)}{\leq}\sqrt{\E_{x\sim D_{\calX}, a\sim p_m}\Big[\Big(
  \frac{q(a|x)}{p_m(a|x)}\Big)^2\Big]}\sqrt{\E_{x\sim D_{\calX}, a\sim p_m}\big[ (\hatf_{m+1}(x,a)-f^*(x,a))^2\big] }\\
  & \stackrel{(iii)}{=}\sqrt{\E_{x\sim D_{\calX}, a\sim q}\Big[
  \frac{q(a|x)}{p_m(a|x)}\Big]} \sqrt{\E_{x\sim D_{\calX}, a\sim p_m}\big[ (\hatf_{m+1}(x,a)-f^*(x,a))^2\big] }\\
  & \stackrel{(iv)}{\leq} \sqrt{V(p_m, q)\xi_{m+1}} ,
\end{aligned}
\end{equation}
where (i) and (iii) follow from change of measure arguments,  (ii) follows from Cauchy-Schwartz inequality, and (iv) follows from $\eventModelOracle$.
\end{proof}

By combining the guarantees of \Cref{lemma:bound_v_continuous} and \Cref{lemma:bound_reward_diff_continuous}, we get \Cref{lemma:bound_reward_diff_continuous_refined}.

\begin{restatable}{lemma}{lemRefineSafeErrorBound}
\label{lemma:bound_reward_diff_continuous_refined}
Suppose $\eventModelOracle$ and $\eventRiskEstimation$ hold. Then for any action selection kernel $q\in\calP$, we have:
\begin{equation}
  |R_{\hat{f}_{m+1}}(q)-R(q)| \leq \sqrt{\alpha_m\xi_{m+1}} + \frac{1}{2}\sqrt{\alpha_m\xi_{m+1}}\sum_{\barm\in[m]}\frac{\Reg_{\hatf_{\barm}}(q)}{2\barm^2U_{\barm}}.
\end{equation}
\end{restatable}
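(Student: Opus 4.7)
The plan is to deduce the claim by chaining the three preceding lemmas. I first apply \Cref{lemma:bound_reward_diff_continuous} (valid under $\eventModelOracle$ for $m\in[\msafe]$) to reduce the task to upper-bounding $\sqrt{V(p_m,q)\xi_{m+1}}$. Next I invoke \Cref{lemma:bound_v_continuous}, which decomposes the cover as
$$V(p_m,q)\leq\frac{\E[\mu(C_m(x,\propThreshold,\eta_m))]}{1-\propThreshold}+\frac{K}{\eta_m}\sum_{\barm\in[m]}\frac{\Reg_{\hatf_{\barm}}(q)}{2\barm^2 U_{\barm}}.$$

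The key algebraic move is then to use both halves of \Cref{lemma:proving-condition-alpham} (available under $\eventRiskEstimation$) to replace the two coefficients by $\alpha_m$. The inequality $\frac{\E[\mu(C_m(x,\propThreshold,\eta_m))]}{1-\propThreshold}+\frac{K}{\eta_m}\leq\alpha_m$ gives $\frac{\E[\mu(C_m)]}{1-\propThreshold}\leq\alpha_m-K/\eta_m\leq\alpha_m$, while $\alpha_m\leq 3K/\eta_m$ gives $K/\eta_m\leq\alpha_m$. Substituting both into the cover bound produces
$$V(p_m,q)\leq\alpha_m\Bigl(1+\sum_{\barm\in[m]}\frac{\Reg_{\hatf_{\barm}}(q)}{2\barm^2 U_{\barm}}\Bigr).$$

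To conclude, I apply the elementary inequality $\sqrt{1+x}\leq 1+x/2$ for $x\geq 0$, which yields
$$|R_{\hatf_{m+1}}(q)-R(q)|\leq\sqrt{V(p_m,q)\xi_{m+1}}\leq\sqrt{\alpha_m\xi_{m+1}}+\tfrac{1}{2}\sqrt{\alpha_m\xi_{m+1}}\sum_{\barm\in[m]}\frac{\Reg_{\hatf_{\barm}}(q)}{2\barm^2 U_{\barm}},$$
which is precisely the stated bound (in fact a factor of three stronger if one exploits $K/\eta_m=\alpha_m/3$ directly rather than the loose bound $K/\eta_m\leq\alpha_m$). No step is technically demanding here; the only subtlety worth flagging is the double use of \Cref{lemma:proving-condition-alpham}, which is what allows the two summands on the right-hand side to share the common prefactor $\sqrt{\alpha_m\xi_{m+1}}$ after the $\sqrt{1+x}$ expansion.
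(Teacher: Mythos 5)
Your proof is correct and follows essentially the same route as the paper's: chain \Cref{lemma:bound_reward_diff_continuous} with \Cref{lemma:bound_v_continuous}, absorb both coefficients into $\alpha_m$ via \Cref{lemma:proving-condition-alpham}, and split the square root (your $\sqrt{1+x}\leq 1+x/2$ is the same AM--GM step the paper applies to $\sqrt{V(p_m,q)\xi_{m+1}}$ before substituting the cover bound). One small nit: the fact $K/\eta_m\leq\alpha_m$ follows from the first inequality of \Cref{lemma:proving-condition-alpham} (or from the definition $\alpha_m=3K/\eta_m$), not from ``$\alpha_m\leq 3K/\eta_m$'', which points the wrong way as an implication --- but since the equality holds by construction, the conclusion is unaffected.
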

\begin{proof}
From \Cref{lemma:bound_v_continuous}, we have \eqref{eq:refined-bound-on-V} holds for any $q\in\calP$.
\begin{equation}
\label{eq:refined-bound-on-V}
\begin{aligned}
    &V(p_m, q)\\
    &\leq \frac{\E[\mu( C_m(x, \propThreshold, \eta_m))]}{1-\propThreshold} + \frac{K}{\eta_m}\sum_{\barm\in[m]}\frac{\Reg_{\hatf_{\barm}}(q)}{2\barm^2U_{\barm}}\\
    &\leq \Big(\frac{\E[\mu( C_m(x, \propThreshold, \eta_m))]}{1-\propThreshold} + \frac{K}{\eta_m}\Big) + \Big(\frac{\E[\mu( C_m(x, \propThreshold, \eta_m))]}{1-\propThreshold} + \frac{K}{\eta_m}\Big)\sum_{\barm\in[m]}\frac{\Reg_{\hatf_{\barm}}(q)}{2\barm^2U_{\barm}}\\
     &=  \alpha_m + \alpha_m \sum_{\barm\in[m]}\frac{\Reg_{\hatf_{\barm}}(q)}{2\barm^2U_{\barm}}
\end{aligned}
\end{equation}
Combining \eqref{eq:refined-bound-on-V} with \Cref{lemma:bound_reward_diff_continuous} we have:
\begin{equation}
\begin{aligned}
    & |R_{\hat{f}_{m+1}}(q)-R(q)| \\
    \leq & \sqrt{V(p_m,q)\xi_{m+1}}\\
    \stackrel{(i)}{\leq} & \frac{1}{2}\sqrt{\alpha_m\xi_{m+1}} + \frac{1}{2}\sqrt{\frac{\xi_{m+1}}{\alpha_m}}V(p_m,q)\\
    \stackrel{(i)}{\leq} & \sqrt{\alpha_m\xi_{m+1}} + \frac{1}{2}\sqrt{\alpha_m\xi_{m+1}}\sum_{\barm\in[m]}\frac{\Reg_{\hatf_{\barm}}(q)}{2\barm^2U_{\barm}}.
\end{aligned}
\end{equation}
Where (i) follows from AM-GM inequality and (ii) follows from \eqref{eq:refined-bound-on-V}.
\end{proof}

\subsection{Testing Safety}\label{sec:testing-safety}

The misspecification test \eqref{eq:misspecification-test} is designed to test if we are within the safe epoch. In principle, it works by comparing the accuracy of  $R_{\hatf_{m+1}}$ (\Cref{lemma:bound_reward_diff_continuous_refined}) and $\hatR_{m+1}$ (\Cref{lemma:bound_IPS_error}). Formally, \Cref{lemma:misspecification-test} shows that the misspecification test in \eqref{eq:misspecification-test} fails only after $\msafe$. Hence, $\msafealg\geq \msafe+1$. \Cref{lemma:test-implication} then describes the implication of \eqref{eq:misspecification-test} continuing to hold. In what follows, we let $\hatReg_{m+1,\hatf_{\barm}}(\pi):=\hatR_{m+1,\hatf_{\barm}}(\pi_{\hatf_{\barm}})-\hatR_{m+1,\hatf_{\barm}}(\pi)$. We start with \Cref{lemma:bound_IPS_error} which provides accuracy guarantees for $\hatR_{m+1}$ in any epoch.

\begin{restatable}{lemma}{lemBoundIPSError}
\label{lemma:bound_IPS_error}
Suppose $\eventModelOracle$ and $\eventRiskEstimation$ hold. Then for any epoch $m$ and all $\pi\in\Pi\cup\{p_{m+1}\}$, we have,
\begin{equation}
  |\hatR_{m+1}(\pi)-R(\pi)| \leq \sqrt{\alpha_m\xi_{m+1}} + \frac{1}{2}\sqrt{\alpha_m\xi_{m+1}}\sum_{\barm\in[m]}\frac{\Reg_{\hatf_{\barm}}(\pi)}{2\barm^2U_{\barm}} + \frac{K\xi_{m+1}}{\eta_m \min_{\barm\in[m]}U_{\barm}}.
\end{equation}
\end{restatable}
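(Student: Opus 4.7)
The plan is to combine the generic IPS-type bound supplied by $\eventModelOracle$ with the cover bound from \Cref{lemma:bound_v_continuous}/\Cref{lemma:proving-condition-alpham} and the pointwise lower bound on $p_m$ from \Cref{lemma:lower_bound_pm}. The statement has three additive terms, each of which I will match to one of these ingredients.

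First, under $\eventModelOracle$, for any $\pi\in\Pi\cup\{p_{m+1}\}$,
\[
|\hatR_{m+1}(\pi)-R(\pi)|\ \leq\ \sqrt{V(p_m,\pi)\,\xi_{m+1}}\ +\ \frac{\xi_{m+1}}{\min_{(x,a)}p_m(a|x)}.
\]
Both $\pi\in\Pi$ (which maps to $\Sigma_1$, so its induced kernel is bounded by $1$) and $p_{m+1}$ belong to $\calP$, so the cover-bound machinery applies. The first step is therefore to control $V(p_m,\pi)$ by the same chain used inside \Cref{lemma:bound_reward_diff_continuous_refined}: \Cref{lemma:bound_v_continuous} gives
\[
V(p_m,\pi)\ \leq\ \frac{\E[\mu(C_m(x,\beta_{\max},\eta_m))]}{1-\beta_{\max}}+\frac{K}{\eta_m}\sum_{\barm\in[m]}\frac{\Reg_{\hatf_{\barm}}(\pi)}{2\barm^2 U_{\barm}},
\]
and \Cref{lemma:proving-condition-alpham} then bounds the two prefactors by $\alpha_m$, yielding $V(p_m,\pi)\leq \alpha_m\bigl(1+\sum_{\barm\in[m]}\tfrac{\Reg_{\hatf_{\barm}}(\pi)}{2\barm^2 U_{\barm}}\bigr)$. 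An AM-GM split $\sqrt{ab}\leq \tfrac12 a+\tfrac12 b$ with $a=\alpha_m\xi_{m+1}$ and $b=V(p_m,\pi)/\alpha_m\cdot \xi_{m+1}$ (exactly as in the proof of \Cref{lemma:bound_reward_diff_continuous_refined}) produces the first two terms of the claim.

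For the remaining additive term, \Cref{lemma:lower_bound_pm} guarantees $p_m(a|x)\geq \eta_m\min_{\barm\in[m]}2\barm^2 U_{\barm}/\mu(\calA)$ uniformly in $(x,a)$ (the other branch gives $1/\mu(\calA)$, which is larger and thus dominated). Since $2\barm^2\geq 2$, this yields
\[
\frac{\xi_{m+1}}{\min_{(x,a)}p_m(a|x)}\ \leq\ \frac{K\,\xi_{m+1}}{2\eta_m\min_{\barm\in[m]}U_{\barm}}\ \leq\ \frac{K\,\xi_{m+1}}{\eta_m\min_{\barm\in[m]}U_{\barm}},
\]
matching the third term of the bound.

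There is no real obstacle here; the proof is a short composition of three already-proved ingredients. The only thing to be careful about is that the hypotheses of \Cref{lemma:bound_v_continuous} and \Cref{lemma:proving-condition-alpham} require $q\in\calP$ and that $\eventRiskEstimation$ holds, both of which are satisfied, and that \Cref{lemma:lower_bound_pm} is used with $\min_{\barm\in[m]}U_{\barm}$ (not $U_m$), since the minimum over $\barm\in[m]$ of $2\barm^2 U_{\barm}$ need not be attained at $\barm=m$; this is why the bound in the statement features $\min_{\barm\in[m]}U_{\barm}$ rather than $U_m$.
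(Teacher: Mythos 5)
Your proof is correct and follows essentially the same route as the paper's: the $\eventModelOracle$ bound, the cover bound $V(p_m,\pi)\leq \alpha_m\bigl(1+\sum_{\barm}\Reg_{\hatf_{\barm}}(\pi)/(2\barm^2U_{\barm})\bigr)$ from \Cref{lemma:bound_v_continuous} and \Cref{lemma:proving-condition-alpham}, the same AM-GM split, and the uniform lower bound on $p_m$ from \Cref{lemma:lower_bound_pm}. The only (shared) soft spot is the treatment of the $1/\mu(\A)$ branch of the $p_m$ lower bound: your claim that it is ``larger and thus dominated'' requires $2\eta_m\min_{\barm}U_{\barm}\leq 1$, which you assert rather than justify, just as the paper disposes of the same branch with a terse appeal to $U_m\leq 1$.
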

\begin{proof}
From \Cref{lemma:lower_bound_pm}, we have \eqref{eq:worst-case-lower-bound-on-pm} holds, which provides a worst-case lower bound on $p_m$.
\begin{equation}
\label{eq:worst-case-lower-bound-on-pm}
    \begin{aligned}
        \min_{(x,a)\in\Xscript\times\calA}p_m(a|x)\geq \min\bigg(\frac{1}{K},\frac{\eta_m \min_{\barm\in[m]} (2\barm^2)U_{\barm}}{K}\bigg) \geq \frac{\eta_m \min_{\barm\in[m]} U_{\barm}}{K}
    \end{aligned}
\end{equation}
Where the last inequality follows from $U_m\leq 1$. Now from $\eventModelOracle$, we have,
\begin{equation}
\begin{aligned}
    & |\hatR_{m+1}(\pi)-R(\pi)| \stackrel{(\eventModelOracle)}{\leq} \sqrt{V(p_m,\pi)\xi_{m+1}} + \frac{K\xi_{m+1}}{\eta_m \min_{\barm\in[m]} U_{\barm}}\\
    \stackrel{(i)}{\leq} & \frac{1}{2}\sqrt{\alpha_m\xi_{m+1}} + \frac{1}{2}\sqrt{\frac{\xi_{m+1}}{\alpha_m}}V(p_m,\pi) + \frac{K\xi_{m+1}}{\eta_m \min_{\barm\in[m]} U_{\barm}}\\
    \stackrel{(ii)}{\leq} & \sqrt{\alpha_m\xi_{m+1}} + \frac{1}{2}\sqrt{\alpha_m\xi_{m+1}}\sum_{\barm\in[m]}\frac{\Reg_{\hatf_{\barm}}(\pi)}{2\barm^2U_{\barm}} + \frac{K\xi_{m+1}}{\eta_m \min_{\barm\in[m]} U_{\barm}}.
\end{aligned}
\end{equation}
Where (i) follows from AM-GM inequality, and (ii) follows from \eqref{eq:refined-bound-on-V} in the proof of \Cref{lemma:bound_reward_diff_continuous_refined}.
\end{proof}

Lemmas \ref{lemma:triangle-inequality} and \ref{lemma:bound-model-regret} provide useful inequalities that help construct the misspecification test \eqref{eq:misspecification-test}.

\begin{restatable}{lemma}{lemTriangleIneq}
\label{lemma:triangle-inequality}
For any epoch $m$, policy $\pi\in \calP$, and model $f\in \{\hatf_1, \hatf_2, \dots,\hatf_{m+1}\}$. We have,
\begin{align*}
  &||R_{f}(\pi)-R(\pi)| - |\hatR_{m+1,f}(\pi)-\hatR_{m+1}(\pi)| |\\ 
  &\leq |\hatR_{m+1}(\pi)-R(\pi)|+|R_{f}(\pi)-\hatR_{m+1,f}(\pi)|.
\end{align*}
\end{restatable}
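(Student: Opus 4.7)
The plan is to reduce this to two applications of the triangle inequality. Let me denote $a := R_{f}(\pi) - R(\pi)$ and $b := \hatR_{m+1,f}(\pi) - \hatR_{m+1}(\pi)$, so that the left-hand side is exactly $\bigl||a|-|b|\bigr|$.

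First I would invoke the reverse triangle inequality to get $\bigl||a|-|b|\bigr| \le |a - b|$. This is purely a property of absolute values on $\R$ and requires no assumption on the algorithm or the oracle events. Next I would algebraically regroup:
\begin{equation*}
a - b = \bigl(R_f(\pi)-R(\pi)\bigr) - \bigl(\hatR_{m+1,f}(\pi)-\hatR_{m+1}(\pi)\bigr) = \bigl(\hatR_{m+1}(\pi)-R(\pi)\bigr) + \bigl(R_f(\pi)-\hatR_{m+1,f}(\pi)\bigr).
\end{equation*}
Applying the ordinary triangle inequality to this sum yields
\begin{equation*}
|a-b| \le |\hatR_{m+1}(\pi)-R(\pi)| + |R_f(\pi)-\hatR_{m+1,f}(\pi)|,
\end{equation*}
which combined with the reverse triangle inequality above gives the claim.

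There is essentially no obstacle here: the statement is a purely syntactic triangle inequality, independent of the oracle assumptions, of the event $\eventModelOracle \cap \eventRiskEstimation$, of the epoch index $m$, and of whether $m \le \msafe$ or $m > \msafe$. The only thing to be careful about is the direction of the inequality: one must use the reverse triangle inequality to strip the outer absolute values before grouping terms, rather than trying to bound $|R_f(\pi)-R(\pi)|$ and $|\hatR_{m+1,f}(\pi)-\hatR_{m+1}(\pi)|$ separately (which would give an upper bound on the sum of the two absolute values, not on their difference). I expect the author's proof to consist of exactly these two lines.
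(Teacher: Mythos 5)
Your proof is correct and is essentially the paper's argument in compressed form: the paper proves the two one-sided bounds $|a|-|b|\leq\cdot$ and $|b|-|a|\leq\cdot$ separately by adding and subtracting the same terms, which is exactly your reverse triangle inequality $\bigl||a|-|b|\bigr|\leq|a-b|$ unrolled by hand, followed by the identical regrouping of $a-b$. No gap; the observation that nothing beyond absolute-value arithmetic is needed is also accurate.
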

\begin{proof}
The proof follows from noting that,
\begin{equation}
\begin{aligned}
  &|R_{f}(\pi)-R(\pi)| \\ 
  &= |\hatR_{m+1}(\pi)-R(\pi)+R_{f}(\pi)-\hatR_{m+1,f}(\pi)+\hatR_{m+1,f}(\pi)-\hatR_{m+1}(\pi)|\\
  &\leq |\hatR_{m+1}(\pi)-R(\pi)|+|R_{f}(\pi)-\hatR_{m+1,f}(\pi)|+|\hatR_{m+1,f}(\pi)-\hatR_{m+1}(\pi)|.
\end{aligned}
\end{equation}
and from noting that,
\begin{equation}
\begin{aligned}
  &|\hatR_{m+1}(\pi)-\hatR_{m+1,f}(\pi)| \\ 
  &= |\hatR_{m+1}(\pi)-R(\pi)+R_{f}(\pi)-\hatR_{m+1,f}(\pi)+R(\pi)-R_{f}(\pi)|\\
  &\leq |\hatR_{m+1}(\pi)-R(\pi)|+|R_{f}(\pi)-\hatR_{m+1,f}(\pi)|+|R_{f}(\pi)-R(\pi)|.
\end{aligned}
\end{equation}
\end{proof}

\begin{restatable}{lemma}{lemBoundModelRegret}
\label{lemma:bound-model-regret}
Suppose $\eventModelOracle$ and $\eventRiskEstimation$ hold. Then for any epoch $m$, any model $f\in\{\hatf_i|i\in[m+1] \}$, and any policy $\pi\in\Pi\cup\{p_{m+1}\}$, we have,
\begin{align*}
    |\Reg_{f}(\pi)-\hatReg_{m+1,f}(\pi)| \leq 2\sqrt{\xi_{m+1}}
\end{align*}
\end{restatable}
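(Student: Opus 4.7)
The plan is to prove this as a direct triangle inequality, using the second bullet of event $\eventModelOracle$ from \Cref{lemma:event-pure-exploration} twice. Unpacking definitions, we have $\Reg_f(\pi) = R_f(\pi_f) - R_f(\pi)$ and $\hatReg_{m+1,f}(\pi) = \hatR_{m+1,f}(\pi_f) - \hatR_{m+1,f}(\pi)$, so
\begin{equation*}
|\Reg_f(\pi) - \hatReg_{m+1,f}(\pi)| \leq |R_f(\pi_f) - \hatR_{m+1,f}(\pi_f)| + |R_f(\pi) - \hatR_{m+1,f}(\pi)|.
\end{equation*}
Each of the two terms on the right is bounded by $\sqrt{\xi_{m+1}}$ provided the argument policy lies in $\Pi \cup \{p_{m+1}\}$, which is exactly the set over which $\eventModelOracle$ guarantees policy-evaluation accuracy for the reward-model-induced estimators $\hatR_{m+1,f}$.

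For the term involving $\pi$, we have $\pi \in \Pi \cup \{p_{m+1}\}$ by hypothesis, so the bound applies immediately. For the term involving $\pi_f$, I will appeal to the setup remark (just after the definition of $\pi_f$) that the optimal policy induced by any $f \in \calF$ lies in $\Pi$ without loss of generality. Since $f \in \{\hatf_i \mid i \in [m+1]\} \subseteq \F$, this gives $\pi_f \in \Pi \subseteq \Pi \cup \{p_{m+1}\}$, so the bound from $\eventModelOracle$ applies to $\pi_f$ as well. Summing the two $\sqrt{\xi_{m+1}}$ bounds yields the claimed $2\sqrt{\xi_{m+1}}$.

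No step here looks technically difficult; the only thing to be careful about is ensuring that the policy class membership needed to invoke $\eventModelOracle$ really holds for both $\pi$ and $\pi_f$. That is handled by the without-loss-of-generality convention that $\pi_f \in \Pi$ for every $f \in \F$, which is stated in \Cref{sec:scb-setting}. Crucially, unlike the companion \Cref{lemma:bound_IPS_error} (which bounds an IPS-type estimator and therefore picks up cover-dependent terms), here we are bounding the reward-model-based estimator $\hatR_{m+1,f}$, whose accuracy guarantee in $\eventModelOracle$ does not depend on $V(p_m,\cdot)$; this is why the final bound is a clean $2\sqrt{\xi_{m+1}}$ with no $\alpha_m$ or cover factors. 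The lemma will be used together with \Cref{lemma:bound_reward_diff_continuous_refined} and \Cref{lemma:bound_IPS_error} via the triangle inequality of \Cref{lemma:triangle-inequality} to derive the misspecification test \eqref{eq:misspecification-test}.
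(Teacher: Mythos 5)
Your proposal is correct and follows exactly the same route as the paper: the identical triangle-inequality decomposition into $|R_f(\pi_f)-\hatR_{m+1,f}(\pi_f)|$ and $|R_f(\pi)-\hatR_{m+1,f}(\pi)|$, each bounded by $\sqrt{\xi_{m+1}}$ via the model-based evaluation guarantee in $\eventModelOracle$. Your extra care in justifying $\pi_f\in\Pi$ via the stated without-loss-of-generality convention is a detail the paper leaves implicit, but it is the right justification.
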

\begin{proof}
Follows from triangle inequality and $\eventModelOracle$,
\begin{equation}
\begin{aligned}
    &|\Reg_{f}(\pi)-\hatReg_{m+1,f}(\pi)| \\
    & \leq |R_{f}(\pi_{f})-\hatR_{m+1,f}(\pi_{f})| + |R_{f}(\pi)-\hatR_{m+1,f}(\pi)| \leq 2\sqrt{\xi_{m+1}}
\end{aligned}
\end{equation}
\end{proof}

As discussed earlier, \Cref{lemma:misspecification-test} shows that the misspecification test in \eqref{eq:misspecification-test} fails only after $\msafe$. Hence, $\msafealg\geq \msafe+1$.

\begin{restatable}{lemma}{lemMisspecificationTest}
\label{lemma:misspecification-test}
Suppose $\eventModelOracle$ and $\eventRiskEstimation$ hold. Now for any epoch $m\in[\msafe]$ we have that,
\begin{equation}
    \begin{aligned}
             &\max_{\pi\in\Pi\cup\{p_{m+1}\}}|\hatR_{m+1,\hatf_{m+1}}(\pi)-\hatR_{m+1}(\pi)|  - \sqrt{\alpha_m\xi_{m+1}}\sum_{\barm\in[m]}\frac{\hatReg_{m+1,\hatf_{\barm}}(\pi)}{2\barm^2U_{\barm}}\\
            & \leq 2.05\sqrt{\alpha_m\xi_{m+1}} + 1.1\sqrt{\xi_{m+1}}.
    \end{aligned}
\end{equation}
\end{restatable}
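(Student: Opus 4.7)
\textbf{Proof proposal for Lemma \ref{lemma:misspecification-test}.}

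The plan is to control $|\hatR_{m+1,\hatf_{m+1}}(\pi) - \hatR_{m+1}(\pi)|$ by triangle inequality against the true policy value $R(\pi)$, and then absorb the population-level regret terms $\Reg_{\hatf_{\barm}}(\pi)$ into their empirical counterparts $\hatReg_{m+1,\hatf_{\barm}}(\pi)$ at a small cost. Concretely, I would start from Lemma \ref{lemma:triangle-inequality} to write
\begin{equation*}
\begin{aligned}
|\hatR_{m+1,\hatf_{m+1}}(\pi) - \hatR_{m+1}(\pi)| &\leq |R_{\hatf_{m+1}}(\pi) - R(\pi)| + |\hatR_{m+1}(\pi) - R(\pi)| \\
&\quad + |R_{\hatf_{m+1}}(\pi) - \hatR_{m+1,\hatf_{m+1}}(\pi)|.
\end{aligned}
\end{equation*}
The last term is at most $\sqrt{\xi_{m+1}}$ directly by $\eventModelOracle$. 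For the first term, since $m \in [\msafe]$, Lemma \ref{lemma:bound_reward_diff_continuous_refined} gives a bound of $\sqrt{\alpha_m \xi_{m+1}} + \tfrac{1}{2}\sqrt{\alpha_m \xi_{m+1}}\sum_{\barm\in[m]} \Reg_{\hatf_{\barm}}(\pi)/(2\barm^2 U_{\barm})$. For the second term, Lemma \ref{lemma:bound_IPS_error} gives the same shape plus an additive remainder $K\xi_{m+1}/(\eta_m \min_{\barm\in[m]} U_{\barm})$.

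Next, I would convert each $\Reg_{\hatf_{\barm}}(\pi)$ into $\hatReg_{m+1,\hatf_{\barm}}(\pi)$ using Lemma \ref{lemma:bound-model-regret}, which costs an extra $2\sqrt{\xi_{m+1}}$ per summand. Summing the two copies of $\tfrac{1}{2}\sqrt{\alpha_m \xi_{m+1}}\sum_{\barm}\Reg_{\hatf_{\barm}}(\pi)/(2\barm^2 U_{\barm})$ produces the desired $\sqrt{\alpha_m \xi_{m+1}} \sum_{\barm\in[m]} \hatReg_{m+1,\hatf_{\barm}}(\pi)/(2\barm^2 U_{\barm})$ plus a residual of the form $\sqrt{\alpha_m \xi_{m+1}} \cdot \sqrt{\xi_{m+1}}\sum_{\barm\in[m]} 1/(\barm^2 U_{\barm})$, which I would bound using the monotonicity noted in the Additional Notation paragraph: $U_{\barm}$ is non-increasing in $\barm$, so $\alpha_{\barm-1} \geq \alpha_m$ and $\xi_{\barm} \geq \xi_{m+1}$, and thus $1/U_{\barm} \leq 1/(20\sqrt{\alpha_m \xi_{m+1}})$. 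Coupled with $\sum_{\barm} 1/\barm^2 \leq \pi^2/6$, this residual collapses to roughly $0.1 \sqrt{\xi_{m+1}}$.

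The remaining ``raw'' term $K \xi_{m+1}/(\eta_m \min_{\barm\in[m]} U_{\barm})$ I would simplify via $\alpha_m = 3K/\eta_m$ and $\min_{\barm\in[m]} U_{\barm} = U_m = 20\sqrt{\alpha_{m-1}\xi_m}$, obtaining $\alpha_m \xi_{m+1}/(60 \sqrt{\alpha_{m-1}\xi_m}) \leq \sqrt{\alpha_m \xi_{m+1}}/60$, again using $\alpha_{m-1} \geq \alpha_m$ and $\xi_m \geq \xi_{m+1}$. Finally, I would collect the $\sqrt{\alpha_m\xi_{m+1}}$ constants (two $1$'s plus $1/60 \approx 0.017$) and the $\sqrt{\xi_{m+1}}$ constants (one from the oracle plus the $\approx 0.1$ residual), verifying these cleanly fit under the claimed budget of $2.05\sqrt{\alpha_m\xi_{m+1}} + 1.1\sqrt{\xi_{m+1}}$. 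I expect the main obstacle to be bookkeeping the constants precisely so the final arithmetic works, particularly ensuring that the monotonicity arguments controlling $1/(\barm^2 U_{\barm})$ and $1/U_m$ are correctly invoked in both the conversion step and the remainder step without any double-counting.
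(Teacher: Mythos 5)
Your proposal is correct and follows essentially the same route as the paper's proof: the same triangle-inequality decomposition through $R(\pi)$ (Lemma~\ref{lemma:triangle-inequality}), the same invocations of Lemmas~\ref{lemma:bound_reward_diff_continuous_refined}, \ref{lemma:bound_IPS_error}, and \ref{lemma:bound-model-regret}, and the same monotonicity facts ($U_{\barm}$ non-increasing, $\alpha_m\leq\alpha_{\barm-1}$, $\xi_{m+1}\leq\xi_{\barm}$) to absorb the residuals into the stated constants. The only cosmetic difference is that you use $K/\eta_m=\alpha_m/3$ exactly where the paper uses the looser $K/\eta_m\leq\alpha_m$ from \eqref{eq:condition_alpham}, giving a slightly smaller constant that still fits the same budget.
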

\begin{proof}
For any epoch $m\in[\msafe]$ and for any $\pi\in\Pi\cup\{p_{m+1}\}$, we have,
\begin{equation}
\begin{aligned}
    &|\hatR_{\hatf_{m+1}}(\pi)-\hatR_{m+1}(\pi)|\\
    & \stackrel{(i)}{\leq} |\hatR_{m+1}(\pi)-R(\pi)| + |R_{\hat{f}_{m+1}}(\pi)-R(\pi)| +|R_{\hat{f}_{m+1}}(\pi)-\hatR_{\hatf_{m+1}}(\pi)| \\
    & \stackrel{(ii)}{\leq} 2\sqrt{\alpha_m\xi_{m+1}} + \sqrt{\alpha_m\xi_{m+1}}\sum_{\barm\in[m]}\frac{\Reg_{\hatf_{\barm}}(\pi)}{2\barm^2U_{\barm}}  + \frac{K\xi_{m+1}}{\eta_m \min_{\barm\in[m]} U_{\barm}} + \sqrt{\xi_{m+1}}\\
    & \stackrel{(iii)}{\leq} 2\sqrt{\alpha_m\xi_{m+1}} + \sqrt{\alpha_m\xi_{m+1}}\sum_{\barm\in[m]}\frac{\Reg_{\hatf_{\barm}}(\pi)}{2\barm^2U_{\barm}}  + \frac{\alpha_m\xi_{m+1}}{U_{m}}+ \sqrt{\xi_{m+1}}\\
    & \stackrel{(iv)}{\leq} 2\sqrt{\alpha_m\xi_{m+1}} + \sqrt{\alpha_m\xi_{m+1}}\sum_{\barm\in[m]}\frac{\hatReg_{m+1,\hatf_{\barm}}(\pi)}{2\barm^2U_{\barm}} + \frac{2\sqrt{\alpha_m}\xi_{m+1}}{U_m} + \frac{\alpha_m\xi_{m+1}}{U_{m}} + \sqrt{\xi_{m+1}}\\
    & \stackrel{(v)}{\leq} 2.05\sqrt{\alpha_m\xi_{m+1}} + \sqrt{\alpha_m\xi_{m+1}}\sum_{\barm\in[m]}\frac{\hatReg_{m+1,\hatf_{\barm}}(\pi)}{2\barm^2U_{\barm}} + 1.1\sqrt{\xi_{m+1}}
\end{aligned}
\end{equation}
Where (i) follows from \Cref{lemma:triangle-inequality}. (ii) follows from \Cref{lemma:bound_reward_diff_continuous_refined}, \Cref{lemma:bound_IPS_error}, and $\eventModelOracle$. (iii) follows from \Cref{eq:condition_alpham} and the fact that $U_{\barm}$ is non-increasing in $\barm$ (giving us $\min_{\barm\in[m]} U_{\barm}=U_m$). (iv) follows from \Cref{lemma:bound-model-regret}, the fact that $U_{\barm}$ is non-increasing in $\barm$, and the fact that $\sum_{\barm=1}^{\infty}1/(2\barm^2)\leq 1$. (v) follows from $U_m=20\sqrt{\alpha_{m-1}\xi_m}$, $\alpha_{m}\leq \alpha_{m-1}$, and $\xi_{m+1}\leq \xi_{m}$.
\end{proof}

\Cref{lemma:test-implication} now describes the implication of \eqref{eq:misspecification-test} continuing to hold.
\begin{restatable}{lemma}{lemTestImplication}
\label{lemma:test-implication}
Suppose $\eventModelOracle$ and $\eventRiskEstimation$ hold. Now for any epoch $m\in[\msafealg-1]$ and any policy $\pi\in\Pi\cup\{p_{m+1}\}$, we then have that,
\begin{equation}
    |R_{\hatf_{m+1}}(\pi)-R(\pi)|\leq  2.2\sqrt{\xi_{m+1}} + 3.1\sqrt{\alpha_m\xi_{m+1}} + \frac{3}{2}\sqrt{\alpha_m\xi_{m+1}}\sum_{\barm\in[m]}\frac{\Reg_{\hatf_{\barm}}(\pi)}{2\barm^2U_{\barm}}.
\end{equation}
\end{restatable}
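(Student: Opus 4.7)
The plan is to decompose $|R_{\hatf_{m+1}}(\pi) - R(\pi)|$ via triangle inequality through the two empirical evaluators $\hatR_{m+1,\hatf_{m+1}}(\pi)$ and $\hatR_{m+1}(\pi)$, and then bound each of the three resulting differences separately:
\begin{equation*}
|R_{\hatf_{m+1}}(\pi) - R(\pi)| \leq |R_{\hatf_{m+1}}(\pi) - \hatR_{m+1,\hatf_{m+1}}(\pi)| + |\hatR_{m+1,\hatf_{m+1}}(\pi) - \hatR_{m+1}(\pi)| + |\hatR_{m+1}(\pi) - R(\pi)|.
\end{equation*}
The first summand is at most $\sqrt{\xi_{m+1}}$ directly from $\eventModelOracle$. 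The second summand is exactly what the misspecification test \eqref{eq:misspecification-test} controls; since $m \in [\msafealg - 1]$, by definition of $\msafealg$ the test passed at the end of epoch $m$, giving an upper bound of $2.05\sqrt{\alpha_m\xi_{m+1}} + 1.1\sqrt{\xi_{m+1}} + \sqrt{\alpha_m\xi_{m+1}}\sum_{\barm\in[m]} \hatReg_{m+1,\hatf_\barm}(\pi)/(2\barm^2 U_\barm)$. The third summand is bounded by \Cref{lemma:bound_IPS_error}, which introduces an additional $\frac{1}{2}\sqrt{\alpha_m\xi_{m+1}}\sum_\barm \Reg_{\hatf_\barm}(\pi)/(2\barm^2 U_\barm)$ plus residual terms. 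Notice that this is exactly the same structure as the proof of \Cref{lemma:misspecification-test}, but now running ``in the other direction'': we assume the test passes and conclude that the true reward-model evaluator $R_{\hatf_{m+1}}$ is accurate.

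The next step is to convert the $\hatReg$ sum from the misspecification-test bound into a $\Reg$ sum. By \Cref{lemma:bound-model-regret}, $\hatReg_{m+1,\hatf_\barm}(\pi) \leq \Reg_{\hatf_\barm}(\pi) + 2\sqrt{\xi_{m+1}}$, so the slack introduced is
\begin{equation*}
\sqrt{\alpha_m\xi_{m+1}} \cdot \sqrt{\xi_{m+1}} \sum_{\barm\in[m]} \frac{1}{\barm^2 U_\barm}.
\end{equation*}
Since $U_\barm = 20\sqrt{\alpha_{\barm-1}\xi_\barm}$ is non-increasing in $\barm$ (as both $\alpha$ and $\xi$ are non-increasing), the sum is at most $\tfrac{\pi^2/6}{U_m} \leq \tfrac{2}{U_m}$, and using $\alpha_{m-1} \geq 1$, $\xi_{m+1}\leq \xi_m$, this bounds the slack by a small constant multiple of $\sqrt{\alpha_m\xi_{m+1}}$ (roughly $\sqrt{\alpha_m\xi_{m+1}}/10$). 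An analogous calculation handles the $K\xi_{m+1}/(\eta_m \min_\barm U_\barm)$ term from \Cref{lemma:bound_IPS_error}: rewriting $K/\eta_m = \alpha_m/3$ and using monotonicity of $U_\barm$ gives a bound of order $\sqrt{\alpha_m\xi_{m+1}}/60$.

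Collecting terms, the coefficient on $\sqrt{\xi_{m+1}}$ is $1 + 1.1 = 2.1 \leq 2.2$, the coefficient on $\sqrt{\alpha_m\xi_{m+1}}$ is $2.05 + 1 + 1/10 + 1/60 \leq 3.1$, and the coefficient on the sum $\sqrt{\alpha_m\xi_{m+1}}\sum_\barm \Reg_{\hatf_\barm}(\pi)/(2\barm^2 U_\barm)$ is $1 + 1/2 = 3/2$. This matches the stated bound exactly.

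The main obstacle, such as it is, is simply bookkeeping: every residual term of the form $\sqrt{\xi_{m+1}}/U_\barm$ or $\alpha_m\xi_{m+1}/U_m$ must be shown to be absorbed into one of the three leading terms without overflowing the constants $2.2$, $3.1$, and $3/2$. This relies crucially on the monotonicity facts $\alpha_m \leq \alpha_{m-1}$, $\xi_{m+1}\leq \xi_m$, $\alpha_m \geq 1$, the explicit form $U_m = 20\sqrt{\alpha_{m-1}\xi_m}$ (the constant $20$ is what makes the slack absorbable), and the convergence of $\sum 1/\barm^2$. No new structural ideas are needed beyond what appears in \Cref{lemma:misspecification-test}.
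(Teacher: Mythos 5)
Your decomposition, the three lemmas you invoke, and the order in which you apply them are exactly the paper's proof of \Cref{lemma:test-implication}: triangle inequality through $\hatR_{m+1,\hatf_{m+1}}(\pi)$ and $\hatR_{m+1}(\pi)$ (this is \Cref{lemma:triangle-inequality}), $\eventModelOracle$ for the model-evaluation term, the passed test \eqref{eq:misspecification-test} for the middle term, \Cref{lemma:bound_IPS_error} for the IPS term, and \Cref{lemma:bound-model-regret} to convert $\hatReg$ into $\Reg$. No structural difference.

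There is, however, an arithmetic slip in your final tally: $2.05 + 1 + \tfrac{1}{10} + \tfrac{1}{60} = 3.1\overline{6} > 3.1$, so the coefficient on $\sqrt{\alpha_m\xi_{m+1}}$ does not close as you have charged the terms. The problem is that you absorbed the $\hatReg\to\Reg$ conversion slack into the $\sqrt{\alpha_m\xi_{m+1}}$ bucket. The slack is $\sqrt{\alpha_m\xi_{m+1}}\cdot 2\sqrt{\xi_{m+1}}\sum_{\barm}\tfrac{1}{2\barm^2 U_{\barm}} \le 2\sqrt{\alpha_m}\,\xi_{m+1}/U_m$, and since $U_m = 20\sqrt{\alpha_{m-1}\xi_m} \ge 20\sqrt{\alpha_m\xi_{m+1}}$ this is at most $0.1\sqrt{\xi_{m+1}}$ --- a strictly tighter bound than your $0.1\sqrt{\alpha_m\xi_{m+1}}$. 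Charging it to the $\sqrt{\xi_{m+1}}$ bucket (which you correctly note has $0.1$ of headroom, $1+1.1 = 2.1 \le 2.2$) gives exactly $2.2$ there and leaves $2.05 + 1 + 1/60 \le 3.1$ on the $\sqrt{\alpha_m\xi_{m+1}}$ side. This is precisely how the paper's constants are arranged; with that one-line reallocation your argument is complete.
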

\begin{proof}
\begin{equation}
\begin{aligned}
    &|R_{\hatf_{m+1}}(\pi)-R(\pi)|\\
    & \stackrel{(i)}{\leq} |\hatR_{m+1}(\pi)-R(\pi)| + |\hatR_{\hatf_{m+1}}(\pi)-\hatR_{m+1}(\pi)| +|R_{\hat{f}_{m+1}}(\pi)-\hatR_{\hatf_{m+1}}(\pi)| \\
    & \stackrel{(ii)}{\leq} 3.1\sqrt{\alpha_m\xi_{m+1}} + \frac{3}{2}\sqrt{\alpha_m\xi_{m+1}}\sum_{\barm\in[m]}\frac{\Reg_{\hatf_{\barm}}(\pi)}{2\barm^2U_{\barm}} + 2.2\sqrt{\xi_{m+1}}
\end{aligned}
\end{equation}
Where (i) follows from \Cref{lemma:triangle-inequality}. And (ii) follows from \Cref{eq:misspecification-test}, \Cref{lemma:bound_IPS_error}, \Cref{lemma:bound-model-regret}, and $\eventModelOracle$.
\end{proof}

\subsection{Inductive Argument}\label{sec:inductive-argument}

This sub-section leverages the guarantee of \Cref{lemma:test-implication} and applies it inductively to derive \Cref{lemma:regret-under-algsafe-epoch}. This lemma bounds $\Reg_{\Pi}(\pi)$ in terms of $\Reg_{\hatf_{m+1}}(\pi)$ and vice-versa for any policy $\pi\in\Pi$. The proof of \Cref{lemma:regret-under-algsafe-epoch}, relies on the following helpful lemma.

\begin{restatable}{lemma}{lemInductiveArg}
\label{lemma:inductive-argument}
Consider any class of policies $\Pi' \supseteq \Pi$ and consider any fixed constants $l_1,l_2,l_3,C'>0$. At any epoch $m$, suppose the policy evaluation guarantee of \Cref{eq:evaluation-guarantee} holds.
\begin{equation}
\label{eq:evaluation-guarantee}
    \begin{aligned}
             &\forall \pi\in \Pi',\;\;|R_{\hatf_{m+1}}(\pi)-R(\pi)| \leq l_1\sqrt{\xi_{m+1}} + l_2\sqrt{\alpha_m\xi_{m+1}} + \frac{l_3}{C'}\sum_{\barm\in[m]}\frac{z_{\barm,m+1}\Reg_{\hatf_{\barm}}(\pi)}{2\barm^2} 
    \end{aligned}
\end{equation}
Now consider fixed constants $C_1,C_2\geq 0$. As an inductive hypothesis, suppose \Cref{eq:inductive-hypothesis-1} holds.
\begin{equation}
\label{eq:inductive-hypothesis-1}
    \begin{aligned}
             \forall \barm\in[m],\forall \pi\in \Pi',\;\; &\Reg_{\hatf_{\barm}}(\pi) \leq \frac{4}{3}\Reg_{\Pi}(\pi) + C_1\sqrt{\xi_{\barm}}+ C_2\sqrt{\alpha_{\barm-1}\xi_{\barm}}.
    \end{aligned}
\end{equation}
We then have that \Cref{eq:inductive-implication-1} holds.
\begin{equation}
\label{eq:inductive-implication-1}
    \begin{aligned}
             &\forall\pi\in \Pi', \Reg_{\Pi}(\pi) \leq \frac{6}{5}\Reg_{\hatf_{m+1}}(\pi) + \frac{12}{5}\Big(l_1 + \frac{l_3C_1}{C'}\Big)\sqrt{\xi_{m+1}} + \frac{12}{5}\Big(l_2 + \frac{l_3C_2}{C'}\Big)\sqrt{\alpha_m\xi_{m+1}}.
    \end{aligned}
\end{equation}
Now consider $C_3\geq 0$ and further suppose \Cref{eq:inductive-hypothesis-2} holds.
\begin{equation}
\label{eq:inductive-hypothesis-2}
    \begin{aligned}
             \forall \barm\in[m], \Reg_{\hatf_{\barm}}(\pi_{\hatf_{m+1}})\leq  C_3\sqrt{\alpha_{\barm-1}\xi_{\barm}}.
    \end{aligned}
\end{equation}
We then also have that \Cref{eq:inductive-implication-2} holds,
\begin{equation}
\label{eq:inductive-implication-2}
    \begin{aligned}
             &\forall\pi\in\Pi', \Reg_{\hatf_{m+1}}(\pi) \leq \frac{7}{6}\Reg_{\Pi}(\pi) + \Big(2l_1 + \frac{l_3C_1}{C'}\Big)\sqrt{\xi_{m+1}} + \Big(2l_2 + \frac{l_3(C_2+C_3)}{C'}\Big)\sqrt{\alpha_m\xi_{m+1}}.
    \end{aligned}
\end{equation}
Where $C'\geq 8l_3$, $z_{\barm,m+1}:=\sqrt{\frac{\alpha_m\xi_{m+1}}{\alpha_{\barm-1}\xi_{\barm}}}\leq 1$, and $\alpha_m\leq \alpha_{\barm-1}$ for all $\barm\in[m]$.
\end{restatable}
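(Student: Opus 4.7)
The plan is to derive both implications from the standard triangle-inequality decomposition that relates the true-regret of a policy to its $\hatf_{m+1}$-regret, bounding each evaluation error via the given guarantee \eqref{eq:evaluation-guarantee} and closing the loop by substituting the inductive hypotheses.

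First I would establish the following ``pointwise'' error bound for every $\pi\in\Pi'$ by plugging \eqref{eq:inductive-hypothesis-1} into \eqref{eq:evaluation-guarantee} and using the three stated facts: $z_{\barm,m+1}\leq 1$, $z_{\barm,m+1}\sqrt{\alpha_{\barm-1}\xi_{\barm}}=\sqrt{\alpha_m\xi_{m+1}}$, and $z_{\barm,m+1}\sqrt{\xi_{\barm}}=\sqrt{\alpha_m\xi_{m+1}/\alpha_{\barm-1}}\leq\sqrt{\xi_{m+1}}$ (using $\alpha_m\leq\alpha_{\barm-1}$), together with $\sum_{\barm\geq 1}1/(2\barm^2)\leq 1$:
\begin{equation*}
  |R_{\hatf_{m+1}}(\pi)-R(\pi)|\;\leq\;\bigl(l_1+\tfrac{l_3 C_1}{C'}\bigr)\sqrt{\xi_{m+1}}+\bigl(l_2+\tfrac{l_3 C_2}{C'}\bigr)\sqrt{\alpha_m\xi_{m+1}}+\tfrac{4l_3}{3C'}\Reg_{\Pi}(\pi).
\end{equation*}
Specializing this at $\pi=\pi^*\in\Pi\subseteq\Pi'$ kills the last term since $\Reg_{\Pi}(\pi^*)=0$.

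For \eqref{eq:inductive-implication-1}, I would chain the triangle inequality
\begin{equation*}
  \Reg_{\Pi}(\pi)\;\leq\;|R(\pi^*)-R_{\hatf_{m+1}}(\pi^*)|+\bigl(R_{\hatf_{m+1}}(\pi^*)-R_{\hatf_{m+1}}(\pi)\bigr)+|R_{\hatf_{m+1}}(\pi)-R(\pi)|,
\end{equation*}
upper bound the middle term by $\Reg_{\hatf_{m+1}}(\pi)$ using $R_{\hatf_{m+1}}(\pi^*)\leq R_{\hatf_{m+1}}(\pi_{\hatf_{m+1}})$, apply the pointwise bound above to each of the two evaluation-error terms, and then rearrange. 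The coefficient on $\Reg_{\Pi}(\pi)$ on the right is $\frac{4l_3}{3C'}\leq \frac{1}{6}$ by $C'\geq 8l_3$, so moving it to the left multiplies the right-hand side by at most $6/5$, producing exactly \eqref{eq:inductive-implication-1}.

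For \eqref{eq:inductive-implication-2}, I would symmetrically decompose
\begin{equation*}
  \Reg_{\hatf_{m+1}}(\pi)\;\leq\;|R_{\hatf_{m+1}}(\pi_{\hatf_{m+1}})-R(\pi_{\hatf_{m+1}})|+\bigl(R(\pi_{\hatf_{m+1}})-R(\pi)\bigr)+|R(\pi)-R_{\hatf_{m+1}}(\pi)|,
\end{equation*}
upper bound the middle term by $\Reg_{\Pi}(\pi)$ since $\pi_{\hatf_{m+1}}\in\Pi$ (so $R(\pi_{\hatf_{m+1}})\leq R(\pi^*)$), use the pointwise bound on the last evaluation error, and for the first evaluation error apply \eqref{eq:evaluation-guarantee} to $\pi_{\hatf_{m+1}}$ combined with \eqref{eq:inductive-hypothesis-2}: this contributes $l_1\sqrt{\xi_{m+1}}+(l_2+\tfrac{l_3C_3}{C'})\sqrt{\alpha_m\xi_{m+1}}$ (no $\Reg_{\Pi}$ term). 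Summing the three contributions gives $(1+\frac{4l_3}{3C'})\Reg_{\Pi}(\pi)\leq\frac{7}{6}\Reg_{\Pi}(\pi)$ on the right along with the advertised noise terms, yielding \eqref{eq:inductive-implication-2}. The only delicate bookkeeping is verifying the three different ways $z_{\barm,m+1}$ combines with the $\xi$ and $\alpha$ factors in the sum (one ``for free'' against $\Reg_{\Pi}(\pi)$, one to reshape $\sqrt{\xi_{\barm}}$ into $\sqrt{\xi_{m+1}}$, one to reshape $\sqrt{\alpha_{\barm-1}\xi_{\barm}}$ into $\sqrt{\alpha_m\xi_{m+1}}$), and choosing $C'\geq 8l_3$ so the contraction produces the clean $6/5$ and $7/6$ constants.
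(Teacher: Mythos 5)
Your proposal is correct and follows essentially the same route as the paper's proof: the same two triangle-inequality decompositions (using $R_{\hatf_{m+1}}(\pi_{\hatf_{m+1}})\geq R_{\hatf_{m+1}}(\pi^*)$ for the first implication and $R(\pi^*)\geq R(\pi_{\hatf_{m+1}})$ for the second), the same three ways of absorbing $z_{\barm,m+1}$ into the $\xi$ and $\alpha$ factors, and the same contraction via $C'\geq 8l_3$ to obtain the $6/5$ and $7/6$ constants. The only cosmetic difference is that you factor out the ``pointwise'' evaluation-error bound as a standalone step, whereas the paper performs the substitution of the inductive hypotheses inline; the resulting coefficients agree exactly.
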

\begin{proof}
Consider any policy $\pi\in \Pi'$. Suppose \eqref{eq:evaluation-guarantee} and \eqref{eq:inductive-hypothesis-1} hold. We first show \eqref{eq:bound_eq_1_upper_bounding_Reg}.
\begin{equation}
\label{eq:bound_eq_1_upper_bounding_Reg}
    \begin{aligned}
    &\Reg_{\Pi}(\pi) - \Reg_{\hatf_{m+1}}(\pi)\\
    &= R(\pi^*) - R(\pi) - R_{\hatf_{m+1}}(\pi_{\hatf_{m+1}}) + R_{\hatf_{m+1}}(\pi)\\
    &\leq R(\pi^*) - R_{\hatf_{m+1}}(\pi^*) + (R_{\hatf_{m+1}}(\pi) - R(\pi) )\\
    &\stackrel{(i)}{\leq} 2l_1\sqrt{\xi_{m+1}} + 2l_2\sqrt{\alpha_m\xi_{m+1}} + \frac{l_3}{C'}\sum_{\barm\in[m]}\frac{z_{\barm,m+1}}{2\barm^2}(\Reg_{\hatf_{\barm}}(\pi)+\Reg_{\hatf_{\barm}}(\pi^*))\\
    &\stackrel{(ii)}{\leq} 2l_1\sqrt{\xi_{m+1}} + 2l_2\sqrt{\alpha_m\xi_{m+1}}\\ 
    &\;\;\;\;\;\;\;\;\;\;\;\;+ \frac{l_3}{C'}\sum_{\barm\in[m]}\frac{z_{\barm,m+1}}{2\barm^2}\bigg(
    \frac{4}{3}\Reg_{\Pi}(\pi)+ 2C_1\sqrt{\xi_{\barm}} + 2C_2\sqrt{\alpha_{\barm-1}\xi_{\barm}} \bigg)\\
    &= 2l_1\sqrt{\xi_{m+1}} + 2l_2\sqrt{\alpha_m\xi_{m+1}}\\ 
    &\;\;\;\;\;\;\;\;\;\;\;\;+ \frac{l_3}{C'}\sum_{\barm\in[m]}\frac{1}{2\barm^2}\bigg(
    \frac{4z_{\barm,m+1}}{3}\Reg_{\Pi}(\pi)+ \frac{2C_1\sqrt{\xi_{m+1}}}{\sqrt{\alpha_{\barm-1}/\alpha_m}} + 2C_2\sqrt{\alpha_{m}\xi_{m+1}} \bigg)\\
    & \stackrel{(iii)}{\leq}\Big(2l_1 + \frac{2l_3C_1}{C'}\Big)\sqrt{\xi_{m+1}} + \Big(2l_2 + \frac{2l_3C_2}{C'}\Big)\sqrt{\alpha_m\xi_{m+1}} + \frac{4}{3}\frac{l_3}{C'}\Reg_{\Pi}(\pi) ,
\end{aligned}
\end{equation}
Where (i) follows from \eqref{eq:evaluation-guarantee}, (ii) follows from \eqref{eq:inductive-hypothesis-1} and from $\Reg_{\Pi}(\pi^*)=0$, and finally (iii) follows from $z_{\barm,m+1}\leq 1$, $\alpha_m\leq \alpha_{\barm-1}$, and $\sum_{\barm\in[m]}1/(2\barm^2)\leq 1$. Now \eqref{eq:bound_eq_1_upper_bounding_Reg} immediately implies \eqref{eq:bound_eq_2_upper_bounding_Reg}.
\begin{equation}
\label{eq:bound_eq_2_upper_bounding_Reg}
    \begin{aligned}
    &\Big(1-\frac{4l_3}{3C'}\Big)\Reg_{\Pi}(\pi) \leq \Reg_{\hatf_{m+1}}(\pi) + \Big(2l_1 + \frac{2l_3C_1}{C'}\Big)\sqrt{\xi_{m+1}} + \Big(2l_2 + \frac{2l_3C_2}{C'}\Big)\sqrt{\alpha_m\xi_{m+1}} \\
    \stackrel{(i)}{\implies} & \Reg_{\Pi}(\pi) \leq \frac{6}{5}\Reg_{\hatf_{m+1}}(\pi) + \frac{12}{5}\Big(l_1 + \frac{l_3C_1}{C'}\Big)\sqrt{\xi_{m+1}} + \frac{12}{5}\Big(l_2 + \frac{l_3C_2}{C'}\Big)\sqrt{\alpha_m\xi_{m+1}} 
\end{aligned}
\end{equation}
Where (i) follows from the fact that $C'\geq 8l_3$. Similar to \eqref{eq:bound_eq_1_upper_bounding_Reg}, we will now show \eqref{eq:bound_eq_1_lower_bounding_Reg}.
\begin{equation}
\label{eq:bound_eq_1_lower_bounding_Reg}
    \begin{aligned}
    &\Reg_{\hatf_{m+1}}(\pi) - \Reg_{\Pi}(\pi)\\
    & = R_{\hatf_{m+1}}(\pi_{\hatf_{m+1}}) - R_{\hatf_{m+1}}(\pi) - (R(\pi^*)-R(\pi))\\
    &\leq \big(R_{\hatf_{m+1}}(\pi_{\hatf_{m+1}}) -R(\pi_{\hatf_{m+1}})\big) +\big( R(\pi)-R_{\hatf_{m+1}}(\pi)\big)\\
    & \stackrel{(i)}{\leq} 2l_1\sqrt{\xi_{m+1}} + 2l_2\sqrt{\alpha_m\xi_{m+1}} + \frac{l_3}{C'}\sum_{\barm\in[m]}\frac{z_{\barm,m+1}}{2\barm^2}(\Reg_{\hatf_{\barm}}(\pi_{\hatf_{m+1}})+\Reg_{\hatf_{\barm}}(\pi))\\
    &\stackrel{(ii)}{\leq} 2l_1\sqrt{\xi_{m+1}} + 2l_2\sqrt{\alpha_m\xi_{m+1}}\\ 
    &\;\;\;\;\;\;\;\;\;\;\;\;+ \frac{l_3}{C'}\sum_{\barm\in[m]}\frac{z_{\barm,m+1}}{2\barm^2}\bigg(
    \frac{4}{3}\Reg_{\Pi}(\pi)+ C_1\sqrt{\xi_{\barm}} + (C_2+C_3)\sqrt{\alpha_{\barm-1}\xi_{\barm}} \bigg)\\
    &= 2l_1\sqrt{\xi_{m+1}} + 2l_2\sqrt{\alpha_m\xi_{m+1}}\\ 
    &\;\;\;\;\;\;\;\;\;\;\;\;+ \frac{l_3}{C'}\sum_{\barm\in[m]}\frac{1}{2\barm^2}\bigg(
    \frac{4z_{\barm,m+1}}{3}\Reg_{\Pi}(\pi)+ \frac{C_1\sqrt{\xi_{m+1}}}{\sqrt{\alpha_{\barm-1}/\alpha_m}} + (C_2+C_3)\sqrt{\alpha_{m}\xi_{m+1}} \bigg)\\
    & \stackrel{(iii)}{\leq} \Big(2l_1 + \frac{l_3C_1}{C'}\Big)\sqrt{\xi_{m+1}} + \Big(2l_2 + \frac{l_3(C_2+C_3)}{C'}\Big)\sqrt{\alpha_m\xi_{m+1}} + \frac{4l_3}{3C'}\Reg_{\Pi}(\pi)
    \end{aligned}
\end{equation}
Where (i) follows from \eqref{eq:evaluation-guarantee}, (ii) follows from \eqref{eq:inductive-hypothesis-1}, \eqref{eq:inductive-hypothesis-2}, and (iii) follows from $z_{\barm,m+1}\leq 1$, $\alpha_m\leq \alpha_{\barm-1}$, and $\sum_{\barm\in[m]}1/(2\barm^2)\leq 1$. Now \eqref{eq:bound_eq_1_lower_bounding_Reg} immediately implies \eqref{eq:bound_eq_2_lower_bounding_Reg}.
\begin{equation}
\label{eq:bound_eq_2_lower_bounding_Reg}
    \begin{aligned}
    &\Reg_{\hatf_{m+1}}(\pi) \leq \Big(1+\frac{4l_3}{3C'}\Big)\Reg_{\Pi}(\pi)  + \Big(2l_1 + \frac{l_3C_1}{C'}\Big)\sqrt{\xi_{m+1}} + \Big(2l_2 + \frac{l_3(C_2+C_3)}{C'}\Big)\sqrt{\alpha_m\xi_{m+1}}\\
    &\stackrel{(i)}{\implies}  \Reg_{\hatf_{m+1}}(\pi) \leq \frac{7}{6}\Reg_{\Pi}(\pi) + \Big(2l_1 + \frac{l_3C_1}{C'}\Big)\sqrt{\xi_{m+1}} + \Big(2l_2 + \frac{l_3(C_2+C_4)}{C'}\Big)\sqrt{\alpha_m\xi_{m+1}}.
\end{aligned}
\end{equation}
Where (i) follows from the fact that $C'\geq 8l_3$.
\end{proof}

\begin{restatable}{lemma}{lemRegret}
\label{lemma:regret-under-algsafe-epoch}
Suppose $\eventModelOracle$ and $\eventRiskEstimation$ hold. Now for any epoch $m\in[\msafealg-1]$, we then have that \eqref{eq:regret-bounds-within-algsafe-epoch} holds.
\begin{equation}
\label{eq:regret-bounds-within-algsafe-epoch}
    \begin{aligned}
             \forall \pi\in\Pi,\; &\Reg_{\Pi}(\pi)\leq \frac{4}{3}\Reg_{\hatf_{m+1}}(\pi) + 6.5\sqrt{\xi_{m+1}} + 12\sqrt{\alpha_{m}\xi_{m+1}},\\
             &\Reg_{\hatf_{m+1}}(\pi)\leq \frac{4}{3}\Reg_{\Pi}(\pi) + 6.5\sqrt{\xi_{m+1}} + 12\sqrt{\alpha_{m}\xi_{m+1}}.
    \end{aligned}
\end{equation}
Moreover when $m\in[\msafe]$, we have \eqref{eq:regret-bounds-within-algsafe-epoch} holds for all policies $\pi\in\calP$. 
\end{restatable}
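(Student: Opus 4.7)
The plan is to use \Cref{lemma:inductive-argument} as the workhorse and proceed by strong induction on the epoch index $m$, simultaneously tracking the $\Pi$ version and (when $m\in[\msafe]$) the stronger $\calP$ version of \eqref{eq:regret-bounds-within-algsafe-epoch}. The evaluation guarantee \eqref{eq:evaluation-guarantee} required by the inductive engine is supplied by \Cref{lemma:bound_reward_diff_continuous_refined} with $(l_1,l_2,l_3,C')=(0,1,1/2,20)$ and $\Pi'=\calP$ in the safe range, and by \Cref{lemma:test-implication} with $(l_1,l_2,l_3,C')=(2.2,3.1,3/2,20)$ and $\Pi'=\Pi$ outside it; both satisfy $C'\geq 8l_3$. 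The inductive hypothesis \eqref{eq:inductive-hypothesis-1} will be carried with $C_1=6.5$ and $C_2=12$, matching the target coefficients. The base case $m=1$ is free: because $\hatf_1\equiv 0$ gives $\Reg_{\hatf_1}(\cdot)\equiv 0$, \eqref{eq:inductive-hypothesis-1} and \eqref{eq:inductive-hypothesis-2} hold at $\barm=1$ trivially for any choice of constants.

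For the inductive step at epoch $m$, I will first invoke \eqref{eq:inductive-implication-1} to obtain the first inequality of \eqref{eq:regret-bounds-within-algsafe-epoch}. In both regimes the resulting coefficient of $\Reg_{\hatf_{m+1}}(\pi)$ is $6/5\leq 4/3$, while the two error coefficients evaluate to $\tfrac{12}{5}(l_1+l_3 C_1/C')\leq 6.5$ and $\tfrac{12}{5}(l_2+l_3 C_2/C')\leq 12$. Next, because $\pi_{\hatf_{m+1}}\in\Pi$ by the WLOG convention of \Cref{sec:scb-setting}, I specialize the just-derived bound to $\pi=\pi_{\hatf_{m+1}}$ and use $\Reg_{\hatf_{m+1}}(\pi_{\hatf_{m+1}})=0$ to get $\Reg_{\Pi}(\pi_{\hatf_{m+1}})\leq 16.05\sqrt{\alpha_m\xi_{m+1}}$ (worst case). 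Applying the inductive hypothesis \eqref{eq:inductive-hypothesis-1} to $\pi_{\hatf_{m+1}}$ and absorbing every term into $\sqrt{\alpha_{\barm-1}\xi_{\barm}}$ via the monotonicity relations $\alpha_m\leq \alpha_{\barm-1}$, $\xi_{m+1}\leq \xi_{\barm}$, and $\alpha_{\barm-1}\geq 1$ then yields \eqref{eq:inductive-hypothesis-2} with $C_3=40$. Feeding this into \eqref{eq:inductive-implication-2} produces coefficients $7/6\leq 4/3$, $2l_1+l_3 C_1/C'\leq 6.5$, and $2l_2+l_3(C_2+C_3)/C'=10.1\leq 12$ (in the general case; the safe case gives smaller numbers), closing the induction and delivering the second inequality.

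The extension from $\Pi$ to $\calP$ when $m\in[\msafe]$ is automatic, since the entire argument in the safe regime is run with $\Pi'=\calP$ and the inductive hypothesis from prior safe epochs is itself a $\calP$-statement. The main structural obstacle is not hidden in the numerics but in the internal ordering of a single inductive step: the constant $C_3$ controlling $\Reg_{\hatf_{\barm}}(\pi_{\hatf_{m+1}})$ is \emph{not} provided by the outer induction (it concerns the current epoch's greedy policy evaluated on past models), and must therefore be derived \emph{within} the step, by first invoking \eqref{eq:inductive-implication-1}, specializing it to $\pi_{\hatf_{m+1}}$, and only then calling \eqref{eq:inductive-implication-2}. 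The remaining work is the numerical verification that all coefficients emerging from both regimes fit within the claimed $4/3$, $6.5$, and $12$.
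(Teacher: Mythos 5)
Your proposal is correct and follows essentially the same route as the paper's proof: induction on $m$ feeding \Cref{lemma:test-implication} (with $l_1=2.2$, $l_2=3.1$, $l_3=1.5$, $C'=20$) into \Cref{lemma:inductive-argument}, deriving \eqref{eq:inductive-implication-1} first, specializing it to $\pi_{\hatf_{m+1}}$ to obtain $C_3$ (you get $40$, the paper gets $43.2$ from slightly looser rounding), and then invoking \eqref{eq:inductive-implication-2}. The only difference is that you explicitly run the $m\in[\msafe]$ branch through \Cref{lemma:bound_reward_diff_continuous_refined} with $\Pi'=\calP$, which the paper's written proof leaves implicit; this is a completeness refinement rather than a different argument.
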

\begin{proof}
Note that \eqref{eq:regret-bounds-within-algsafe-epoch} trivially holds for $m=0$. We will now use an inductive argument. Consider any epoch $m\in[\msafealg]$. As an inductive hypothesis, let us assume \eqref{eq:inductive-hypothesis-within-algsafe-epoch} holds. (i.e. \eqref{eq:regret-bounds-within-algsafe-epoch} holds for epoch $m-1$.)
\begin{equation}
\label{eq:inductive-hypothesis-within-algsafe-epoch}
    \begin{aligned}
             & \forall \pi\in\Pi, \barm\in[m],\\ 
             &\Reg_{\Pi}(\pi)\leq \frac{4}{3}\Reg_{\hatf_{\barm}}(\pi) + 6.5\sqrt{\xi_{\barm}} + 12\sqrt{\alpha_{\barm-1}\xi_{\barm}},\\
             &\Reg_{\hatf_{\barm}}(\pi)\leq \frac{4}{3}\Reg_{\Pi}(\pi) + 6.5\sqrt{\xi_{\barm}} + 12\sqrt{\alpha_{\barm-1}\xi_{\barm}}.
    \end{aligned}
\end{equation}
Hence from \eqref{eq:inductive-hypothesis-within-algsafe-epoch}, we have \eqref{eq:inductive-hypothesis-1} holds with $C_1=6.5$ and $C_2=12$. Since $m\in[\msafealg]$, from \Cref{lemma:test-implication}, we have \eqref{eq:evaluation-guarantee-under-algsafe-epoch} holds.
\begin{equation}
\label{eq:evaluation-guarantee-under-algsafe-epoch}
    \begin{aligned}
        &\forall \pi\in\Pi\cup\{p_{m+1}\}, \\
        &|R_{\hatf_{m+1}}(\pi)-R(\pi)| \leq \frac{22}{10}\sqrt{\xi_{m+1}} + \frac{31}{10}\sqrt{\alpha_m\xi_{m+1}} + \frac{3}{40}\sum_{\barm\in[m]}\frac{z_{\barm,m+1}\Reg_{\hatf_{\barm}}(\pi)}{2\barm^2}\\ 
    \end{aligned}
\end{equation}
Hence from \eqref{eq:evaluation-guarantee-under-algsafe-epoch}, we have \eqref{eq:evaluation-guarantee} holds with $l_1=2.2$, $l_2=3.1$, $l_3=1.5$, $C' = 20$, and $\Tilde{\Pi}=\Pi$. Hence from \Cref{lemma:inductive-argument}, we have \eqref{eq:inductive-implication-within-algsafe-epoch-1} holds.
\begin{equation}
\label{eq:inductive-implication-within-algsafe-epoch-1}
    \begin{aligned}
             &\forall \pi\in\Pi,\\  
             &\Reg_{\Pi}(\pi) \leq \frac{6}{5}\Reg_{\hatf_{m+1}}(\pi) + \frac{12}{5}\Big(\frac{22}{10} + \frac{1.5*6.5}{20}\Big)\sqrt{\xi_{m+1}} + \frac{12}{5}\Big(\frac{31}{10} + \frac{1.5*12}{20}\Big)\sqrt{\alpha_m\xi_{m+1}} \\
             &= \frac{6}{5}\Reg_{\hatf_{m+1}}(\pi) + 6.45\sqrt{\xi_{m+1}} + 9.6\sqrt{\alpha_m\xi_{m+1}} \\
             &\leq \frac{4}{3}\Reg_{\hatf_{m+1}}(\pi) + 6.5\sqrt{\xi_{m+1}} + 12\sqrt{\alpha_m\xi_{m+1}} 
    \end{aligned}
\end{equation}
Now from \eqref{eq:inductive-hypothesis-within-algsafe-epoch} and \eqref{eq:inductive-implication-within-algsafe-epoch-1}, we have \eqref{eq:bounding-C3} holds. %
\begin{equation}
\label{eq:bounding-C3}
    \begin{aligned}
             &\forall \barm\in[m],\\  
             &\Reg_{\hatf_{\barm}}(\pi_{\hatf_{m+1}}) \leq \frac{4}{3}\Reg_{\Pi}(\pi_{\hatf_{m+1}}) + 6.5\sqrt{\xi_{\barm}} + 12\sqrt{\alpha_{\barm-1}\xi_{\barm}} \\
             &\leq \frac{4}{3}\bigg(0 + 6.5\sqrt{\xi_{m+1}} + 12\sqrt{\alpha_m\xi_{m+1}} \bigg) + 6.5\sqrt{\xi_{\barm}} + 12\sqrt{\alpha_{\barm-1}\xi_{\barm}} \\
             &\leq 43.2\sqrt{\alpha_{\barm-1}\xi_{\barm}}  
    \end{aligned}
\end{equation}
Hence from \eqref{eq:bounding-C3}, we have \eqref{eq:inductive-hypothesis-2} holds with $C_3 = 43.2$. Therefore from \Cref{lemma:inductive-argument} we have \eqref{eq:inductive-implication-within-algsafe-epoch-2}.
\begin{equation}
\label{eq:inductive-implication-within-algsafe-epoch-2}
    \begin{aligned}
             &\forall \pi\in\Pi,\\ &\Reg_{\hatf_{m+1}}(\pi) \leq \frac{7}{6}\Reg_{\Pi}(\pi) + \Big(2l_1 + \frac{l_3C_1}{C'}\Big)\sqrt{\xi_{m+1}} + \Big(2l_2 + \frac{l_3(C_2+C_3)}{C'}\Big)\sqrt{\alpha_m\xi_{m+1}}\\
             &= \frac{7}{6}\Reg_{\Pi}(\pi) + \Big(2*2.2 + \frac{1.5*6.5}{20}\Big)\sqrt{\xi_{m+1}} + \Big(2*3.1 + \frac{1.5(12+43.2)}{20}\Big)\sqrt{\alpha_m\xi_{m+1}}\\
             & = \frac{7}{6}\Reg_{\Pi}(\pi) + 4.8875\sqrt{\xi_{m+1}} + 10.34\sqrt{\alpha_m\xi_{m+1}}\\
             & \leq \frac{4}{3}\Reg_{\Pi}(\pi) + 6.5\sqrt{\xi_{m+1}} + 12\sqrt{\alpha_m\xi_{m+1}}
    \end{aligned}
\end{equation}
From \eqref{eq:inductive-implication-within-algsafe-epoch-1} and \eqref{eq:inductive-implication-within-algsafe-epoch-2}, we have \eqref{eq:regret-bounds-within-algsafe-epoch} holds for epoch $m$. This completes our inductive argument.
\end{proof}

An immediate implication of \Cref{lemma:regret-under-algsafe-epoch} is that we have $\Reg_{\hatf_m}(\pi^*)\leq U_m$ for all $m\in [\msafealg]$. Hence, from \Cref{lemma:bound_v_continuous} and \Cref{lemma:proving-condition-alpham}, we have \eqref{eq:condition_alpham_inductive} holds. 
\begin{equation}
\label{eq:condition_alpham_inductive}
   V(p_{m},\pi^*) \leq \frac{\E[\mu( C_{m}(x, \propThreshold, \eta_{m}))]}{1-\propThreshold} + \frac{K}{\eta_{m}} \leq \alpha_m,\; \forall m\in[\msafealg].
\end{equation}

\subsection{Bounding Exploration and Cumulative Regret}\label{sec:bounding-exploration-n-cumulative-regret}

This sub-section leverages the structure of the kernel $p_{m+1}$, and the guarantees in Lemmas \ref{lemma:test-implication} and \ref{lemma:regret-under-algsafe-epoch} to bound the expected regret during exploration (\Cref{lemma:exploration-regret-under-algsafe-epoch-with-benchmark}). Then, summing up these exploration regret bounds, we get our cumulative regret bound in \Cref{theorem:Cumulative-Regret-for-RAPR}. We start with \Cref{lemma:model-exploration-regret-under-algsafe-epoch} which leverages structure in $p_m$ to bound $\Reg_{\hatf_{\barm}}(p_m)$ for any $\barm\in[m]$.

\begin{restatable}{lemma}{lemExplorationModelRegret}
\label{lemma:model-exploration-regret-under-algsafe-epoch}
For any pair of epochs $m\in[\msafealg+1]$ and $\barm\in[m]$, we have that \eqref{eq:model-exploration-regret-under-algsafe-epoch} holds. 
\begin{equation}
\label{eq:model-exploration-regret-under-algsafe-epoch}
    \begin{aligned}
             \Reg_{\hatf_{\barm}}(p_m)\leq 15.2\sqrt{\xi_{\barm}}+28\sqrt{\alpha_{\barm-1}\xi_{\barm}}+2\barm^2\eta_mU_{\barm}\bigg(\frac{1}{\propThreshold}+ \ln\frac{\propThreshold}{2\barm^2\eta_m U_{\barm}}\bigg)
    \end{aligned}
\end{equation}
\end{restatable}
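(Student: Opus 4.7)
The plan is to leverage the explicit mixture structure of $p_m$ from \eqref{eq:define_pm_continuous_2}: sampling from $p_m(\cdot|x)$ is equivalent to drawing $\beta\sim\Unif[0,1]$ and then sampling an arm uniformly from $C_m(x,\min(\beta,\propThreshold)/\eta_m)$. Denoting this conditional uniform kernel by $q_{\beta,m}$, linearity of expectation gives
\begin{equation*}
\Reg_{\hatf_{\barm}}(p_m) = (1-\propThreshold)\,\Reg_{\hatf_{\barm}}(q_{\propThreshold,m}) + \int_0^{\propThreshold}\Reg_{\hatf_{\barm}}(q_{\beta,m})\,d\beta.
\end{equation*}
For each $\beta\in(0,\propThreshold]$, I bound $\Reg_{\hatf_{\barm}}(q_{\beta,m})$ by splitting the conformal arm set via the disjoint union $C_m(x,\beta/\eta_m) = \bar{C}_m(x,\beta/\eta_m) \sqcup (\pi_{\hatf_m}(x)\setminus\bar{C}_m(x,\beta/\eta_m))$. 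Every $a\in\bar{C}_m(x,\beta/\eta_m)$ lies in $\tilde{C}_{\barm}(x,\beta/(2\barm^2\eta_m))$ by \Cref{def:confidence_set}, so the pointwise $\hatf_{\barm}$-regret is at most $2\barm^2\eta_m U_{\barm}/\beta$ (and trivially at most~$1$). For arms in $\pi_{\hatf_m}(x)\setminus\bar{C}_m$, I use $\mu(\pi_{\hatf_m}(x))=1$ together with $\mu(C_m)\geq 1$ to dominate the averaged regret over this slice by the full expected $\hatf_{\barm}$-regret of the policy $\pi_{\hatf_m}$. Combining these two cases and taking expectation over $x$ yields the pointwise-in-$\beta$ bound
\begin{equation*}
\Reg_{\hatf_{\barm}}(q_{\beta,m}) \leq \Reg_{\hatf_{\barm}}(\pi_{\hatf_m}) + \min\!\Big(1,\frac{2\barm^2\eta_m U_{\barm}}{\beta}\Big).
\end{equation*}

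Next I carry out the $\beta$ integration. Writing $c=2\barm^2\eta_m U_{\barm}$, in the regime $c<\propThreshold$ one has $\int_0^{\propThreshold}\min(1,c/\beta)\,d\beta = c + c\ln(\propThreshold/c)$, and the boundary mixture $(1-\propThreshold)\min(1,c/\propThreshold)=(1-\propThreshold)c/\propThreshold$ contributes exactly the residual $c/\propThreshold - c$; summing produces the stated $c\bigl(1/\propThreshold + \ln(\propThreshold/c)\bigr)$, while the regime $c\geq\propThreshold$ is absorbed by the trivial bound $\Reg_{\hatf_{\barm}}(q_{\beta,m})\leq 1$. What remains is to bound $\Reg_{\hatf_{\barm}}(\pi_{\hatf_m})$ by $15.2\sqrt{\xi_{\barm}} + 28\sqrt{\alpha_{\barm-1}\xi_{\barm}}$; this mirrors the derivation of the constant $C_3$ in \eqref{eq:bounding-C3}. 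Specifically, \Cref{lemma:regret-under-algsafe-epoch} gives $\Reg_{\hatf_{\barm}}(\pi_{\hatf_m}) \leq \tfrac{4}{3}\Reg_{\Pi}(\pi_{\hatf_m}) + 6.5\sqrt{\xi_{\barm}} + 12\sqrt{\alpha_{\barm-1}\xi_{\barm}}$, and a second application (using $\Reg_{\hatf_m}(\pi_{\hatf_m})=0$) gives $\Reg_{\Pi}(\pi_{\hatf_m}) \leq 6.5\sqrt{\xi_m}+12\sqrt{\alpha_{m-1}\xi_m}$; monotonicity of $\xi_\cdot$ and $\alpha_\cdot$ together with $\barm\leq m$ collapses these into the advertised constants $15.2$ and $28$.

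The main obstacle I anticipate is the set-theoretic decomposition of $C_m$ as a non-disjoint union of $\pi_{\hatf_m}(x)$ and $\bar{C}_m$, combined with the measure-theoretic bookkeeping required so that the regret contribution from the $\pi_{\hatf_m}(x)\setminus\bar{C}_m$ slice---whose mass depends on both $x$ and $\beta$---can be globally dominated by a single term of the form $\Reg_{\hatf_{\barm}}(\pi_{\hatf_m})$ after taking expectation over the context. The $\beta$-integration is otherwise routine, but one must carefully keep the boundary mixture at $\beta=\propThreshold$ in the book to recover the factor $1/\propThreshold$ rather than the weaker $1$ that a naive integration would yield, and one must handle the edge case $m=\msafealg+1$ by noting that the inductive $C_3$-style bound still applies to the relevant models $\hatf_{\barm}$ with $\barm\leq\msafealg$ while $\Reg_{\hatf_m}(\pi_{\hatf_m})=0$ handles $\barm=m$.
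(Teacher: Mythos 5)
Your proposal is correct and follows essentially the same route as the paper's proof: decompose $p_m$ via its $\beta$-mixture representation, bound the regret of arms in the conformal set pointwise by $\min(1,2\barm^2\eta_m U_{\barm}/\beta)$ using \Cref{def:confidence_set}, absorb the $\pi_{\hatf_m}(x)$ slice into $\Reg_{\hatf_{\barm}}(\pi_{\hatf_m})$ (using $\mu(\pi_{\hatf_m}(x))=1$), perform the same $\beta$-integration, and control $\Reg_{\hatf_{\barm}}(\pi_{\hatf_m})$ by two applications of \Cref{lemma:regret-under-algsafe-epoch} together with $\Reg_{\hatf_m}(\pi_{\hatf_m})=0$, yielding the identical constants $15.2$ and $28$. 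The only difference is the immaterial choice of partition ($C_m$ into $\bar{C}_m$ and $\pi_{\hatf_m}\setminus\bar{C}_m$ versus the paper's split of $\A$ into $\pi_{\hatf_m}$ and its complement).
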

\begin{proof}
We first make the following observation.
\begin{equation}
\label{eq:kernel-to-CAS}
    \begin{aligned}
        &\E_{x\sim D_x}\E_{a\sim p_m(a|x)}[I(a\notin \pi_{\hatf_m}(x))\cdot(\hatf_{\barm}(x,\pi_{\hatf_{\barm}}(x))-\hatf_{\barm}(x,a))]\\
        &=\E_{x\sim D_x}\bigg[\int_{a\in\A\setminus \pi_{\hatf_m}(x)}(\hatf_{\barm}(x,\pi_{\hatf_{\barm}}(x))-\hatf_{\barm}(x,a))p_m(a|x) d\mu(a)\bigg]\\
        &\stackrel{(i)}{=}\E_{x\sim D_x}\bigg[\int_{a\in\A\setminus \pi_{\hatf_m}(x)}\int_{\beta\in[0,1]}(\hatf_{\barm}(x,\pi_{\hatf_{\barm}}(x))-\hatf_{\barm}(x,a))\frac{I[a\in C_{m}(x,\min(\beta,\propThreshold)/\eta_{m}) ]}{\mu\big(C_{m}(x,\min(\beta,\propThreshold)/\eta_{m})\big)} \mbox{d}\beta \mbox{d}\mu(a)\bigg]\\
        &\stackrel{(ii)}{\leq}\E_{x\sim D_x}\bigg[\int_{\beta\in[0,1]}\int_{a\in\A\setminus \pi_{\hatf_m}(x)}\min\bigg(1, \frac{2\barm^2\eta_mU_{\barm}}{\min(\beta,\propThreshold)}\bigg)\frac{I[a\in C_{m}(x,\min(\beta,\propThreshold)/\eta_{m}) ]}{\mu\big(C_{m}(x,\min(\beta,\propThreshold)/\eta_{m})\big)} \mbox{d}\mu(a)\mbox{d}\beta\bigg]\\
        &\leq \int_{\beta\in[0,1]}\min\bigg(1, \frac{2\barm^2\eta_mU_{\barm}}{\min(\beta,\propThreshold)}\bigg)\mbox{d}\beta  \leq (1-\propThreshold)\frac{2\barm^2\eta_m U_{\barm}}{\propThreshold} +\int_0^{\propThreshold}\min\bigg(1, \frac{2\barm^2\eta_mU_{\barm}}{\beta}\bigg)\mbox{d}\beta
    \end{aligned}
\end{equation}
where (i) follows from the definition of $p_m$ given in \eqref{eq:define_pm_continuous}. (ii) follows from the fact that for any $\zeta\in(0,1)$ and $a\in C_{m}(x,\zeta)\setminus \pi_{\hatf_m}(x)$ we have $\hatf_{\barm}(x,\pi_{\hatf_{\barm}}(x))-\hatf_{\barm}(x,a)\leq \min(1,2\barm^2/\zeta)$, since $C_{m}(x,\zeta)\setminus \pi_{\hatf_m}(x) \subseteq \bar{C}_{m}(x,\zeta) \subseteq \tilde{C}_{\barm}(x,\zeta/(2\barm^2))$ by \Cref{def:confidence_set}, . We now bound $\Reg_{\hatf_{\barm}}(p_{m})$.
\begin{equation}
    \begin{aligned}
    &\Reg_{\hatf_{\barm}}(p_{m}) = \E_{x\sim D_x}\E_{a\sim p_m(a|x)}[\hatf_{\barm}(x,\pi_{\hatf_{\barm}}(x))-\hatf_{\barm}(x,a)] \\
    &= \E_{x\sim D_x}\E_{a\sim p_m(a|x)}[(I(a\in \pi_{\hatf_m}(x))+I(a\notin \pi_{\hatf_m}(x)))\cdot(\hatf_{\barm}(x,\pi_{\hatf_{\barm}}(x))-\hatf_{\barm}(x,a))] \\
    &\stackrel{(i)}{\leq} \Reg_{\hatf_{\barm}}(\pi_{\hatf_m})+\bigg(
              (1-\propThreshold)\frac{2\barm^2\eta_m U_{\barm}}{\propThreshold} +\int_0^{\propThreshold}\min\bigg(1, \frac{2\barm^2\eta_mU_{\barm}}{\beta}\bigg)\mbox{d}\beta 
              \bigg)\\
    & \leq \Reg_{\hatf_{\barm}}(\pi_{\hatf_m})+\bigg(
              \frac{2\barm^2 \eta_m  U_{\barm}(1-\propThreshold)}{\propThreshold} + 2\barm^2\eta_mU_{\barm} + 2\barm^2\eta_mU_{\barm}\int_{2\barm^2\eta_mU_{\barm}}^{\propThreshold} \frac{1}{\beta}\mbox{d}\beta
              \bigg)\\
              &= \Reg_{\hatf_{\barm}}(\pi_{\hatf_m})+2\barm^2\eta_mU_{\barm}\bigg(\frac{1}{\propThreshold}+ \ln\frac{\propThreshold}{2\barm^2\eta_m U_{\barm}}\bigg)\\
              &\stackrel{(ii)}{\leq} \frac{4}{3}\Reg_{\Pi}(\pi_{\hatf_m})+6.5\sqrt{\xi_{\barm}}+12\sqrt{\alpha_{\barm-1}\xi_{\barm}}+2\barm^2\eta_mU_{\barm}\bigg(\frac{1}{\propThreshold}+ \ln\frac{\propThreshold}{2\barm^2\eta_m U_{\barm}}\bigg)\\
              &\stackrel{(iii)}{\leq} \frac{4}{3}\big(6.5\sqrt{\xi_{m}}+12\sqrt{\alpha_{m-1}\xi_{m}}\big)+6.5\sqrt{\xi_{\barm}}+12\sqrt{\alpha_{\barm-1}\xi_{\barm}}\\
              &\;\;\;\;\;\;\;\;\;\;\;\;+2\barm^2\eta_mU_{\barm}\bigg(\frac{1}{\propThreshold}+ \ln\frac{\propThreshold}{2\barm^2\eta_m U_{\barm}}\bigg)\\
              &\stackrel{(iv)}{\leq} 15.2\sqrt{\xi_{\barm}}+28\sqrt{\alpha_{\barm-1}\xi_{\barm}}+2\barm^2\eta_mU_{\barm}\bigg(\frac{1}{\propThreshold}+ \ln\frac{\propThreshold}{2\barm^2\eta_m U_{\barm}}\bigg)
    \end{aligned}
\end{equation}
where (i) follows from \eqref{eq:kernel-to-CAS}, %
(ii) follows from \Cref{lemma:regret-under-algsafe-epoch}, (iii) follows from \Cref{lemma:regret-under-algsafe-epoch} and $\Reg_{\hatf_m}(\pi_{\hatf_m})=0$, and (iv) follows from $\barm\leq m$.
\end{proof}
Now from the guarantees in Lemmas \ref{lemma:test-implication}, \ref{lemma:regret-under-algsafe-epoch}, and \ref{lemma:model-exploration-regret-under-algsafe-epoch} we get the following bound on $\Reg_{\Pi}(p_{m+1})$.

\begin{restatable}{lemma}{lemExplorationRegret}
\label{lemma:exploration-regret-under-algsafe-epoch}
Suppose $\eventModelOracle$ and $\eventRiskEstimation$ hold. Now for any epoch $m\in[\msafealg-1]$, we have that \eqref{eq:exploration-regret-bounds-within-algsafe-epoch} holds.
\begin{equation}
\label{eq:exploration-regret-bounds-within-algsafe-epoch}
    \begin{aligned}
             &\Reg_{\Pi}(p_{m+1})\leq  100(m+1)^2\eta_{m+1}\sqrt{\alpha_{m}\xi_{m+1}}\bigg(\frac{1}{\propThreshold}+ \ln\frac{\propThreshold}{40\eta_{m+1} \sqrt{\alpha_{m}\xi_{m+1}}}\bigg)
    \end{aligned}
\end{equation}
\end{restatable}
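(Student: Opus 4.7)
The plan is to bound $\Reg_{\Pi}(p_{m+1})$ by first controlling $\Reg_{\hatf_{m+1}}(p_{m+1})$ directly via \Cref{lemma:model-exploration-regret-under-algsafe-epoch}, then converting this into the desired true-regret bound using the policy evaluation guarantees already established. First, I would apply \Cref{lemma:model-exploration-regret-under-algsafe-epoch} with its outer epoch index shifted to $m+1$ and its inner index $\barm = m+1$; the applicability condition $m+1 \in [\msafealg + 1]$ follows from $m \in [\msafealg - 1]$. Plugging in $U_{m+1} = 20\sqrt{\alpha_m\xi_{m+1}}$ gives
\[
\Reg_{\hatf_{m+1}}(p_{m+1}) \leq 15.2\sqrt{\xi_{m+1}} + 28\sqrt{\alpha_m\xi_{m+1}} + 40(m+1)^2\eta_{m+1}\sqrt{\alpha_m\xi_{m+1}}\left(\tfrac{1}{\propThreshold}+\ln\tfrac{\propThreshold}{40(m+1)^2\eta_{m+1}\sqrt{\alpha_m\xi_{m+1}}}\right),
\]
whose dominant term already matches the target up to constants and an extra $-2\ln(m+1)$ inside the log that only makes the target bound looser.

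Next, using $R_{\hatf_{m+1}}(\pi^*) \leq R_{\hatf_{m+1}}(\pi_{\hatf_{m+1}})$, I would decompose
\[
\Reg_{\Pi}(p_{m+1}) = R(\pi^*) - R(p_{m+1}) \leq \Reg_{\hatf_{m+1}}(p_{m+1}) + |R(\pi^*) - R_{\hatf_{m+1}}(\pi^*)| + |R(p_{m+1}) - R_{\hatf_{m+1}}(p_{m+1})|,
\]
and apply \Cref{lemma:test-implication} to each evaluation error (both policies lie in $\Pi \cup \{p_{m+1}\}$). For the $\pi^*$ error, \Cref{lemma:regret-under-algsafe-epoch} with $\Reg_\Pi(\pi^*)=0$ gives $\Reg_{\hatf_\barm}(\pi^*) \leq 6.5\sqrt{\xi_\barm}+12\sqrt{\alpha_{\barm-1}\xi_\barm}$, so each summand $\Reg_{\hatf_\barm}(\pi^*)/(2\barm^2 U_\barm)$ is $O(1/\barm^2)$ and the whole error is $O(\sqrt{\alpha_m\xi_{m+1}})$. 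For the $p_{m+1}$ error I reapply \Cref{lemma:model-exploration-regret-under-algsafe-epoch} with outer index $m+1$ and each $\barm \in [m]$, producing $\Reg_{\hatf_\barm}(p_{m+1}) \lesssim \sqrt{\alpha_{\barm-1}\xi_\barm} + 2\barm^2\eta_{m+1}U_\barm(1/\propThreshold + \ln(\propThreshold/(2\barm^2\eta_{m+1}U_\barm)))$.

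The main technical hurdle is packaging the resulting sum $\sum_{\barm \in [m]} \ln(\propThreshold/(2\barm^2\eta_{m+1}U_\barm))$ into the target log form. I plan to use the monotonicity $U_\barm \geq U_{m+1} = 20\sqrt{\alpha_m\xi_{m+1}}$ (non-increasing in $\barm$) together with $\barm^2 \geq 1$ to bound each summand by $\ln(\propThreshold/(40\eta_{m+1}\sqrt{\alpha_m\xi_{m+1}}))$, yielding a net contribution of order $m\eta_{m+1}\sqrt{\alpha_m\xi_{m+1}}(1/\propThreshold + \ln(\propThreshold/(40\eta_{m+1}\sqrt{\alpha_m\xi_{m+1}})))$ to the $p_{m+1}$ error, which is smaller than the leading direct-bound term by a factor of $m+1$. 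Adding everything and absorbing constants then gives the claimed inequality with leading constant $100$. Conceptually the argument is just a single application of \Cref{lemma:model-exploration-regret-under-algsafe-epoch} followed by an error cleanup using \Cref{lemma:test-implication} and \Cref{lemma:regret-under-algsafe-epoch}; the only real labor is the constant chasing and the monotonicity/log manipulation needed to match the log argument $\propThreshold/(40\eta_{m+1}\sqrt{\alpha_m\xi_{m+1}})$ in the stated bound.
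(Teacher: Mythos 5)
Your proposal is correct and follows essentially the same route as the paper's proof: the same decomposition of $\Reg_{\Pi}(p_{m+1})$ through $\Reg_{\hatf_{m+1}}(p_{m+1})$ plus two evaluation errors, the same use of \Cref{lemma:test-implication} for those errors, \Cref{lemma:regret-under-algsafe-epoch} for $\Reg_{\hatf_{\barm}}(\pi^*)$, and \Cref{lemma:model-exploration-regret-under-algsafe-epoch} (with outer index $m+1$) both for the direct term and for the residual sum over $\Reg_{\hatf_{\barm}}(p_{m+1})$. The monotonicity-of-$U_{\barm}$ packaging of the log terms and the final constant absorption also mirror the paper's argument.
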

\begin{proof}
Since $m\in[\msafealg]$, from \Cref{lemma:test-implication}, we have \eqref{eq:evaluation-guarantee-under-algsafe-epoch-rep2} holds.
\begin{equation}
\label{eq:evaluation-guarantee-under-algsafe-epoch-rep2}
    \begin{aligned}
        &\forall \pi\in\Pi\cup\{p_{m+1}\}, \\
        &|R_{\hatf_{m+1}}(\pi)-R(\pi)| \leq \frac{22}{10}\sqrt{\xi_{m+1}} + \frac{31}{10}\sqrt{\alpha_m\xi_{m+1}} + \frac{3}{40}\sum_{\barm\in[m]}\frac{z_{\barm,m+1}\Reg_{\hatf_{\barm}}(\pi)}{2\barm^2}\\ 
    \end{aligned}
\end{equation}
We will now bound $\Reg_{\Pi}(p_{m+1})$ in terms of $\Reg_{\hatf_{\barm}}(p_{m+1})$ for $\barm\in[m+1]$.
\begin{equation}
\label{eq:bound_eq_1_upper_bounding_Reg_exploration}
    \begin{aligned}
    &\Reg_{\Pi}(p_{m+1}) - \Reg_{\hatf_{m+1}}(p_{m+1})\\
    &= R(\pi^*) - R(p_{m+1}) - R_{\hatf_{m+1}}(\pi_{\hatf_{m+1}}) + R_{\hatf_{m+1}}(p_{m+1})\\
    &\leq R(\pi^*) - R_{\hatf_{m+1}}(\pi^*) + (R_{\hatf_{m+1}}(p_{m+1}) - R(p_{m+1}) )\\
    &\stackrel{(i)}{\leq} \frac{44}{10}\sqrt{\xi_{m+1}} + \frac{62}{10}\sqrt{\alpha_m\xi_{m+1}} + \frac{3}{40}\sum_{\barm\in[m]}\frac{z_{\barm,m+1}}{2\barm^2}(\Reg_{\hatf_{\barm}}(p_{m+1})+\Reg_{\hatf_{\barm}}(\pi^*))\\
    &\stackrel{(ii)}{\leq} \frac{44}{10}\sqrt{\xi_{m+1}} + \frac{62}{10}\sqrt{\alpha_m\xi_{m+1}} \\
    &\;\;\;\;\;\;\;\;\;\;\;\;+ \frac{3}{40}\sum_{\barm\in[m]}\frac{z_{\barm,m+1}}{2\barm^2}(\Reg_{\hatf_{\barm}}(p_{m+1}) + 6.5\sqrt{\xi_{\barm}}+12\sqrt{\alpha_{\barm-1}\xi_{\barm}})\\
    &\stackrel{(iii)}{\leq} \frac{44}{10}\sqrt{\xi_{m+1}} + \frac{62}{10}\sqrt{\alpha_m\xi_{m+1}}\\ 
    &\;\;\;\;\;\;\;\;\;\;\;\; + \frac{3}{40}\sum_{\barm\in[m]}\frac{1}{2\barm^2}(z_{\barm,m+1}\Reg_{\hatf_{\barm}}(p_{m+1}) + 6.5\sqrt{\xi_{m+1}}+12\sqrt{\alpha_{m}\xi_{m+1}})\\
    &\stackrel{(iv)}{\leq} 4.9\sqrt{\xi_{m+1}} + 7.1\sqrt{\alpha_m\xi_{m+1}}  + \frac{3}{40}\sum_{\barm\in[m]}\frac{z_{\barm,m+1}}{2\barm^2}\Reg_{\hatf_{\barm}}(p_{m+1}).
\end{aligned}
\end{equation}
Where (i) follows from \eqref{eq:evaluation-guarantee-under-algsafe-epoch-rep2}, (ii) follows from \Cref{lemma:regret-under-algsafe-epoch} and from $\Reg_{\Pi}(\pi^*)=0$, (iii) follows from $z_{\barm,m+1}:=\sqrt{\frac{\alpha_m\xi_{m+1}}{\alpha_{\barm-1}\xi_{\barm}}}$ and $\alpha_m\leq \alpha_{\barm-1}$, finally (iv) follows from  $\sum_{\barm\in[m]}1/(2\barm^2)\leq 1$. We now simplify the last term in the upper bound of \eqref{eq:bound_eq_1_upper_bounding_Reg_exploration}.
\begin{equation}
\label{eq:simplify_last_term_in_bound_eq_1_upper_bounding_Reg_exploration}
\begin{aligned}
&\sum_{\barm\in[m]}\frac{z_{\barm,m+1}}{2\barm^2}\Reg_{\hatf_{\barm}}(p_{m+1})\\
& \stackrel{(i)}{\leq} \sum_{\barm\in[m]}\frac{z_{\barm,m+1}}{2\barm^2}\bigg(15.2\sqrt{\xi_{\barm}}+28\sqrt{\alpha_{\barm-1}\xi_{\barm}}+2\barm^2\eta_{m+1}U_{\barm}\bigg(\frac{1}{\propThreshold}+ \ln\frac{\propThreshold}{2\barm^2\eta_{m+1} U_{\barm}}\bigg)  \bigg)\\
& \stackrel{(ii)}{\leq} \sum_{\barm\in[m]}\frac{1}{2\barm^2}\bigg(15.2\sqrt{\xi_{m+1}}+28\sqrt{\alpha_{m}\xi_{m+1}}\\
&\;\;\;\;\;\;\;\;\;\;\;\;\;\;\;\;\;\;\;\;\;\;\;\;+40\barm^2\eta_{m+1}\sqrt{\alpha_{m}\xi_{m+1}}\bigg(\frac{1}{\propThreshold}+ \ln\frac{\propThreshold}{40\eta_{m+1} \sqrt{\alpha_{m}\xi_{m+1}}}\bigg)  \bigg)\\
& \stackrel{(iii)}{\leq} 15.2\sqrt{\xi_{m+1}}+28\sqrt{\alpha_{m}\xi_{m+1}}+20m\eta_{m+1}\sqrt{\alpha_{m}\xi_{m+1}}\bigg(\frac{1}{\propThreshold}+ \ln\frac{\propThreshold}{40\eta_{m+1} \sqrt{\alpha_{m}\xi_{m+1}}}\bigg)
\end{aligned}
\end{equation}
Where (i) follows from \Cref{lemma:model-exploration-regret-under-algsafe-epoch}, (ii) follows from $z_{\barm,m+1}:=\sqrt{\frac{\alpha_m\xi_{m+1}}{\alpha_{\barm-1}\xi_{\barm}}}$, choice of $U_m$, and $\alpha_m\leq \alpha_{\barm-1}$, finally (iii) follows from  $\sum_{\barm\in[m]}1/(2\barm^2)\leq 1$. By combining \eqref{eq:bound_eq_1_upper_bounding_Reg_exploration}, \eqref{eq:simplify_last_term_in_bound_eq_1_upper_bounding_Reg_exploration}, and \Cref{lemma:model-exploration-regret-under-algsafe-epoch}, we get our final result.
\begin{equation}
\label{eq:bound_eq_2_upper_bounding_Reg_exploration}
    \begin{aligned}
    &\Reg_{\Pi}(p_{m+1})\\ 
    & \stackrel{(i)}{\leq} \Reg_{\hatf_{m+1}}(p_{m+1}) + 6.04\sqrt{\xi_{m+1}} + 9.2\sqrt{\alpha_m\xi_{m+1}}\\ 
    &\;\;\;\;\;\;\;\;\;\;\;\;+ 1.5m\eta_{m+1}\sqrt{\alpha_{m}\xi_{m+1}}\bigg(\frac{1}{\propThreshold}+ \ln\frac{\propThreshold}{40\eta_{m+1} \sqrt{\alpha_{m}\xi_{m+1}}}\bigg)  \\
    & \stackrel{(ii)}{\leq} 21.3\sqrt{\xi_{m+1}}+37.2\sqrt{\alpha_{m}\xi_{m+1}}\\
    &\;\;\;\;\;\;\;\;\;\;\;\;+41.5(m+1)^2\eta_{m+1}\sqrt{\alpha_{m}\xi_{m+1}}\bigg(\frac{1}{\propThreshold}+ \ln\frac{\propThreshold}{40\eta_{m+1} \sqrt{\alpha_{m}\xi_{m+1}}}\bigg) 
\end{aligned}
\end{equation}
Where (i) follows from \eqref{eq:bound_eq_1_upper_bounding_Reg_exploration} and \eqref{eq:simplify_last_term_in_bound_eq_1_upper_bounding_Reg_exploration}, and (ii) follows from \Cref{lemma:model-exploration-regret-under-algsafe-epoch}.
\end{proof}

The earlier bound on $\Reg_{\Pi}(p_{m+1})$ now immediately gives us the following bound on $\Reg_{f^*}(p_{m+1})$.

\begin{restatable}{lemma}{lemExplorationRegretWithBenchmark}
\label{lemma:exploration-regret-under-algsafe-epoch-with-benchmark}
Suppose $\eventModelOracle$ and $\eventRiskEstimation$ hold. Now for any epoch $m\in[\msafealg-1]$, we have that \eqref{eq:exploration-regret-bounds-within-algsafe-epoch-with-benchmark} %
\begin{equation}
\label{eq:exploration-regret-bounds-within-algsafe-epoch-with-benchmark}
    \begin{aligned}
             &\Reg_{f^*}(p_{m+1})\leq 2\sqrt{KB} + 100(m+1)^2\eta_{m+1}\sqrt{\alpha_{m}\xi_{m+1}}\bigg(\frac{1}{\propThreshold}+ \ln\frac{\propThreshold}{40\eta_{m+1} \sqrt{\alpha_{m}\xi_{m+1}}}\bigg)
    \end{aligned}
\end{equation}
\end{restatable}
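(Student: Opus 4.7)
The target inequality differs from the bound of \Cref{lemma:exploration-regret-under-algsafe-epoch} only by the additive $2\sqrt{KB}$ term and by replacing the in-class benchmark $\pi^*$ with the global benchmark $\pi_{f^*}$. Concretely, since
\[
\Reg_{f^*}(p_{m+1}) - \Reg_{\Pi}(p_{m+1}) \;=\; R(\pi_{f^*}) - R(\pi^*),
\]
the proof reduces to showing $R(\pi_{f^*}) - R(\pi^*) \le 2\sqrt{KB}$ and then invoking \Cref{lemma:exploration-regret-under-algsafe-epoch}. This is a purely structural bound on the class $\F$ (independent of the algorithm/epoch), so the rest of the proof is simple glueing.

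\textbf{Obtaining a good in-class approximator.} The plan is to apply \Cref{ass:regression-oracle} with the uniform kernel $p_{\mathrm{unif}}(a|x) = 1/K$ and drive $n \to \infty$ to obtain a sequence $\{\barf_n\} \subseteq \F$ with
\[
\E_{x\sim D_{\Xscript}} \E_{a\sim p_{\mathrm{unif}}} \bigl[(\barf_n(x,a) - f^*(x,a))^2\bigr] \;\le\; B + \xi(n,\delta') \;\longrightarrow\; B.
\]
Equivalently, $\E_x\bigl[\int_{\A}(\barf_n - f^*)^2\,d\mu\bigr] \le K(B + \xi(n,\delta'))$. For any policy $\pi$, since $\pi\in\calP$ has $\pi(a|x) \in \{0,1\}$ with $\int \pi(a|x)\,d\mu(a) = 1$, two applications of Cauchy--Schwarz (over $x$ and then over $a$, using $\pi(a|x)^2 \le \pi(a|x)$) yield
\[
|R(\pi) - R_{\barf_n}(\pi)|^2 \;\le\; \E_x\!\!\int_{\A}(\barf_n - f^*)^2 \,d\mu \;\le\; K(B + \xi(n,\delta')).
\]

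\textbf{Comparing the two benchmarks.} Using $\pi_{\barf_n}\in\Pi$ (by the paper's WLOG assumption on $\Pi$) so that $R(\pi^*) \ge R(\pi_{\barf_n})$, and using that $\pi_{\barf_n}$ maximizes $R_{\barf_n}$ over all policies in $\calP$ so that $R_{\barf_n}(\pi_{\barf_n}) \ge R_{\barf_n}(\pi_{f^*})$, I decompose
\[
R(\pi_{f^*}) - R(\pi^*) \;\le\; R(\pi_{f^*}) - R(\pi_{\barf_n}) \;\le\; \bigl[R(\pi_{f^*}) - R_{\barf_n}(\pi_{f^*})\bigr] + \bigl[R_{\barf_n}(\pi_{\barf_n}) - R(\pi_{\barf_n})\bigr].
\]
Each bracket is bounded by $\sqrt{K(B+\xi(n,\delta'))}$ from the previous step, giving $R(\pi_{f^*}) - R(\pi^*) \le 2\sqrt{K(B + \xi(n,\delta'))}$; letting $n\to\infty$ (and $\delta'$ fixed) yields $R(\pi_{f^*}) - R(\pi^*) \le 2\sqrt{KB}$.

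\textbf{Conclusion.} Combining with \Cref{lemma:exploration-regret-under-algsafe-epoch}, which is already applicable for $m\in[\msafealg-1]$ under $\eventModelOracle \cap \eventRiskEstimation$, gives the claimed bound. The only potentially delicate step is passing to the limit $n\to\infty$ to extract an in-class near-minimizer of the uniform-weighted squared error, but since the conclusion is a worst-case inequality tolerant to an arbitrarily small slack, this is benign: we only need that the infimum of expected squared error over $\F$ under the uniform kernel is at most $B$, which is immediate from \Cref{ass:regression-oracle}.
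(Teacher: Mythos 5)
Your proposal is correct and follows essentially the same route as the paper: it reduces the claim to bounding $R(\pi_{f^*})-R(\pi^*)\le 2\sqrt{KB}$ via an in-class approximator of $f^*$ under the uniform kernel, applies Cauchy--Schwarz with covering factor $K$ to each change-of-model term, and then adds the bound from \Cref{lemma:exploration-regret-under-algsafe-epoch}. The only cosmetic difference is that the paper directly asserts the existence of a $g\in\F$ with uniform-weighted squared error at most $B$, whereas you extract it as a limit of near-minimizers; both are justified by \Cref{ass:regression-oracle}.
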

\begin{proof}
From \Cref{ass:regression-oracle} (properties of $\EstOracle$), we know the bias of the model class $\F$ is bounded by $B$. In particular, we know there exists $g\in\F$ such that $\E_{x\sim D_{\calX}, a\sim \Unif(\A)}\big[ (g(x,a)-f^*(x,a))^2\big]\leq B$. Hence, we have, the following.
\begin{equation}
\begin{aligned}
  &\Reg_{f^*}(\pi^*) \stackrel{(i)}{\leq} \Reg_{f^*}(\pi_g) = R(\pi_{f^*}) - R(\pi_g)\\ 
  &= (R(\pi_{f^*})-R_g(\pi_{f^*})) -\Reg_g(\pi_{f^*}) + (R_g(\pi_{g})-R(\pi_{g}))  \\
  &\stackrel{(ii)}{\leq} |R(\pi_{f^*})-R_g(\pi_{f^*})| + |R_g(\pi_{g})-R(\pi_{g})|  \\
  &\stackrel{(iii)}{\leq} \bigg(\sqrt{\E_{x\sim D_{\calX}, a\sim \pi_{f^*}}\Big[
  \frac{\pi_{f^*}(a|x)}{1/K}\Big]} + \sqrt{\E_{x\sim D_{\calX}, a\sim \pi_{g}}\Big[
  \frac{\pi_{g}(a|x)}{1/K}\Big]}\bigg)\\
  &\;\;\;\;\;\;\;\;\;\;\;\; \cdot\sqrt{\E_{x\sim D_{\calX}, a\sim \Unif(\A)}\big[ (g(x,a)-f^*(x,a))^2\big]} \\
  &\stackrel{(iv)}{\leq} 2\sqrt{KB}.
\end{aligned}
\end{equation}
Here (i) follows from the fact that $\pi_g\in\Pi$ since $g\in\F$. (ii) follows from triangle inequality and the fact that $\Reg_g(\pi_{f^*})\geq 0$. (iii) follows from the proof of \Cref{lemma:bound_reward_diff_continuous_refined}. And (iv) follows from $\E_{x\sim D_{\calX}, a\sim \Unif(\A)}\big[ (g(x,a)-f^*(x,a))^2\big]\leq B$ and $\pi_f(a|x)=I(a\in\pi_f(x))$.

Since $\Reg_{f^*}(p_{m+1}) = R(\pi_{f^*}) - R(\pi^*) + R(\pi^*) - R(p_{m+1}) = \Reg_{f^*}(\pi^*) + \Reg_{\Pi}(p_{m+1})$, the result follows from combining the above with \Cref{lemma:exploration-regret-under-algsafe-epoch}.
\end{proof}

We now get our final cumulative regret bound by summing up the exploration regret bounds in \Cref{lemma:exploration-regret-under-algsafe-epoch-with-benchmark}.

\thmRAPRCumulativeRegret*
\begin{proof}
From \Cref{sec:high-probability-events}, both $\eventModelOracle$ and $\eventRiskEstimation$ hold with probability $1-\delta$. We prove our cumulative regret bounds under these events. Under $\eventModelOracle$, from \Cref{lemma:misspecification-test}, we have $\msafealg\geq\msafe+1$. Further, from conditions in \Cref{sec:oracle-assumptions}, we have $\xi_m$ is non-increasing in $m$. Since $\xi(n,\delta')$ scales polynomially in $1/n$ and $\log(1/\delta')$, there exists a constant $Q_0> 1$ such that the doubling epoch structure ensures $\xi_{m}\leq Q_0\xi_{m+1}$ for all $m$. Hence $\xi_{\msafealg}\leq Q_0\xi_{\msafealg+1}\leq Q_0\xi_{\msafe+2}\leq 2Q_0B$. Let $m'(t)=\min(m(t),\msafealg)$. Hence, $\xi_{m'(t)}\leq \max(\xi_{m(t)},\xi_{\msafealg}) \leq 2Q_0B+\xi_{m(t)}$. Therefore, by summing up the bounds in \Cref{lemma:exploration-regret-under-algsafe-epoch-with-benchmark}, we have the following cumulative regret bound.
\begin{equation}
    \begin{aligned}
        &\CReg_T \leq \sum_{t=1}^T \Reg_{f^*}(p_{m'(t)})\\
        &\leq \tau_1+\sum_{t=\tau_1+1}^T \Bigg(2\sqrt{KB}\\ 
        &\;\;\;\;\;\;\;\;\;\;\;\;+ 100(m'(t))^2\eta_{m'(t)}\sqrt{\alpha_{m'(t)-1}\xi_{m'(t)}}\bigg(\frac{1}{\propThreshold}+ \ln\frac{\propThreshold}{40\eta_{m'(t)}\sqrt{\alpha_{m'(t)-1}\xi_{m'(t)}}}\bigg) \Bigg)\\
        &\leq \ordOt\Bigg(\sum_{t=\tau_1+1}^T \bigg(\eta_{m'(t)}\sqrt{\alpha_{m'(t)-1}\xi_{m'(t)}}\bigg)\Bigg) = \ordOt\Bigg(\sum_{t=\tau_1+1}^T \eta_{m(t)}\sqrt{\frac{\alpha_{m(t)-1}}{K}}\big(\sqrt{K\xi_{m'(t)}}  \big)\Bigg)\\
        &\leq \ordOt\Bigg(\sum_{t=\tau_1+1}^T \eta_{m(t)}\sqrt{\frac{\alpha_{m(t)-1}}{K}}\bigg(\sqrt{KB}+\sqrt{K\xi_{m(t)}}  \bigg)\Bigg)
    \end{aligned}
\end{equation}
Now the theorem follows from the fact that we have:
\begin{equation}
    \begin{aligned}
    \eta_{m}\sqrt{\frac{\alpha_{m-1}}{K}}\stackrel{\Cref{lemma:proving-condition-alpham}}{\leq} 3\sqrt{\frac{K}{\alpha_{m}}\frac{\alpha_{m-1}}{\alpha_{m}}}\stackrel{\Cref{lemma:proving-condition-alpham}}{\leq} 3\eta_{m}\sqrt{\frac{\alpha_{m-1}}{K}} \stackrel{\eqref{eq:choose-etam}}{\leq} 3\sqrt{\omega}.
    \end{aligned}
\end{equation}
\end{proof}

\section{Bounding Simple Regret}
\label{sec:bounding-simple-regret}

In this section, we prove our simple regret bound (\Cref{theorem:Simple-Regret-for-RAPR}). Our analysis starts with \Cref{lemma:bound_conformal_set_size}, which provides instance dependent bounds on $\E_{x\sim D_{\Xscript}}\Big[\mu\Big(C_m(x,\beta,\eta)\Big) \Big]$. We will later use \Cref{lemma:bound_conformal_set_size} to derive instance-dependent bounds on $\alpha_m$. This bound then helps us derive instance-dependant bounds on simple regret.

\begin{restatable}{lemma}{lemBoundConformalSetSize}
\label{lemma:bound_conformal_set_size}
For some environment parameters $\lambda\in (0,1)$, $\Delta>0$, and $A\in[1,K]$, consider an instance where \eqref{eq:environment-conditions} holds.
\begin{equation}
    \mathbb{P}_{x\sim D_{\Xscript}}\Big(\mu\big(\{a\in\A: f^*(x,\pi_{f^*}(x)) - f^*(x,a) \leq \Delta \}\big) \leq A \Big) \geq 1- \lambda.
\end{equation}
Suppose $\eventModelOracle$ and $\eventRiskEstimation$ hold. For all epochs $m$, suppose the action selection kernel is given by \cref{eq:define_pm_continuous}, and suppose \eqref{eq:misspecification-test} holds for all $\barm\in[m]$. Then for any epoch $m\in[\msafe]$, we have \eqref{eq:bound-setsize} holds.
\begin{equation}
\label{eq:bound-setsize}
    \E_{x\sim D_{\Xscript}}\Big[\mu\Big(C_m(x,\beta,\eta)\Big) \Big] \leq \Big(1+A+K\lambda\Big)+25\frac{K}{\Delta}\frac{\eta}{\beta}\sqrt{\alpha_{m-1}\xi_{m}}.
\end{equation}
For any $\beta\in(0,1/2]$ and $\eta\in[1,K]$.
\end{restatable}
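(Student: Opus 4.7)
The plan is to bound $\E_x[\mu(C_m(x,\beta/\eta))]$ by decomposing the conformal arm set into three contributions. By the definition in \Cref{def:confidence_set}, $C_m(x,\beta/\eta) = \pi_{\hatf_m}(x) \cup \bar{C}_m(x,\beta/\eta)$ with $\mu(\pi_{\hatf_m}(x)) = 1$, so it suffices to bound $\E_x[\mu(\bar{C}_m(x,\beta/\eta))]$ and add $1$. I then split the arm space at each context into near-optimal arms $\A_{\text{near}}(x) := \{a : f^*(x,\pi_{f^*}(x)) - f^*(x,a) \leq \Delta\}$ and the complement $\A_{\text{far}}(x)$. The contribution $\mu(\A_{\text{near}}(x) \cap \bar{C}_m(x,\beta/\eta)) \leq \mu(\A_{\text{near}}(x))$ has expectation at most $A + K\lambda$ by \eqref{eq:environment-conditions}, as the good event contributes at most $A$ and the bad event of probability $\leq \lambda$ contributes at most $K$. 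Together with the $+1$ this gives the first summand in the bound.

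The second summand comes from $\E_x[\mu(\A_{\text{far}}(x) \cap \bar{C}_m(x,\beta/\eta))]$. Since $\bar{C}_m(x,\beta/\eta) \subseteq \Tilde{C}_\barm(x,\beta/(2\barm^2\eta))$ for every $\barm \in [m]$, any arm in the set has $\hatf_\barm$-gap bounded by a CAS width $W$ of order $\sqrt{\alpha_{m-1}\xi_m}\,\eta/\beta$ (up to epoch factors), while its $f^*$-gap exceeds $\Delta$. Using $\hatf_m(x,\pi_{\hatf_m}(x)) \geq \hatf_m(x,\pi_{f^*}(x))$ and the triangle inequality forces
\begin{equation*}
|\hatf_m(x,\pi_{f^*}(x)) - f^*(x,\pi_{f^*}(x))| + |\hatf_m(x,a) - f^*(x,a)| > \Delta - W,
\end{equation*}
so at least one of the two pointwise squared errors exceeds $(\Delta - W)^2/4$. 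Markov's inequality then gives
\begin{equation*}
\E_x[\mu(\A_{\text{far}}(x) \cap \bar{C}_m(x,\beta/\eta))] \leq \frac{4}{(\Delta-W)^2}\,\E_x\!\left[\int_\A (\hatf_m - f^*)^2\,d\mu + K\,(\hatf_m - f^*)^2(x,\pi_{f^*}(x))\right].
\end{equation*}
To convert the $\mu$-integrated and pointwise squared errors into the sampling-kernel squared error that $\eventModelOracle$ controls (of order $\xi_m$ under $p_{m-1}$ for $m \leq \msafe$), I would invoke the pointwise lower bound from \Cref{lemma:lower_bound_pm}, which gives $p_{m-1}(a|x) \geq 1/K$ on $C_{m-1}(x,\propThreshold,\eta_{m-1})$ with $\propThreshold = 1/2$. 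This inflates the oracle bound by a factor of $K$, producing the advertised scaling $25 K\eta\sqrt{\alpha_{m-1}\xi_m}/(\Delta\beta)$ after assuming $W \leq \Delta/2$ (the regime in which the statement is nontrivial).

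The main obstacle will be this final conversion: while $p_{m-1}(a|x) \geq 1/K$ is clean on $C_{m-1}(x,\propThreshold,\eta_{m-1})$, arms outside this set have a much smaller lower bound and require a separate argument that they essentially cannot appear in $\bar{C}_m(x,\beta/\eta) \cap \A_{\text{far}}(x)$ (by nesting of the intersection defining $\bar{C}_m$, since $\hatf_{m-1}$ already certified them highly suboptimal). A secondary subtlety is absorbing the $m^2$ factor naturally produced by $\bar{C}_m \subseteq \Tilde{C}_m(x,\beta/(2m^2\eta))$ into the stated constants; this likely requires either a careful choice of $\barm$ in the intersection that trades $\barm^2$ against the decay of $\sqrt{\alpha_{\barm-1}\xi_\barm}$, or absorbing the factor into the quantitative form of the assumption $W \leq \Delta/2$ that makes the Markov step informative.
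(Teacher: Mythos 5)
Your decomposition correctly recovers the $1+A+K\lambda$ term, but the treatment of the ``far'' arms has a genuine gap that I do not think can be repaired along the route you sketch. Your argument is fundamentally pointwise: it needs $\E_{x}\big[\int_{\A}(\hatf_m-f^*)^2 d\mu\big]$ and, worse, $\E_x\big[K\,(\hatf_m-f^*)^2(x,\pi_{f^*}(x))\big]$ to be small, whereas \Cref{ass:regression-oracle} (hence $\eventModelOracle$) only controls the squared error averaged under the sampling kernel $p_{m-1}$. Your first conversion is salvageable: for $a\in\bar{C}_m(x,\beta/\eta)$ the kernel lower bound of \Cref{lemma:lower_bound_pm} actually gives $p_{m-1}(a|x)\gtrsim \beta/(K\eta)$ (not $1/K$, and not only on $C_{m-1}(x,\propThreshold,\eta_{m-1})$ --- your hoped-for nesting $\bar{C}_m(x,\beta/\eta)\subseteq C_{m-1}(x,\propThreshold,\eta_{m-1})$ is false when $\eta/\beta>2\eta_{m-1}$, which is exactly the regime where the lemma is invoked). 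But the second term is fatal: when $|\hatf_m(x,\pi_{f^*}(x))-f^*(x,\pi_{f^*}(x))|$ is large your triangle inequality contaminates the entire arm space at that context, contributing $\mu(\A)=K$ there, and $p_{m-1}(\pi_{f^*}(x)|x)$ has no useful pointwise lower bound (only the $\Theta(\eta_{m-1}U_{m-1}/K)$ floor, since $\Reg_{\hatf_{\barm}}(\pi_{f^*})$ is controlled only in expectation over $x$). Chasing the constants gives a contribution of order $K^2\sqrt{\xi_m}/(\Delta^2\sqrt{\alpha_{m-2}})$, which does not fit under $25(K/\Delta)(\eta/\beta)\sqrt{\alpha_{m-1}\xi_m}$. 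Your auxiliary assumption $W\le\Delta/2$ is also not available: the width of $\tilde C_m(x,\beta/(2m^2\eta))$ is $2m^2\eta U_m/\beta$, which need not be below $\Delta/2$, and the stated bound must hold (gracefully, not via a $(\Delta-W)^{-2}$ blow-up) in all regimes.

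The paper avoids pointwise control entirely with a global, policy-level argument: define $\pi_0(x)\in\argmin_{S\in\Sigma_1,\,S\subseteq C_m(x,\beta,\eta)}f^*(x,S)$, the adversarial policy that plays the worst measure-one subset of the CAS. By \Cref{def:confidence_set}, $\Reg_{\hatf_m}(\pi_0)\le (\eta/\beta)U_m$, and the transfer lemmas (\Cref{lemma:bound_reward_diff_continuous_refined} and \Cref{lemma:regret-under-algsafe-epoch}, which need only the \emph{average} evaluation guarantees of the oracles) convert this into $\Reg_{f^*}(\pi_0)\le 25(\eta/\beta)\sqrt{\alpha_{m-1}\xi_m}$. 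Conversely, if $C:=\E[\mu(C_m(x,\beta,\eta))]$ is large, then on at least a $\tfrac{C-A-1}{K}-\lambda$ fraction of contexts the CAS has measure exceeding $A+1$ while only measure $A$ of arms are $\Delta$-optimal, forcing $\pi_0$ to incur regret $\Delta$ there; hence $\Reg_{f^*}(\pi_0)\ge(\tfrac{C-A-1}{K}-\lambda)\Delta$. Combining the two bounds on $\Reg_{f^*}(\pi_0)$ yields \eqref{eq:bound-setsize}. If you want to salvage your approach, the lesson is that you must aggregate the contamination at the level of a single policy's value (where the oracle guarantees live) rather than at the level of individual $(x,a)$ pairs.
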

\begin{proof}
Consider any epoch $m\in[\msafe]$. In this proof, for short-hand, let $C := \E[\mu(C_m(x,\beta,\eta))]$. We then have,
\begin{equation*}
    \begin{aligned}
             C &= \E[\mu(C_m(x,\beta,\eta))] \\
             &\leq (A+1) P(\mu(C_m(x,\beta,\eta))\leq A+1) + K P(\mu(C_m(x,\beta,\eta))> A+1)\\
             &\leq A+1 + K P(\mu(C_m(x,\beta,\eta))> A+1)\\
             &\leq A+1 + K - K P(\mu(C_m(x,\beta,\eta))\leq A+1)
    \end{aligned}
\end{equation*}
The above immediately implies \eqref{eq:lower-bound-setsize-greater-than-A}.
\begin{equation}
\label{eq:lower-bound-setsize-greater-than-A}
    \begin{aligned}
             P\big(\mu(C_m(x,\beta,\eta))\leq A+1\big) \leq \frac{A+1+K-C}{K}.
    \end{aligned}
\end{equation}
Let $\pi_0\in\Tilde{\Pi}$ be defined by \eqref{eq:define-pi0}.
\begin{equation}
\label{eq:define-pi0}
    \begin{aligned}
             \forall x\in\Xscript,\; \pi_0(x) \in \argmin_{S\in\Sigma_1| S\subseteq C_m(x,\beta,\eta)} f^*(x,S).
    \end{aligned}
\end{equation}
Since $\pi_0$ only selects arms in $C_m(x,\beta,\eta)$, from \Cref{def:confidence_set}, we have \eqref{eq:upper_bound_pi0}.
\begin{equation}
\label{eq:upper_bound_pi0}
    \Reg_{\hatf_m}(\pi_0)\leq \frac{\eta}{\beta}U_m.
\end{equation}
We can lower bound the regret of $\pi_0$ as follows,
\begin{equation}
\label{eq:lower_bound_pi0}
    \begin{aligned}
     &\Reg_{f^*}(\pi_0)\\ 
     \geq &
     P( f^*(x,\pi_{f^*}(x)) - f^*(x, \pi_0(x))> \Delta)\cdot \Delta \\
     \stackrel{(i)}{=} & P(\exists \; S\in\Sigma_1|\; S\subseteq C_m(x,\beta,\eta),\;  \; f^*(x,\pi_{f^*}(x)) - f^*(x, S)> \Delta)\cdot \Delta \\
   \stackrel{(ii)}{\geq} & P\Big(\mu\big(C_m(x,\beta,\eta)\big)\geq A+1~ \mbox{and}~ \mu\big(\{a :(f^*(x,\pi_{f^*}(x)) - f^*(x, a)> \Delta\}\big) \geq K-A\Big)\cdot \Delta \\
   = & P\Big(\mu\big(C_m(x,\beta,\eta)\big)\geq A+1~ \mbox{and}~ \mu\big(\{a :(f^*(x,\pi_{f^*}(x)) - f^*(x, a)\leq \Delta\}\big) \leq A\Big)\cdot \Delta \\
   = & \bigg(1-P\Big(\mu\big(C_m(x,\beta,\eta)\big)< A+1~ \mbox{or}~ \mu\big(\{a :(f^*(x,\pi_{f^*}(x)) - f^*(x, a)\leq \Delta\}\big) > A\Big) \bigg) \cdot \Delta \\
   \stackrel{(iii)}{\geq} & \bigg(1-P\Big(\mu\big(C_m(x,\beta,\eta)\big)< A+1\Big) -P\Big(\mu\big(\{a :(f^*(x,\pi_{f^*}(x)) - f^*(x, a)\leq \Delta\}\big) > A\Big) \bigg) \cdot \Delta \\
   = & \bigg(P\Big(\mu\big(\{a :(f^*(x,\pi_{f^*}(x)) - f^*(x, a)\leq \Delta\}\big) \leq A\Big) -P\Big(\mu\big(C_m(x,\beta,\eta)\big)< A+1\Big) \bigg) \cdot \Delta \\
    \stackrel{(iv)}{\geq}  & \bigg(1-\lambda- \frac{A+1+K-C}{K} \bigg)\Delta = \bigg(\frac{C-A-1}{K} - \lambda\bigg)\Delta.
\end{aligned}
\end{equation}
where (i) is because by construction $\pi_0(x)\subseteq C_m(x,\beta,\eta)$ for all $x$, (ii) is by the fact that $\mu$ is a finite measure with $\mu(\A)=:K$, (iii) follows from union bound, and (iv) follows from \eqref{eq:lower-bound-setsize-greater-than-A} and \eqref{eq:environment-conditions}.

We will now work towards upper bounding $\Reg_{f^*}(\pi_0)$, and use this bound in conjunction with \eqref{eq:lower_bound_pi0} to obtain our desired bound on $C$. To upper bound $\Reg_{f^*}(\pi_0)$ using \Cref{lemma:regret-under-algsafe-epoch}, we will upper bound $\Reg_{\hatf_{m-1}}(\pi_0)$ and $\Reg_{\hatf_{m-1}}(\pi_{f^*})$. 
\begin{equation}
\label{eq:bound-regret-of-pi-0-based-on-f_m-1}
    \begin{aligned}
             \Reg_{\hatf_{m-1}}(\pi_0) &\stackrel{(i)}{\leq} \frac{4}{3}\Reg_{\Pi}(\pi_0) + 12\sqrt{\alpha_{m-2}\xi_{m-1}} + 6.5 \sqrt{\xi_{m-1}} \\
             &\stackrel{(ii)}{\leq} \frac{4}{3}\bigg(\frac{4}{3}\Reg_{\hatf_{m}}(\pi_0) + 12\sqrt{\alpha_{m-1}\xi_{m}} + 6.5 \sqrt{\xi_{m}} \bigg) + 12 \sqrt{\alpha_{m-2}\xi_{m-1}} + 6.5 \sqrt{\xi_{m-1}}\\
             &\stackrel{(iii)}{\leq} \frac{16}{9}\Reg_{\hatf_{m}}(\pi_0) + 28\sqrt{\alpha_{m-2}\xi_{m-1}} + \frac{91}{6}\sqrt{\xi_{m-1}} \\
             &\stackrel{(iv)}{\leq}\frac{16}{9}\Reg_{\hatf_{m}}(\pi_0) + \frac{259}{6}\sqrt{\alpha_{m-2}\xi_{m-1}}.
    \end{aligned}
\end{equation}
Where (i) and (ii) follow from \Cref{lemma:regret-under-algsafe-epoch}, (iii) follows from $z_{m-1}=\sqrt{\frac{\alpha_{m-1}\xi_m}{\alpha_{m-2}\xi_{m-1}}} \leq 1$, and (iv) follows from $\alpha_{m-2}\geq 1$.
\begin{equation}
\label{eq:bound-regret-of-pi_f*-based-on-f_m-1}
    \begin{aligned}
             \Reg_{\hatf_{m-1}}(\pi_{f^*}) &\stackrel{(i)}{\leq}  12\sqrt{\alpha_{m-2}\xi_{m-1}} + 6.5\sqrt{\xi_{m-1}}\\
             &\stackrel{(ii)}{\leq}  \frac{37}{2}\sqrt{\alpha_{m-2}\xi_{m-1}}.
    \end{aligned}
\end{equation}
Where (i) follows from \Cref{lemma:regret-under-algsafe-epoch}, (ii) follows from $\alpha_{m-2}\geq 1$.

\begin{equation}
\label{eq:upper_bound_true_regret_of_pi-0}
    \begin{aligned}
        & \Reg_{f^*}(\pi_0) \\
        & =  R(\pi_{f^*}) - R(\pi_0)\\
        & = \big(R(\pi_{f^*}) - R_{\hatf_{m}}(\pi_{f^*}) \big) - \big(R(\pi_0) - R_{\hatf_{m}}(\pi_0) \big) + \big(R_{\hatf_{m}}(\pi_{f^*}) - R_{\hatf_m}(\pi_0) \big)\\
        & \stackrel{(i)}{\leq} 2\sqrt{\alpha_{m-1}\xi_{m}} + \frac{1}{2}\sqrt{\alpha_{m-1}\xi_{m}}\sum_{\bar{m}\in[m]}\frac{1}{2\bar{m}^2U_{\bar{m}}}\Big(\Reg_{\hatf_{m-1}}(\pi_{f^*})+\Reg_{\hatf_{m-1}}(\pi_0)\Big) +   \Reg_{\hatf_m}(\pi_0) \\
        & \stackrel{(ii)}{\leq} 2\sqrt{\alpha_{m-1}\xi_{m}} + \frac{1}{40}\sum_{\bar{m}\in[m]}\frac{z_{\bar{m},m-1}}{2\bar{m}^2}\Big(\frac{16}{9}\Reg_{\hatf_{m}}(\pi_{0})+\Big(37+18.5*\frac{4}{3}\Big)\sqrt{\alpha_{m-2}\xi_{m-1}}\Big) +   \Reg_{\hatf_m}(\pi_0)\\
        & \stackrel{(iii)}{\leq} 3.6\sqrt{\alpha_{m-1}\xi_{m}}+ \frac{47}{45}\Reg_{\hatf_m}(\pi_0) \stackrel{(iv)}{\leq} \sqrt{\alpha_{m-1}\xi_{m}}\bigg(3.6+\frac{47}{45}* 20 \frac{\eta}{\beta}\bigg) \leq 25\frac{\eta}{\beta}\sqrt{\alpha_{m-1}\xi_{m}}
    \end{aligned}
\end{equation}
Where (i) follows from \Cref{lemma:bound_reward_diff_continuous_refined}. (ii) follows from \eqref{eq:bound-regret-of-pi-0-based-on-f_m-1}, \eqref{eq:bound-regret-of-pi_f*-based-on-f_m-1}, and $U_{m-1}=20\sqrt{\alpha_{m-2}\xi_{m-1}}$, (iii) follows from $z_{m-1}\leq 1$, and (iv) follows from \eqref{eq:upper_bound_pi0} and $U_{m}=20\sqrt{\alpha_{m-1}\xi_{m}}$. Finally, combining \eqref{eq:lower_bound_pi0} and \eqref{eq:upper_bound_true_regret_of_pi-0}, we have,
\begin{equation}
    \begin{aligned}
     &\bigg(\frac{C-A-1}{K} - \lambda\bigg)\Delta \leq \Reg_{f^*}(\pi_0) \leq 25\frac{\eta}{\beta}\sqrt{\alpha_{m-1}\xi_{m}}\\
     \implies& C \leq A+1+K\lambda+25\frac{K}{\Delta}\frac{\eta}{\beta}\sqrt{\alpha_{m-1}\xi_{m}}.
\end{aligned}
\end{equation}
\end{proof}

In \Cref{lemma:bound_alpham_theoretical}, we use the bound from \Cref{lemma:bound_conformal_set_size} to derive instance-dependent bounds on $\alpha_m$. \Cref{cor:bound_only_alpham_theoretical} is an immediate implication of \Cref{lemma:bound_alpham_theoretical}, and provides a bound on $\alpha_m$ that doesn't depend on $\alpha_{m-1}$. Finally, \Cref{cor:bound_only_alpham_theoretical} is  used to derive our instance-dependant bound on simple regret.

\begin{restatable}{lemma}{lemboundalphamtheoretically}
\label{lemma:bound_alpham_theoretical}
For some environment parameters $\lambda\in (0,1)$, $\Delta>0$, and $A\in[1,K]$, consider an instance where \eqref{eq:environment-conditions} holds. Suppose $\eventModelOracle$ and $\eventRiskEstimation$ hold, and $\eta_m$ is chosen using \eqref{eq:choose-etam}. For all epochs $m$, suppose the action selection kernel is given by \cref{eq:define_pm_continuous}, suppose \cref{eq:condition_alpham} holds, and suppose \eqref{eq:misspecification-test} holds for all $\barm\in[m]$. Then for any epoch $m\in[\msafe]$, we have \eqref{eq:bound-alpha-m-with-environment-parameters-and-previous-alpha} holds.
\begin{equation}
\label{eq:bound-alpha-m-with-environment-parameters-and-previous-alpha}
    \alpha_m \leq \ordO\bigg(\max\bigg(\sqrt{\frac{K\alpha_{m-1}}{\omega}}, A+K\lambda + \frac{\sqrt{K^3\omega\xi_{m}}}{\Delta}\;\bigg) \bigg)
\end{equation}
\end{restatable}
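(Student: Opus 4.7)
The plan is to lower-bound $\eta_m$ by exhibiting an explicit feasible value for the inner maximization in \eqref{eq:choose-etam} and then convert that into the claimed upper bound on $\alpha_m = 3K/\eta_m$.

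\textbf{Step 1: a uniform bound on $\lambda_m$.} First I would establish a bound on $\lambda_m(\eta)$ valid for \emph{every} $\eta$ in the first-constraint feasible range $\eta \leq \sqrt{\omega K/\alpha_{m-1}}$. Starting from the definition of $\lambda_m$ in \eqref{eq:choose-etam}, applying $\eventRiskEstimation$ to replace the empirical average by its expectation, using the inclusion $\bar{C}_m(x, \propThreshold/\eta) \subseteq C_m(x, \propThreshold, \eta)$ from \Cref{def:confidence_set}, and invoking \Cref{lemma:bound_conformal_set_size} at $\beta = \propThreshold$, one obtains
\begin{equation*}
\lambda_m(\eta) \leq 2 + A + K\lambda + \frac{25K}{\Delta\,\propThreshold}\,\eta\sqrt{\alpha_{m-1}\xi_m} + 2K\sqrt{\tfrac{\ln(16|S_{m-1,2}|m^2/\delta)}{2|S_{m-1,2}|}}.
\end{equation*}
Substituting the constraint $\eta \leq \sqrt{\omega K/\alpha_{m-1}}$ converts the $\eta\sqrt{\alpha_{m-1}\xi_m}$ factor into $\sqrt{\omega K\xi_m}$, producing an $\eta$-independent bound $\lambda_m(\eta) \leq \lambda^* := \ordO\bigl(A + K\lambda + \sqrt{K^3\omega\xi_m}/\Delta\bigr)$. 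The concentration correction is absorbed because the $\xi$ schedule makes $K/\sqrt{|S_{m-1,2}|} = \ordO(K\sqrt{\xi_m})$, and $K\sqrt{\xi_m} \leq \sqrt{K^3\omega\xi_m}/\Delta$ since $\Delta \leq 1 \leq \sqrt{K\omega}$.

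\textbf{Step 2: a feasible $\eta$ and the bound on $\alpha_m$.} Next I would set $\eta^\circ := \min\bigl(\sqrt{\omega K/\alpha_{m-1}},\, K/\lambda^*\bigr)$. By construction $\eta^\circ$ satisfies the first constraint in \eqref{eq:choose-etam}, and Step 1 gives $\lambda_m(\eta^\circ) \leq \lambda^* \leq K/\eta^\circ$, verifying the second. Rounding $\eta^\circ$ down to the nearest point of the grid $\{|S_{m-1,2}|/n : n \in [|S_{m-1,2}|]\}$ costs at most a factor of two (inflating constants in $\lambda^*$ ensures $\lambda^* \leq K$, so $\eta^\circ \geq 1$ and a feasible grid point exists). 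Since $\eta_m \geq \eta_{m-1}$ only reinforces this lower bound, we get $\eta_m = \Omega(\eta^\circ)$, and consequently
\begin{equation*}
\alpha_m = \frac{3K}{\eta_m} = \ordO(K/\eta^\circ) = \ordO\bigl(\max(\sqrt{K\alpha_{m-1}/\omega},\, A + K\lambda + \sqrt{K^3\omega\xi_m}/\Delta)\bigr),
\end{equation*}
which is exactly the claimed inequality.

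\textbf{Main obstacle.} The delicate piece is Step 1: obtaining the clean dependence $\sqrt{K^3\omega\xi_m}/\Delta$ relies on combining the $\eta\sqrt{\alpha_{m-1}\xi_m}$ factor produced by \Cref{lemma:bound_conformal_set_size} with the explicit first-constraint $\eta \leq \sqrt{\omega K/\alpha_{m-1}}$ from \eqref{eq:choose-etam}, so that $\eta$ absorbs $\sqrt{\alpha_{m-1}}$ and leaves a $\sqrt{\omega K}$ in its place. Everything else --- feasibility of $\eta^\circ$, grid discretization, and absorbing the concentration correction --- is bookkeeping once this key algebraic cancellation is in place.
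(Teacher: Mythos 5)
Your proposal is correct and uses the same essential ingredients as the paper's proof: \Cref{lemma:bound_conformal_set_size} at $\beta=\propThreshold$ together with $\eventRiskEstimation$ to bound $\lambda_m(\eta)$, the cancellation $\eta\sqrt{\alpha_{m-1}\xi_m}\leq\sqrt{\omega K\xi_m}$ coming from the first constraint in \eqref{eq:choose-etam}, and \Cref{lemma:proving-condition-alpham} to pass from $\eta_m$ to $\alpha_m$. The only difference is procedural: the paper argues that if $\eta_m$ is strictly interior to the first constraint then the second constraint must be violated just above $\eta_m$ (so $K/\eta_m\lesssim\lambda_m(\eta_m+1/|S_{m-1,2}|)$), whereas you exhibit an explicit feasible point $\eta^\circ$ and round it onto the grid — a primal-feasibility version of the same case split that, if anything, handles the nonuniform grid $\{|S_{m-1,2}|/n\}$ a bit more transparently.
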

\begin{proof}
Suppose $\eta_m\leq \sqrt{\frac{K\omega}{\alpha_{m-1}}}-1/|S_{m-1,2}|$, we then have,
\begin{equation}
    \begin{aligned}
             & \frac{K}{\eta_m} \stackrel{(i)}{\leq} \frac{|S_{m-1,2}|+1}{|S_{m-1,2}|}\frac{K}{\eta_m+\frac{1}{|S_{m-1,2}|}}\\ 
             &\stackrel{(ii)}{\leq}\frac{|S_{m-1,2}|+1}{|S_{m-1,2}|}\lambda_m\bigg(\eta_m+\frac{1}{|S_{m-1,2}|}\bigg) \\
             &\stackrel{(iii)}{\leq}\frac{|S_{m-1,2}|+1}{|S_{m-1,2}|}\bigg(1+\E\bigg[\mu\bigg(C_{m}\bigg(x_t,\beta_{\max},\eta_m+\frac{1}{|S_{m-1,2}|}\bigg)\bigg)\bigg] + \sqrt{\frac{2K^2\ln(8|S_{m-1,2}|m^2/\delta)}{|S_{m-1,2}|}}\bigg)\\
             &\stackrel{(iv)}{\leq}\frac{|S_{m-1,2}|+1}{|S_{m-1,2}|}\bigg((2+A+K\lambda) + 25\frac{K}{\Delta}\frac{\eta_m+\frac{1}{|S_{m-1,2}|}}{\beta_{\max}}\sqrt{\alpha_{m-1}\xi_{m}} + \sqrt{\frac{2K^2\ln(8|S_{m-1,2}|m^2/\delta)}{|S_{m-1,2}|}}\bigg)\\
             &\stackrel{(v)}{\leq}\frac{|S_{m-1,2}|+1}{|S_{m-1,2}|}\bigg((1+A+K\lambda) + \frac{50}{\Delta}\sqrt{K^3\omega\xi_{m}} + \sqrt{\frac{2K^2\ln(8|S_{m-1,2}|m^2/\delta)}{|S_{m-1,2}|}}\bigg)
    \end{aligned}
\end{equation}
Where (i) follows from $\eta_m\geq 1$, (ii) follows from \eqref{eq:choose-etam}, (iii) follows from $\eventRiskEstimation$, (iv) follows from \Cref{lemma:bound_conformal_set_size}, and (v) follows from \eqref{eq:choose-etam} and the fact that $\propThreshold= 0.5$. Finally, the result now follows from \Cref{lemma:proving-condition-alpham}.
\end{proof}

\begin{corollary}
\label{cor:bound_only_alpham_theoretical}
For some environment parameters $\lambda\in (0,1)$, $\Delta>0$, and $A\in[1,K]$, consider an instance where \eqref{eq:environment-conditions} holds. Suppose $\eventModelOracle$ and $\eventRiskEstimation$ hold. For all epochs $m$, suppose the action selection kernel is given by \cref{eq:define_pm_continuous}, suppose \cref{eq:condition_alpham} holds, and suppose suppose \eqref{eq:misspecification-test} holds for all $\barm\in[m]$. Then for any epoch $m\in[\msafe]$, we have \eqref{eq:bound-alpha-m-with-only-environment-parameters} holds.
\begin{equation}
\label{eq:bound-alpha-m-with-only-environment-parameters}
    \alpha_m \leq \ordO\bigg(\frac{K}{\omega}+ A+K\lambda + \frac{\sqrt{K^3\omega\xi_{m-\lceil\log_2\log_2(K)\rceil}}}{\Delta} \bigg)
\end{equation}
Where for notational convenience, we let $\xi_i=1$ for $i\leq 0$.
\end{corollary}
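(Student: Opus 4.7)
The plan is to repeatedly apply \Cref{lemma:bound_alpham_theoretical} and observe that the self-referential term $\sqrt{K\alpha_{m-1}/\omega}$ contracts doubly-exponentially toward the fixed point $\ordO(K/\omega)$, so only $k_0:=\lceil\log_2\log_2(K)\rceil$ unrollings are needed. Let $\gamma:=K/\omega$ and $B_{\barm}:=A+K\lambda+\sqrt{K^3\omega\xi_{\barm}}/\Delta$. Absorbing the $\max$ in \Cref{lemma:bound_alpham_theoretical} into a sum (with an absolute constant $C$), we have
\[
\alpha_{\barm}\;\leq\; C\sqrt{\gamma\,\alpha_{\barm-1}}\,+\,C\,B_{\barm}\qquad\text{for all }\barm\in[\msafe].
\]
I will iterate this inequality starting from $\barm=m$ and working backwards.

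The single algebraic manipulation used at each step is $\sqrt{a+b}\le\sqrt{a}+\sqrt{b}$, which separates the $\alpha_{\barm-2}$ recursion from a cross term of the form $C\sqrt{\gamma\cdot CB_{\barm-1}}$, followed by AM-GM, $\sqrt{\gamma B_{\barm-1}}\le \gamma/2+B_{\barm-1}/2$, to collapse the cross term into clean $\gamma$ and $B$ contributions. Tracking the constants, an induction on $k$ yields
\[
\alpha_m\;\leq\; A_k\,\gamma^{1-2^{-k}}\,\alpha_{m-k}^{2^{-k}}\;+\;G_k\,\gamma\;+\;R_k\,\max_{0\le j<k}B_{m-j},
\]
with the multipliers satisfying $A_{k+1}\le C\sqrt{A_k}$ and $G_{k+1},R_{k+1}\le C(G_k\vee R_k\vee 1)$; all three sequences are bounded by an absolute constant since the recursion $A_{k+1}=C\sqrt{A_k}$ is a contraction toward the fixed point $C^2$. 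Because $\xi_{\barm}$ is non-increasing in $\barm$ (see \Cref{sec:oracle-assumptions}), so is $B_{\barm}$, hence $\max_{0\le j<k}B_{m-j}=B_{m-k+1}\le B_{m-k}$ (with the convention $\xi_i=1$ for $i\le 0$).

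Setting $k=k_0=\lceil\log_2\log_2 K\rceil$ closes the argument: using the uniform bound $\alpha_{m-k_0}\in[1,3K]$ from construction, we obtain $\alpha_{m-k_0}^{2^{-k_0}}\le(3K)^{1/\log_2 K}=\ordO(1)$ and $\gamma^{1-2^{-k_0}}\le\gamma$, so the leading term collapses to $\ordO(\gamma)$. Combining with the bounded $G_{k_0}\gamma+R_{k_0}B_{m-k_0}$ contribution gives
\[
\alpha_m\;\leq\;\ordO\!\left(\frac{K}{\omega}+A+K\lambda+\frac{\sqrt{K^3\omega\,\xi_{m-\lceil\log_2\log_2(K)\rceil}}}{\Delta}\right),
\]
as claimed. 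The main obstacle is purely bookkeeping: verifying that the constants $A_k,G_k,R_k$ arising from $k_0=\Theta(\log\log K)$ nested applications of $\sqrt{\cdot}$ and AM-GM remain $\ordO(1)$ (rather than, say, growing like $C^{k_0}$), which follows from the sub-linear growth induced by the square root.
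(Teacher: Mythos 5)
Your overall strategy is the same as the paper's: unroll \Cref{lemma:bound_alpham_theoretical} for $k_0=\lceil\log_2\log_2 K\rceil$ steps so that $\alpha_{m-k_0}^{2^{-k_0}}\leq (3K)^{1/\log_2 K}=\ordO(1)$ and $(K/\omega)^{1-2^{-k_0}}\leq K/\omega$, while the instance-dependent term is controlled by $B_{m-k_0}$ via monotonicity of $\xi$. However, there is a genuine gap in your constant bookkeeping. By converting the $\max$ in \Cref{lemma:bound_alpham_theoretical} into a sum, each of the $k_0$ unrolling steps contributes an additive cross term of the form $A_k C^{2^{-k}}\gamma^{1-2^{-k}}B_{m-k}^{2^{-k}}$, and your AM-GM step $\gamma^{1-2^{-k}}B^{2^{-k}}\leq (1-2^{-k})\gamma+2^{-k}B$ deposits a $\Theta(1)$ contribution into the coefficient $G_k$ of $\gamma=K/\omega$ at \emph{every} step. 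The recursion you actually state, $G_{k+1}\leq C(G_k\vee R_k\vee 1)$, is not a contraction and would give $G_{k_0}=C^{k_0}=(\log_2 K)^{\log_2 C}$; the correct additive recursion $G_{k+1}\leq G_k+\ordO(C^2)$ still gives $G_{k_0}=\Theta(\log_2\log_2 K)$. The contraction argument you invoke ("$A_{k+1}=C\sqrt{A_k}$ converges to $C^2$") legitimately bounds $A_k$, but it says nothing about $G_k$ or $R_k$, so the claim that all three multipliers are absolute constants is unsupported, and as written your bound is $\ordO\big((\log\log K)\cdot K/\omega + A+K\lambda+\cdots\big)$ rather than the stated $\ordO\big(K/\omega+A+K\lambda+\cdots\big)$.

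The fix is to keep the recursion in $\max$ form throughout, as the paper does: unrolling $\alpha_{\barm}\leq C\max\big(\sqrt{\gamma\alpha_{\barm-1}},\,B_{\barm}\big)$ produces a \emph{maximum} of $k_0+1$ terms, each of the form $C^{2-2^{1-j}}\gamma^{1-2^{-j}}B_{m-j+1}^{2^{-j}}\leq C^2\max(\gamma,B_{m-k_0})$ (weighted geometric mean is at most the max) or the terminal term $C^2\gamma^{1-2^{-k_0}}\alpha_{m-k_0}^{2^{-k_0}}=\ordO(\gamma)$, so no coefficients accumulate and the absolute-constant $\ordO$ is preserved. This is a quantitatively mild defect (a $\log\log K$ factor that would propagate harmlessly into \Cref{theorem:Simple-Regret-for-RAPR}), but the corollary as stated is not established by your argument, and the specific justification you give for the boundedness of the coefficients is incorrect.
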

\begin{proof}
By repeatedly applying \Cref{lemma:bound_alpham_theoretical}, we have:
\begin{equation}
\begin{aligned}
    &\alpha_m\\ 
    &\leq \ordO\bigg(\max\bigg(\bigg(\frac{K}{\omega}\bigg)^{\frac{1}{2}+\frac{1}{4}+\cdots+\frac{1}{2^{\lceil\log_2\log_2(K)\rceil}}}K^{0.5^{\lceil\log_2\log_2(K)\rceil}}, A+K\lambda + \frac{\sqrt{K^3\omega\xi_{m-\lceil\log_2\log_2(K)\rceil}}}{\Delta}\;\bigg) \bigg)\\
    &\stackrel{(i)}{\leq} \ordO\bigg(\max\bigg(\bigg(\frac{K}{\omega}\bigg)K^{0.5^{\lceil\log_2\log_2(K)\rceil}}, A+K\lambda + \frac{\sqrt{K^3\omega\xi_{m-\lceil\log_2\log_2(K)\rceil}}}{\Delta}\;\bigg) \bigg)\\
    &\stackrel{(ii)}{\leq} \ordO\bigg(\max\bigg(\bigg(\frac{K}{\omega}\bigg), A+K\lambda + \frac{\sqrt{K^3\omega\xi_{m-\lceil\log_2\log_2(K)\rceil}}}{\Delta}\;\bigg) \bigg)\\
    &\leq \ordO\bigg(\frac{K}{\omega}+ A+K\lambda + \frac{\sqrt{K^3\omega\xi_{m-\lceil\log_2\log_2(K)\rceil}}}{\Delta} \bigg)
\end{aligned}
\end{equation}
where (i) follows from $\sum_{i=1}^{\infty}1/2^i=1$, and (ii) follows from $K^{1/2^{\lceil\log_2\log_2(K)\rceil}}\leq K^{1/2^{\log_2\log_2(K)}}= K^{1/\log_2K} = K^{\log_K 2}=2$.
\end{proof}

We now re-state and prove \Cref{theorem:Simple-Regret-for-RAPR}. As discussed earlier, this result relies on the bound in \Cref{cor:bound_only_alpham_theoretical}.

\thmRAPRSimpleRegret*
\begin{proof}
From \Cref{sec:high-probability-events}, both $\eventModelOracle$ and $\eventRiskEstimation$ hold with probability $1-\delta$. We prove our simple regret bounds under these events. Let $m=m(T)$, we then have the following bound.
    \begin{equation}
    \begin{aligned}
        &R(\hat{\pi}) \stackrel{(i)}{\geq} \hatR_{m}(\hat{\pi}) - \sqrt{\alpha_{m'}\xi_{m}} - \frac{1}{2}\sqrt{\alpha_{m'}\xi_{m}}\sum_{\barm\in[m']}\frac{\Reg_{\hatf_{\barm}}(\hat{\pi})}{2\barm^2U_{\barm}} - \frac{K\xi_{m}}{\eta_{m'} \min_{\barm\in[m']}U_{\barm}}\\
        &\stackrel{(ii)}{\geq} \hatR_{m}(\hat{\pi}) - \sqrt{\alpha_{m'}\xi_{m}} - \frac{1}{2}\sqrt{\alpha_{m'}\xi_{m}}\sum_{\barm\in[m']}\frac{\hatReg_{m,\hatf_{\barm}}(\hat{\pi})}{2\barm^2U_{\barm}} - \frac{2\sqrt{\alpha_{m'}}\xi_{m}}{U_{m'}} - \frac{\alpha_{m'}\xi_{m}}{U_{m'}}\\
        &\stackrel{(iii)}{\geq} \hatR_{m}(\pi^*) - \sqrt{\alpha_{m'}\xi_{m}} - \frac{1}{2}\sqrt{\alpha_{m'}\xi_{m}}\sum_{\barm\in[m']}\frac{\hatReg_{m,\hatf_{\barm}}(\pi^*)}{2\barm^2U_{\barm}} - \frac{2\sqrt{\alpha_{m'}}\xi_{m}}{U_{m'}} - \frac{\alpha_{m'}\xi_{m}}{U_{m'}}\\
        &\stackrel{(iv)}{\geq} \hatR_{m}(\pi^*) - \sqrt{\alpha_{m'}\xi_{m}} - \frac{1}{2}\sqrt{\alpha_{m'}\xi_{m}}\sum_{\barm\in[m']}\frac{\Reg_{\hatf_{\barm}}(\pi^*)}{2\barm^2U_{\barm}} - \frac{4\sqrt{\alpha_{m'}}\xi_{m}}{U_{m'}} - \frac{\alpha_{m'}\xi_{m}}{U_{m'}}\\
        &\stackrel{(v)}{\geq} R(\pi^*) - 2\sqrt{\alpha_{m'}\xi_{m}} - \sqrt{\alpha_{m'}\xi_{m}}\sum_{\barm\in[m']}\frac{\Reg_{\hatf_{\barm}}(\pi^*)}{2\barm^2U_{\barm}} - \frac{4\sqrt{\alpha_{m'}}\xi_{m}}{U_{m'}} - \frac{2\alpha_{m'}\xi_{m}}{U_{m'}}\\
        &\stackrel{(vi)}{\geq} R(\pi^*) - 3.3\sqrt{\alpha_{m'}\xi_{m}} .
    \end{aligned}
    \end{equation}
Here (i) follows from \Cref{lemma:bound_IPS_error}. (ii) follows from \Cref{lemma:bound-model-regret}, \Cref{lemma:proving-condition-alpham}, and the fact that $U_{m'}\leq U_{\barm}$ for any $\barm\in[m']$. (iii) follows from \eqref{eq:variance-penalized-optimization}. (iv) follows from \Cref{lemma:bound-model-regret}. (v) follows from \Cref{lemma:bound_IPS_error}, \Cref{lemma:proving-condition-alpham}, and the fact that $U_{m'}\leq U_{\barm}$ for any $\barm\in[m']$. Finally, (vi) follows from \Cref{lemma:regret-under-algsafe-epoch} and $U_{m'}\leq 20\sqrt{\alpha_{m'}\xi_m}$. Hence $\Reg_{\Pi}(\hat{\pi})\leq \ordO(\sqrt{\alpha_{m'}\xi_{m}})$. Now the final bound follows from the fact that $\alpha_{m'}\leq \alpha_1=3K$, $\alpha_{m'}\leq \alpha_{\min(\msafe,m(T)-1)}$, and \Cref{cor:bound_only_alpham_theoretical}.
\end{proof}

\comment{
\section{Old lower bound}
\begin{definition}[Natarajan Dimension \citep{natarajan1989learning,shalev2014understanding}] Consider a policy class $\Pi=\{\pi:\mathcal{X}\rightarrow [K]\}$ of $K$ actions. A set $\{x^{(1)},\dots x^{(m)}\}\subset \mathcal{X}$ is shattered by $\Pi$ if there exist two functions $f_0, f_1:\mathcal{X}\rightarrow [K]$ such that 
\begin{itemize}
    \item For every $i\in[m]$, $f_0(x^{(i)})\neq f_1(x^{(i)})$.
    \item For every $\mathcal{\nu}=(\nu_1,\dots,\nu_m)\in\{\pm1\}^m$, there exists a policy $\pi\in\Pi$ such that for every $i\in[m]$, we have 
    \begin{equation*}
        \pi(x^{(i)}) = \left\{\begin{array}{ll}
                   f_1(x^{(i)}) & \mbox{if }\nu_i=1;\\
            f_{-1}(x^{(i)}) & \mbox{if } \nu_i=-1.\\
        \end{array}\right.
    \end{equation*}
\end{itemize}
The size of the largest set shattered by $\Pi$ is defined to be its Natarajan dimension $\Ndim(\Pi)$.
\end{definition}

We show the lower bounds of  cumulative regret and simple regret by following similar approaches used in Lemma D.5 \cite{foster2020instance} and Theorem 1 \cite{zhan2021policy}. For notation convenience, we use $\Reg_T$ and $\Reg_{\Pi}(\pi)$ to denote cumulative regret and simple regret of policy $\pi$ respectively.

\begin{theorem}[Lower bounds of simple regret and cumulative regret]
\label{thm:lower}
Let a policy class $\Pi$ be given. Denote $\Delta\in(0,\frac{1}{8})$ as the uniform gap between the best and the second-best arms. Then there exist a Bayesian-reward environment $\calE$ with   a context sampling distribution $D_X$ and reward function sampling from  $unifom(\mathcal{F})$, such that for any bandit algorithm, the following properties hold:
\begin{itemize}
    \item(Cumulative regret) if $\mathbb{E}_{\calE}[\Reg_T]\leq \frac{\Delta T}{8}$, then  the incurred cumulative regret must have
    \begin{equation}
    \label{eq:cr_lower}
        \mathbb{E}_{\calE}[\Reg_T]\geq \frac{\Ndim(\Pi)}{64\Delta}.
    \end{equation}
    \item(Simple regret) if at each time $t$, the probability of sampling from suboptimal arms is upper bounded by $g_t$, then any policy $\hat{\pi}$ learned from realized data  must have
    \begin{equation}
    \label{eq:sr_lower}
        \mathbb{E}_{\calE}[\Reg_{\Pi}(\hat{\pi})]\geq \frac{\Delta}{4} \exp\bigg(-  
     \frac{32\Delta^2}{\Ndim(\Pi)} \sum_{t=1}^T g_t 
     \bigg).
    \end{equation}
\end{itemize}
Particularly, if $\Delta =\sqrt{\frac{\Ndim(\Pi)}{32\sum_{t=1}^T g_t }}$, we have 
\begin{equation}
     \mathbb{E}_{\calE}[\Reg_T]\geq \Omega\bigg(
     \sqrt{\Ndim(\Pi) \sum_{t=1}^T g_t}
     \bigg) \quad\mbox{and}\quad     \mathbb{E}_{\calE}[\Reg_{\Pi}(\hat{\pi})]\geq \Omega\bigg(
     \sqrt{\frac{\Ndim(\Pi)}{ \sum_{t=1}^T g_t}}
     \bigg).
\end{equation}
\end{theorem}

\begin{remark}
Theorem \ref{thm:lower} indicates a trade-off between cumulative regret and simple regret. In this environment, if the bandit does a good job such that the probability of sampling from sub-optimal arms is low (i.e., $\sum_t g_t$ is small), then we have low cumulative regret; but this will create challenge for later offline policy learning algorithm to distinguish between different arms, and thus we will have large simple regret. 
\end{remark}

\begin{proof}[Proof of Theorem \ref{thm:lower}]
Let $\{x^{(1)},\dots x^{(m)}\}\subset\mathcal{X}$ witness the Natarajan dimension $\Ndim(\Pi)$, for which we abbreviate as $d$. We choose $D_X$ to select $x$ uniformly from $\{x^{(1)},\dots x^{(m)}\}$. For each $\nu\in\{\pm 1\}^d$, define $\pi_\nu\in\Pi$ as follows
 \begin{align*}
        \pi_\nu(x^{(i)}) =  \left\{\begin{array}{ll}
        f_1(x^{(i)}) & \mbox{if } \nu_i=1;\\
            f_{-1}(x^{(i)}) &  \mbox{if } \nu_i=-1.
       \end{array}\right.
\end{align*}
By definition of Natarajan dimension, we can always construct $\{\pi_\nu\}$ as above.
Now we define $\F$. For each $\nu\in\{\pm 1\}^d$, define $f_\nu(x^{(i})$ as follows:
\begin{align*}
        f_\nu(x^{(i)},a) =  \left\{\begin{array}{ll}
            \frac{1}{2}+\Delta & \mbox{if } a = f_1(x^{(i)});\\
           \frac{1}{2}+2\Delta & \mbox{if } a = f_{-1}(x^{(i)}) \mbox{ and } \nu_i=-1;\\
           \frac{1}{2,} &\mbox{otherwise.}
       \end{array}\right.
\end{align*}
It immediately yields that $\pi_\nu = \pi_{f_\nu}$. 

\paragraph{Part I: cumulative regret.} Denote $p_t$ as the probability kernel used at time $t$, and denote the average kernel as $\bar{p}=\frac{1}{T}\sum_{t=1}^T p_t$. For each $\nu$, denote $\bP_\nu(\{(x_1,a_1,r_1),\dots, (x_T,a_T,r_T)\})$ as the law with $x_t\sim\mathcal{D}$, $a_t\sim \mbox{Categorical}(p_t(\cdot,x_t))$ and $r_t \sim\mbox{Ber}(f_\nu(x_t,a_t))$ and $\E_{P_\nu}$ as the expectation under this distribution. Given a $\nu$, we have
\begin{equation}
    \label{eq:cumulative_regret_lower_bound}
    \begin{aligned}
    \Reg_T &\geq \sum_{t=1}^T\sum_{i=1}^d P_{D_X}(x_t=x^{(i)}) \sum_a p_t(a;x^{(i)})(f_\nu(x^{(i)}, \pi_\nu(x^{(i)})) - f_\nu(x^{(i)}, a))\\
    &= \frac{1}{d} \sum_{t=1}^T\sum_{i=1}^d\sum_a p_t(a;x^{(i)})(f_\nu(x^{(i)}, \pi_\nu(x^{(i)}))
    - f_\nu(x^{(i)}, a))\\
    &\geq \frac{\Delta}{d} \sum_{t=1}^T\sum_{i=1}^d(1 - p_t(\pi_\nu(x^{(i)});x^{(i)}))\\
    &= \frac{\Delta T}{d} \sum_{i=1}^d(1 - \bar{p}(\pi_\nu(x^{(i)});x^{(i)}))\\
    &\geq \frac{\Delta T}{2d} \sum_{i=1}^d \I\{\bar{p}(\pi_\nu(x^{(i)});x^{(i)}) < 1/2\}. \\
\end{aligned}
\end{equation}
Define $\bP_{+i} = \frac{1}{2^{d-1}}\sum_{\nu:\nu_i=1}\bP_\nu$ and $\bP_{-i} = \frac{1}{2^{d-1}}\sum_{\nu:\nu_i=-1}\bP_\nu$.
For a $\nu\in\{\pm 1\}^d$, define $M_i(\nu)\in \{\pm 1\}^d$ be the vector that differs from $\nu$ only in element $i$: $[M_i(\nu)]_i=-\nu_i$ and  $[M_i(\nu)]_j=\nu_j$ for any $j\neq i$. The expectation of \eqref{eq:cumulative_regret_lower_bound} gives
\begin{equation}
\label{eq:cr_lower_1}
    \begin{aligned}
    \mathbb{E}_\calE[\Reg_T]
   & = \frac{1}{2^d}\sum_{\nu\in\{\pm1\}^d }\E_{P_\nu} [\Reg_T]\\
    & \geq  \frac{1}{2^d}\sum_{\nu\in\{\pm1\}^d } \frac{\Delta T}{2d} \sum_{i=1}^d \mathbb{P}_{\nu}\big(\bar{p}(\pi_\nu(x^{(i)});x^{(i)}) < 1/2 \big)\\
    & = \frac{\Delta T}{d2^{d+1}} \sum_{i=1}^d  \sum_{\nu:\nu_i=1} \Bigg(
    \mathbb{P}_{\nu}\big(\bar{p}(f_1(x^{(i)});x^{(i)}) < 1/2 \big)
    + 
    \mathbb{P}_{M_i(\nu)}\big(\bar{p}(f_{-1}(x^{(i)});x^{(i)}) < 1/2 \big)
    \Bigg)\\
    & \geq  \frac{\Delta T}{d2^{d+1}} \sum_{i=1}^d  \sum_{\nu:\nu_i=1} \Bigg(
    \mathbb{P}_{\nu}\big(\bar{p}(f_1(x^{(i)});x^{(i)}) < 1/2 \big)
    + 
    \mathbb{P}_{M_i(\nu)}\big(\bar{p}(f_{1}(x^{(i)});x^{(i)}) \geq 1/2 \big)
    \Bigg)\\
    & \geq  \frac{\Delta T}{d2^{d+1}} \sum_{i=1}^d  \sum_{\nu:\nu_i=1} \Bigg(
    \mathbb{P}_{\nu}\big(\bar{p}(f_1(x^{(i)});x^{(i)}) < 1/2 \big)
    + 1 - 
    \mathbb{P}_{M_i(\nu)}\big(\bar{p}(f_{1}(x^{(i)});x^{(i)}) < 1/2 \big)
    \Bigg)\\
     & \geq  \frac{\Delta T}{4d} \sum_{i=1}^d   \Bigg(
    \mathbb{P}_{+i}\big(\bar{p}(f_1(x^{(i)});x^{(i)}) < 1/2 \big)
    + 1 - 
    \mathbb{P}_{-i}\big(\bar{p}(f_{1}(x^{(i)});x^{(i)}) < 1/2 \big)
    \Bigg)\\
    & \geq \frac{\Delta T}{4}  \bigg(1 -\frac{1}{d} \sum_{i=1}^d \|\mathbb{P}_{+i}- \mathbb{P}_{-i}\|_{TV}\bigg)
\end{aligned}
\end{equation}
We continue lower bound \eqref{eq:cr_lower_1}:
\begin{equation}
    \label{eq:cr_lower_2}
    \begin{aligned}
    &\frac{1}{d}\sum_{i=1}^d \|\mathbb{P}_{+i}- \mathbb{P}_{-i}\|_{TV}\stackrel{(i)}{\leq} \bigg( \frac{1}{d}\sum_{i=1}^d \|\mathbb{P}_{+i}- \mathbb{P}_{-i}\|_{TV}^2\bigg)^{1/2}
    =  \bigg( \frac{1}{d}\sum_{d=1}^d \Big\|\frac{1}{2^{d-1}}\sum_{\nu:\nu_i=1}(\mathbb{P}_{\nu}- \mathbb{P}_{M(\nu)})\Big\|_{TV}^2\bigg)^{1/2}\\
\stackrel{(ii)}{\leq} & \bigg( \frac{1}{d2^{d-1}}\sum_{i=1}^d \sum_{\nu:\nu_i=1}\|\mathbb{P}_{\nu}- \mathbb{P}_{M(\nu)}\|_{TV}^2\bigg)^{1/2}
    \stackrel{(iii)}{\leq} \bigg( \frac{1}{d2^{d}}\sum_{i=1}^d \sum_{\nu:\nu_i=1}D_{KL}(\mathbb{P}_{\nu}, \mathbb{P}_{M(\nu)})\bigg)^{1/2}
\end{aligned}
\end{equation}
where (i) and (ii) are by Cauchy-Schwartz inequality, and (iii) is by Pinsker's inequality.

Combining \eqref{eq:cr_lower_1} and \eqref{eq:cr_lower_2}, together with condition that $\mathbb{E}_\calE[\Reg_T]\leq \frac{\Delta T}{8}$, we have
\begin{equation}
\label{eq:cr_lower_3}
  \frac{1}{2^{d-1}}\sum_{i=1}^d \sum_{\nu:\nu_i=1}D_{KL}(\mathbb{P}_{\nu}, \mathbb{P}_{M(\nu)})\geq \frac{d}{2}.
\end{equation}

On the other hand, for any fixed $\nu\in\{\pm 1\}^d$ with $\nu_i=1$, we have 
\begin{align*}
    D_{KL}(\mathbb{P}_{\nu}, \mathbb{P}_{M(\nu)}) &\leq \E_{P_\nu}[|\{t:x_t=x^{(i)}, a_t= f_{-1}(x^{(i)})\}|] D_{KL}(\mbox{Ber}(\frac{1}{2}), \mbox{Ber}(\frac{1}{2} + 2\Delta) )\\
    &\leq \E_{P_\nu}[|\{t:x_t=x^{(i)}, a_t\neq f_1(x^{(i)})\}|] \frac{8\Delta^2}{1-16\Delta^2}\\
    &\stackrel{(i)}{\leq}\E_{P_\nu}[|\{t:x_t=x^{(i)}, a_t\neq f_1(x^{(i)})\}|] \cdot 16\Delta^2, 
\end{align*}
where in (i) we use $\Delta<1/8$. Consequently, we have
\begin{equation}
\label{eq:cr_lower_4}
    \frac{1}{2^{d-1}}\sum_{i=1}^d \sum_{\nu:\nu_i=1}D_{KL}(\mathbb{P}_{\nu}, \mathbb{P}_{M(\nu)})\leq \frac{1}{2^{d-1}}\sum_{i=1}^d \sum_{\nu:\nu_i=1}\E_{P_\nu}[|\{t:x_t=x^{(i)}, a_t\neq f_1(x^{(i)})\}|] \cdot 16\Delta^2.
\end{equation}

Combining \eqref{eq:cr_lower_3} and \eqref{eq:cr_lower_4}, we have
\begin{equation*}
     \frac{1}{2^{d-1}}\sum_{i=1}^d \sum_{\nu:\nu_i=1}\E_{P_\nu}[|\{t:x_t=x^{(i)}, a_t\neq f_1(x^{(i)})\}|]\geq \frac{d}{32\Delta^2}.
\end{equation*}

Now we lower bound the cumulative regret
\begin{equation}
\begin{aligned}
        \E[\Reg_T] &= \frac{1}{2^d}\sum_{\nu\in\{\pm1\}^d}\E_{P_\nu}[\Reg_T]\geq \frac{1}{2^d}\sum_{i=1}^d\sum_{\nu:\nu_1=1}\E_{P_\nu}[\Reg_T]\\
           & \geq  \frac{1}{2^d}\sum_{i=1}^d\sum_{\nu:\nu_1=1}\E_{P_\nu}[|\{t:x_t=x^{(i)}, a_t\neq f_1(x^{(i)})\}|]\Delta\\
           &\geq \frac{d}{64\Delta}.
\end{aligned}
\end{equation}

\paragraph{Part II: simple regret.} Denote $\hat{\pi}$ as the policy learned from logged bandit data. 
For each $\nu$, we have
\begin{equation}
    \begin{aligned}
        \Reg_{\Pi}(\hat{\pi}) &\geq \sum_{i=1}^d P_\calD(x=x^{(i)})\sum_a \hat{\pi}(a;x^{(i)})(f_\nu(x^{(i)}, \pi_\nu(x^{(i)})) - f_\nu(x^{(i)}, a) )\\
        &=\frac{1}{d} \sum_{i=1}^d \sum_a \hat{\pi}(a;x^{(i)})(f_\nu(x^{(i)}, \pi_\nu(x^{(i)})) - f_\nu(x^{(i)}, a) )\\
        &\geq \frac{\Delta}{d} \sum_{i=1}^d  (1 -\hat{\pi}(\pi_\nu(x^{(i)}); x^{(i)})\\
        &\geq \frac{\Delta}{2d}\sum_{i=1}^d \I\{\hat{\pi}(\pi_\nu(x^{(i)}); x^{(i)})<1/2\}.
    \end{aligned}
\end{equation}
We therefore have
\begin{equation}
\label{eq:sr_lower_1}
\begin{aligned}
    \E_\calE[\Reg_{\Pi}(\hat{\pi})]&\geq \frac{1}{2^d}\sum_{\nu\in\{\pm 1\}^d} \E_{Q_\nu}\bigg[
    \frac{\Delta}{2d}\sum_{i=1}^d \I\{\hat{\pi}(\pi_\nu(x^{(i)}); x^{(i))})<1/2\}
    \bigg]\\
     & \geq  \frac{1}{2^d}\sum_{\nu\in\{\pm1\}^d } \frac{\Delta}{2d} \sum_{i=1}^d
    P_{\nu}\big(\hat{\pi}(\pi_\nu(x^{(i)});x^{(i)}) < 1/2 \big)\\
    & = \frac{\Delta }{d2^{d+1}} \sum_{i=1}^d  \sum_{\nu:\nu_i=1} \Bigg(
    P_{\nu}\big(\hat{\pi}(f_1(x^{(i)});x^{(i)}) < 1/2 \big)
    + 
    P_{M_i(\nu)}\big(\hat{\pi}(f_{-1}(x^{(i)});x^{(i)}) < 1/2 \big)
    \Bigg)\\
    & \geq  \frac{\Delta }{d2^{d+1}} \sum_{i=1}^d  \sum_{\nu:\nu_i=1} \Bigg(
    P_{\nu}\big(\hat{\pi}(f_1(x^{(i)});x^{(i)}) < 1/2 \big)
    + 
    P_{M_i(\nu)}\big(\hat{\pi}(f_{1}(x^{(i)});x^{(i)}) \geq 1/2 \big)
    \Bigg)\\
    & \geq  \frac{\Delta }{d2^{d+1}} \sum_{i=1}^d  \sum_{\nu:\nu_i=1} \Bigg(
    P_{\nu}\big(\hat{\pi}(f_1(x^{(i)});x^{(i)}) < 1/2 \big)
    + 1 - 
    P_{M_i(\nu)}\big(\hat{\pi}(f_{1}(x^{(i)});x^{(i)}) < 1/2 \big)
    \Bigg)\\
    & \geq  \frac{\Delta }{d2^{d+1}} \sum_{i=1}^d  \sum_{\nu:\nu_i=1} \bigg(
     1 - \|P_{\nu}- P_{M_i(\nu)}\|_{TV}
    \bigg)\\
    &\stackrel{(i)}{\geq } \frac{\Delta }{d2^{d+1}} \sum_{i=1}^d  \sum_{\nu:\nu_i=1} \exp\bigg(-D_{KL}(P_{\nu}, P_{M_i(\nu)})\bigg),
\end{aligned}
\end{equation}
where (i) is a result of Lemma \ref{lemma:tighter_tv_kl}.
\begin{lemma}[\cite{tsybakov2008introduction}, Lemma 2.6] 
\label{lemma:tighter_tv_kl}
Let $P$ and $Q$ be any two probability measures on the same measurable space. Then
\begin{equation*}
    1 - \|P-Q\|_{TV}\geq \frac{1}{2}\exp(-D_{KL}(P,Q)).
\end{equation*}
\end{lemma}

For any $\nu$ with $\nu_i=1$, we have
\begin{equation}
\begin{aligned}
        D_{KL}(P_{\nu}, P_{M(\nu)})&\leq \E_{P_\nu}[|\{t:x_t=x^{(i)}, a_t= f_{-1}(x^{(i)})\}|]D_{KL}(\mbox{Ber}(\frac{1}{2}), \mbox{Ber}(\frac{1}{2}+2\Delta)) \\
   &\leq \E_{P_\nu}[|\{t:x_t=x^{(i)}, a_t= f_{-1}(x^{(i)})\}|]D_{KL}(\mbox{Ber}(\frac{1}{2}), \mbox{Ber}(\frac{1}{2}+2\Delta)) \cdot 16\Delta^2, \\
   &=\frac{16\Delta^2}{d} \E_{P_\nu}\bigg[\sum_{t=1}^T p_t\big(f_{-1}(x^{(i)} ); x^{(i)}\big)\bigg] 
\end{aligned}
\end{equation}
Thus,
\begin{equation}
\begin{aligned}
         \E[\Reg_S] & \geq  \frac{\Delta }{d2^{d+1}} \sum_{i=1}^d  \sum_{\nu:\nu_i=1} \exp\Bigg(-
     \frac{16\Delta^2}{d} \E_{P_\nu}\bigg[\sum_{t=1}^T p_t\big(f_{-1}(x^{(i)} ); x^{(i)}\big)\bigg] 
     \Bigg)\\
     &=  \frac{\Delta }{d2^{d+1}} \sum_{i=1}^d  \sum_{\nu:\nu_i=1} \exp\Bigg(-
     \frac{16\Delta^2}{d} \E_{P_\nu}\bigg[\sum_{t=1}^T p_t\big(\pi_{M(\nu)}(x^{(i)} ); x^{(i)}\big)\bigg] 
     \Bigg)\\
     &\stackrel{(i)}{\geq } 
     \frac{\Delta}{4d}\sum_{i=1}^d \exp\Bigg(- \frac{1}{2^{d-1}} \sum_{\nu:\nu_i=1} 
     \frac{16\Delta^2}{d} \E_{P_\nu}\bigg[\sum_{t=1}^T p_t\big(\pi_{M(\nu)}(x^{(i)} ); x^{(i)}\big)\bigg] 
     \Bigg)\\
     &\geq  \frac{\Delta}{4d}\sum_{i=1}^d \exp\Bigg(- \frac{1}{2^{d-1}} \sum_{\nu} 
     \frac{16\Delta^2}{d} \E_{P_\nu}\bigg[\sum_{t=1}^T p_t\big(\pi_{M(\nu)}(x^{(i)} ); x^{(i)}\big)\bigg] 
     \Bigg)\\
     & = \frac{\Delta}{4} \exp\Bigg(-  
     \frac{32\Delta^2}{d} \mathbb{E}_\calE\bigg[\sum_{t=1}^T p_t\big(\pi_{M(\nu)}(x^{(i)} ); x^{(i)}\big)\bigg] 
     \Bigg),
\end{aligned}
\end{equation}
where (i) is by Jensen inequality. If at each time $t$, the probability of sampling from suboptimal arms is upper bounded by $g_t$, then 
\begin{align*}
    \frac{\Delta}{4} \exp\Bigg(-  
     \frac{32\Delta^2}{d} \mathbb{E}_\calE\bigg[\sum_{t=1}^T p_t\big(\pi_{M(\nu)}(x^{(i)} ); x^{(i)}\big)\bigg] 
     \Bigg)\geq \frac{\Delta}{4} \exp\Bigg(-  
     \frac{32\Delta^2}{d}\sum_{t=1}^T g_t
     \Bigg),
\end{align*}
which completes the proof.

\end{proof}
}

\section{Lower bound}

\thmLowerBound*
We prove \cref{thm:lower-new} in the following sub-sections.

\subsection{Basic Technical Results}

The following result is established in \cite{raginsky2011lower}, with this version taken from the proof of Lemma D.2 in \cite{foster2020instance}.

\begin{lemma}[Fano's inequality with reverse KL-divergence]
\label{lem:fano-reverse-KL}
Let $$\calH = (x_1,a_1,r_1(a_1)), \dots, (x_T,a_T,r_T(a_T)),$$ 
and let $\{\supscript{\mathbb{P}}{i}\}_{i\in[M]}$ be a collection of measures over $\calH$, where $M\geq 2$. Let $\calQ$ be any reference measure over $\calH$, and let $\mathbb{P}$ be the law of $(m^*,\calH)$ under the following process:
\begin{itemize}
    \item Sample $m^*$ uniformly from $[M]$.
    \item Sample $\calH\sim\supscript{\mathbb{P}}{m^*}$.
\end{itemize}
Then for any function $\hatm(\calH)$, if $\mathbb{P}(\hatm = \starm) \geq 1-\delta$, then
\begin{equation}
    \Big(1-\frac{1}{M}\Big)\log(1/\delta) - \log2 \leq \frac{1}{M} \sum_{i=1}^M \dkl(Q||\supscript{\mathbb{P}}{i}).
\end{equation}
\end{lemma}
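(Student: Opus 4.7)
The plan is to reduce the KL divergence between the joint measures to a binary KL via data processing, then average over the $M$ alternatives using joint convexity, and finally apply an explicit pointwise bound on the Bernoulli KL. First, for each $i\in[M]$, set $p_i := \supscript{\mathbb{P}}{i}(\hatm=i)$ and $q_i := \calQ(\hatm=i)$. Since the indicator $\I(\hatm=i)$ is a deterministic function of the statistic $\hatm(\calH)$, the data processing inequality yields $\dkl(\calQ \,\|\, \supscript{\mathbb{P}}{i}) \geq \dkl(q_i\,\|\,p_i)$, where the right-hand side denotes the binary KL between $\mathrm{Ber}(q_i)$ and $\mathrm{Ber}(p_i)$. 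Note that this is the step where the \emph{reverse} direction of the KL matters: data processing goes in the same direction regardless of which argument is the ``reference'', so the reduction still works.

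Averaging over $i$ and invoking the joint convexity of $\dkl(\cdot\,\|\,\cdot)$ gives
\begin{equation*}
\frac{1}{M}\sum_{i=1}^M \dkl(\calQ\,\|\,\supscript{\mathbb{P}}{i}) \;\geq\; \frac{1}{M}\sum_{i=1}^M \dkl(q_i\,\|\,p_i) \;\geq\; \dkl(\bar q\,\|\,\bar p),
\end{equation*}
where $\bar q := \frac{1}{M}\sum_i q_i$ and $\bar p := \frac{1}{M}\sum_i p_i$. The hypothesis $\mathbb{P}(\hatm=\starm)\geq 1-\delta$ combined with $\starm\sim\mathrm{Uniform}([M])$ is exactly $\bar p \geq 1-\delta$. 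Meanwhile $\sum_i q_i \leq 1$ since $\hatm$ outputs at most one value, so $\bar q \leq 1/M$.

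It then remains to lower bound $\dkl(\bar q\,\|\,\bar p)$ given $\bar q \leq 1/M$ and $\bar p \geq 1-\delta$. Expanding the binary KL as $-H(\bar q) - \bar q \log \bar p - (1-\bar q)\log(1-\bar p)$, with $H$ the binary entropy, and bounding $H(\bar q)\leq \log 2$, $-\bar q\log \bar p \geq 0$ (since $\bar p \leq 1$), and $(1-\bar q)\log(1/(1-\bar p)) \geq (1-1/M)\log(1/\delta)$ (both factors are non-negative and each is at least the claimed quantity by monotonicity), we obtain $\dkl(\bar q\,\|\,\bar p) \geq (1-1/M)\log(1/\delta) - \log 2$, which chains with the previous display to give the claim. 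The only delicate step is verifying that data processing and joint convexity both point the required way when $\calQ$ sits in the \emph{first} argument; both of these standard inequalities do, so no additional ingredients are required, and this is essentially the only reason the reverse-KL form of Fano's inequality holds with exactly the stated constants.
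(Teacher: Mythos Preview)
The paper does not prove this lemma; it states the result and attributes it to \cite{raginsky2011lower} (in the form given by \cite{foster2020instance}). Your argument is a correct and standard derivation: data processing to reduce each $\dkl(\calQ\|\supscript{\mathbb{P}}{i})$ to a binary KL, averaging via joint convexity, and then the explicit lower bound on $\dkl(\bar q\|\bar p)$ using $H(\bar q)\le\log 2$ and the monotonicity in each coordinate. All steps check out, including the direction of data processing and convexity with $\calQ$ in the first argument.
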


\subsection{Construction}

If $K\leq 10$ or $T\leq 152^2K\log F$ or $\phi\geq K$, our lower bound directly follows from the cumulative regret lower bound in \cite{foster2020instance}. Hence, without loss of generality, we can assume $K\geq 10, T\geq 152^2K\log F,$ and $\phi\leq K$.

The following construction closely follows lower bound arguments in \cite{foster2020instance}. Let $\A = \{\supscript{a}{1}, \supscript{a}{2}, \dots, \supscript{a}{K} \}$ be an arbitrary set of discrete actions. Let $k = \lfloor 1/\epsilon \rfloor$, and $d$ be parameters that will be fixed later. With $\epsilon\in(0,1)$, note that $1/(2\epsilon) \leq k \leq 1/\epsilon$. We will now define the context set $\Xscript$ as the union of $d$ disjoint partitions $\supscript{\Xscript}{1}, \supscript{\Xscript}{2}, \dots, \supscript{\Xscript}{d}$, where $\supscript{\Xscript}{i} = \{\supscript{x}{i,0}, \supscript{x}{i,1},\dots, \supscript{x}{i,k} \}$ for all $i\in[d]$. Hence, we have $\Xscript = \cup \supscript{\Xscript}{i}$  and $|\Xscript|=d(k+1)$.

For each partition index $i\in[d]$, we construct a policy class $\supscript{\Pi}{i} \subseteq (\supscript{\Xscript}{i} \rightarrow \A)$ as follows. First we let $\supscript{\pi}{i,0}:\supscript{\Xscript}{i}\rightarrow\A$ be the policy that always selects arm $\supscript{a}{1}$, and let $\supscript{\pi}{i,l,b}:\supscript{\Xscript}{i}\rightarrow\A$ be defined as follows for all $l\in [k]$ and $b\in\A_0:=\A\setminus\{\supscript{a}{1}\}$,
\begin{equation}
    \forall \supscript{x}{i,j}\in\supscript{\Xscript}{i}, \;  \supscript{\pi}{i,l,b}(\supscript{x}{i,j})=
        \begin{cases}
        \supscript{a}{1}, & \text{ if $j \neq l$},\\
        b, & \text{ if $j=l$}.
        \end{cases}
\end{equation}
Construct $\supscript{\Pi}{i}:=\{\supscript{\pi}{i,l,b}|l\in [k] \text{ and } b\in\A_0 \}\cup\{\supscript{\pi}{i,0} \}$.\footnote{Here $[k]=\{1,2,\dots,k\}$} Finally, we let $\Pi := \supscript{\Pi}{1}\times \supscript{\Pi}{2} \times \dots \times \supscript{\Pi}{d}$. We will now construct a reward model class $\F$ that induces $\Pi$.

Let $\Delta:=1/4$. For each partition index $i\in[d]$, we construct a reward model class $\supscript{\F}{i} \subseteq (\supscript{\Xscript}{i}\times\A \rightarrow [0,1])$ as follows. First we let $\supscript{f}{i,0}:\supscript{\Xscript}{i}\times\A \rightarrow [0,1]$ be defined as follows,
\begin{equation}
     \forall (\supscript{x}{i,j},a)\in\supscript{\Xscript}{i}\times\A, \; \supscript{f}{i,0}(\supscript{x}{i,j},a) =
        \begin{cases}
        \frac{1}{2}+\Delta, & \text{ if $a=\supscript{a}{0}$},\\
        \frac{1}{2}, & \text{ if $a\in\A_0$}.
        \end{cases}
\end{equation}
For all $l\in [k]$ and $b\in\A_0$, we define $\supscript{f}{i,l,b}:\supscript{\Xscript}{i}\times\A \rightarrow [0,1]$ as follows,
\begin{equation}
    \forall (\supscript{x}{i,j},a)\in\supscript{\Xscript}{i}\times\A, \; \supscript{f}{i,l,b}(\supscript{x}{i,j},a) =
        \begin{cases}
        \frac{1}{2}+\Delta, & \text{ if $a=\supscript{a}{0}$}\\
        \frac{1}{2}+2\Delta, & \text{ if $j=l$ and $a=b$},\\
        \frac{1}{2}, & \text{ otherwise.}
        \end{cases}
\end{equation}
Note that $\supscript{f}{i,l,b}$ differs from $\supscript{f}{i,0}$ only at context $(\supscript{x}{i,l},b)$. Construct $\supscript{\F}{i}:=\{\supscript{f}{i,l,b}|l\in [k] \text{ and } b\in\A_0 \}\cup\{\supscript{f}{i,0} \}$. Finally, we let $\F := \supscript{\F}{1}\times \supscript{\F}{2} \times \dots \times \supscript{\F}{d}$. %
Hence, we have,
\begin{equation}
\label{eq:upper-bound-d-in-lb}
    |\F| = |\supscript{\F}{i}|^d \leq (k \cdot K)^d \implies d \geq \frac{\log|\F|}{\log(K\cdot k)} \geq \frac{\log|\F|}{\log(K/\epsilon)}.
\end{equation}
We choose $d$ to be the largest value such that $F\geq (k\cdot K)^d$. Hence we choose $d = \lfloor \log F/\log (K\cdot k) \rfloor \geq \log F/(2\log (K\cdot k))  $.

To use \cref{lem:fano-reverse-KL}, we will describe a collection of environments that share a common distribution over contexts and only differ in the reward distribution. The context distribution $D_{\Xscript}$ is given by $D_{\Xscript}:=\frac{1}{d}\sum_i \supscript{D_{\Xscript}}{i}$, where $\supscript{D_{\Xscript}}{i}$ is a distribution over $\supscript{\Xscript}{i}$, with $\epsilon$ probability of sampling each context in $\supscript{\Xscript}{i}\setminus\{\supscript{x}{i,0}\}$, and $1-k\epsilon$ probability of sampling the context $\supscript{x}{i,0}$. %

For each block $\supscript{\calX}{i}$, we let $\supscript{\mathbb{P}}{i,0}$ denote the law given by the reward distribution $r(a)\sim\Ber(\supscript{f}{i,0}(x,a))$ for all $x\in\supscript{\calX}{i}$. Further, for any $l\in[k]$ and $b\in\A_0$, we let $\supscript{\mathbb{P}}{i,l,b}$ denote the law given by the reward distribution $r(a)\sim\Ber(\supscript{f}{i,l,b}(x,a))$ for all $x\in\supscript{\calX}{i}$. For any policy $\pi\in\supscript{\Pi}{i}$, we let $\supscript{R}{i,l,b}(\pi)=\E_{
\supscript{\mathbb{P}}{i,l,b}}[r(\pi(x))]$ denote expected reward under $\supscript{\mathbb{P}}{i,l,b}$, and let $\supscript{\Reg}{i,l,b}(\pi)=\supscript{R}{i,l,b}(\supscript{\pi}{i,l,b})-\supscript{R}{i,l,b}(\pi)$ denote expected simple regret under $\supscript{\mathbb{P}}{i,l,b}$.

We use $\rho$ to index environments. Here $\rho=(\rho_1,\dots,\rho_d)$, where $\rho_i=(l_i,b_i)$ for $l_i\in\{0,1,\dots,k\}$ and $b_i\in\A_0$. We let $\mathbb{P}_{\rho}$ denote an environment with the law $\supscript{\mathbb{P}}{i,l_i,b_i}$ for contexts in $\supscript{\calX}{i}$.\footnote{Here $\supscript{\mathbb{P}}{i,0}\equiv \supscript{\mathbb{P}}{i,0,b}$ for all $b\in\A_0$.} Finally let $\pi_{\rho}$ denote the optimal policy under $\mathbb{P}_{\rho}$, and let $\supscript{\pi_{\rho}}{i}$ denote its restriction to $\supscript{\calX}{i}$. Let $\E_{\rho}[\cdot]$ denote the expectation under $\mathbb{P}_{\rho}$. Let $R_{\rho}(\pi)=\E_{\rho}[r(\pi(x))]$ denote the expected reward of $\pi$ under $\mathbb{P}_{\rho}$, and let $\Reg_{\rho}(\pi)=R_{\rho}(\pi_{\rho})-R_{\rho}(\pi)$ denote the simple regret of $\pi$ under $\mathbb{P}_{\rho}$. 

\subsection{Lower bound argument}

We sample $\rho$ from a distribution $\nu$ defined as follows. For each $i\in[d]$, set $l_i=0$ with probability $0.5$, otherwise $l_i$ is selected uniformly from $[k]$. Select $b_i$ uniformly from $\A_0$. Note that when $l_i=0$, we disregard the value of $b_i$.

We let $\hat{\pi}_{\mathbf{A}}\in\Pi$ denotes the policy recommended by the contextual bandit algorithm $\mathbf{A}$ at the end of $T$ rounds, and let $\supscript{\hat{\pi}_{\mathbf{A}}}{i}\in\supscript{\Pi}{i}$ be the restriction of $\hat{\pi}_{\mathbf{A}}$ to block $\supscript{\calX}{i}$. Note that the policy recommended by $\mathbf{A}$ will depend on the environment $\rho$. 

Let $\calI:=\{i\in[d]| \supscript{\Reg}{i,l_i,b_i}(\supscript{\pi_{\mathbf{A}}}{i}) \leq 19\sqrt{\phi\log F/T} \}$. Since we only consider algorithms that guarantee the following with probability at least $19/20$,
\begin{equation}
    \frac{1}{d} \sum_{i=1}^d \supscript{\Reg}{i,l_i,b_i}(\supscript{\pi_{\mathbf{A}}}{i}) = \Reg_{\rho}(\hat{\pi}_{\mathbf{A}}) \leq \sqrt{\phi\log F/T}. 
\end{equation}
Under this event, we have that at most $d/19$ block indices satisfy $\supscript{\Reg}{i,l_i,b_i}(\supscript{\pi_{\mathbf{A}}}{i}) > 19\sqrt{\phi\log F/T}$. Therefore, we have $|\calI|\geq 18d/19$.\\
Define event $M_{i} = \{i\in\calI\}$. We have
\begin{equation}
\label{eq:mi}
\begin{aligned}
     \sum_{i=1}^d P(M_{i}) = \sum_{i=1}^d \E[1\{i\in \calI\}\}]\geq \frac{19}{20}\E[|\calI||\Reg_{\rho}(\hat{\pi}_{\mathbf{A}}) \leq \sqrt{\phi\log F/T}]\geq \frac{9d}{10}.
\end{aligned}
\end{equation}
Consider any fixed index $i$, under the event $M_{i}$, we have the following. First observe for any $(l,b)\neq (l',b')$, we have $\supscript{\Reg}{i,l,b}(\supscript{\pi}{i,l',b'})\geq \epsilon\Delta$. Let $(\hat{l}_i,\hat{b}_i)$ be indices such that $\supscript{\pi_{\mathbf{A}}}{i}=\supscript{\pi}{i,\hat{l}_i,\hat{b}_i}$. We now choose $\epsilon$ such that,
\begin{equation}
\label{eq:choose-epsilon-in-lb}
    \epsilon = \frac{38}{\Delta}\sqrt{\frac{\phi \log F}{T}} \iff \frac{\epsilon\Delta}{2} = 19\sqrt{\frac{\phi \log F}{T}}.
\end{equation}
Hence from definition of $\calI$, we have $\supscript{\Reg}{i,l_i,b_i}(\supscript{\pi_{\mathbf{A}}}{i}) \leq \epsilon\Delta/2$. Further since $\supscript{\Reg}{i,l_i,b_i}(\pi)\geq \epsilon\Delta$ for all $\pi\in\supscript{\Pi}{i}\setminus\{\supscript{\pi}{i,l_i,b_i} \}$, we have $(\hat{l}_i,\hat{b}_i) = (l^*_i,b^*_i)$.

Restating the above result, we have the following. For any $i\in\calI$, with probability $1-1/16$, we have $(\hat{l}_i,\hat{b}_i) = (l^*_i,b^*_i)$. Hence from \cref{lem:fano-reverse-KL}, we have,
\begin{equation}
\label{eq:apply-fano-reverse-kl}
    \begin{aligned}
    &\Big(1 - \frac{1}{(K-1)k}\Big)\log\Big(1/P\big(\overline{M_{1}}\big)\Big) - \log2 \\
    &\leq \frac{1}{(K-1)k} \sum_{l=1}^k\sum_{b\in\A_0} \dkl(\supscript{\mathbb{P}}{i,0}||\supscript{\mathbb{P}}{i,l,b})\\
    &\stackrel{(i)}{=} \frac{1}{(K-1)k} \sum_{l=1}^k\sum_{b\in\A_0} \dkl(\Ber(1/2)||\Ber(1/2+2\Delta))\E_{\supscript{\mathbb{P}}{i,0}}[|\{t|x_t=\supscript{x}{i,l},a_t=b \}|]\\
    &\stackrel{(ii)}{\leq} \frac{1}{(K-1)k} \sum_{l=1}^k\sum_{b\in\A_0} 4\Delta^2\E_{\supscript{\mathbb{P}}{i,0}}[|\{t|x_t=\supscript{x}{i,l},a_t=b \}|]\\
    &= \frac{4\Delta^2}{(K-1)k} \E_{\supscript{\mathbb{P}}{i,0}}[|\{t|x_t\in\supscript{\calX}{i}\setminus\{\supscript{x}{i,0}\},a_t\in\A_0 \}|].
\end{aligned}
\end{equation}
Where (i) follows from the fact that $\supscript{\mathbb{P}}{i,0}$ and $\supscript{\mathbb{P}}{i,l,b}$ are identical unless $x_t=\supscript{x}{i,l}$ and $a_t=b$, and (ii) follows from $\Delta\leq 1/4$. Clearly we have:
\begin{equation}
    \begin{aligned}
             &\E_{\rho\sim\nu} \E_{\rho} \bigg[\sum_{t=1}^T \big(r_t(\pi^*(x_t)) - r_t(a_t) \big) \bigg]\\
             & \geq \Delta \E_{\rho\sim\nu} \E_{\rho}\bigg[\sum_{t=1}^T \sum_{i=1}^d \I\Big(\Big\{x_t\in\supscript{\calX}{i}\setminus\{\supscript{x}{i,0}\}, a_t\in\A_0, l_i=0 \Big\} \Big) \bigg] \\
             & \stackrel{(i)}{\geq} \frac{\Delta}{2} \sum_{i=1}^d\E_{\supscript{\mathbb{P}}{i,0}}\bigg[ |\Big\{t|\;x_t\in\supscript{\calX}{i}\setminus\{\supscript{x}{i,0}\}, a_t\in\A_0 \Big\} | \bigg] \\
             &\stackrel{(ii)}{\geq}\frac{\Delta}{2}\sum_{i=1}^d \bigg(-\Big(1 - \frac{1}{k(K-1)}\Big)\log\big(P\big(\overline{M_{i}}\big) \big) - \log2\bigg) \cdot \frac{(K-1)k}{4\Delta^2}\\
             &=\sum_{i=1}^d \bigg(-\Big(1 - \frac{1}{k(K-1)}\Big)\log\big(1-P(M_{i}) \big) - \log2\bigg) \cdot \frac{(K-1)k}{8\Delta}\\
             &\stackrel{(iii)}{\geq} \sum_{i=1}^d \bigg(\Big(1 - \frac{1}{k(K-1)}\Big)P(M_{i}) - \log2\bigg) \cdot \frac{(K-1)k}{8\Delta}\\
              &\stackrel{(iv)}{\geq} \Big\{\Big(1 - \frac{1}{k(K-1)}\Big)\frac{9}{10} - \log 2\Big\} \cdot \frac{(K-1)kd}{8\Delta} \\
              &\stackrel{(v)}{\geq} \frac{Kkd}{100\Delta} \stackrel{(vi)}{\geq} \frac{dK}{200\Delta\epsilon} \stackrel{(vii)}{=} \frac{1}{7600} \sqrt{\frac{T d^2 K^2}{\phi \log F}} \stackrel{(viii)}{\geq} \frac{1}{15200} \sqrt{\frac{K^2 T \log F}{\phi \log^2 (K\cdot k)}}\\ 
             & \stackrel{(ix)}{\geq} \frac{1}{15200} \sqrt{\frac{K^2 T \log F}{\phi \log^2 (K\cdot T)}}.  \\
    \end{aligned}
\end{equation}
Where (i) follows from the fact that $\nu(l_i=0)=1/2$, (ii) follows from \eqref{eq:apply-fano-reverse-kl} and that $|\calI|\geq d/2$, (iii) uses $\log(1+x)\leq x$, for $x>-1$, , (iv) uses \eqref{eq:mi}, (v) follows from $k\geq 1$ and $K\geq 10$, (vi) follows from $k\geq 1/(2\epsilon)$, (vii) follows from choice of $\epsilon$, (viii) follows from \eqref{eq:upper-bound-d-in-lb}, and (ix) since $k\leq 1/\epsilon \stackrel{\ref{eq:choose-epsilon-in-lb}}{=} \frac{1}{152}\sqrt{\frac{T}{\phi \log F}} \leq T$. This completes the proof of \cref{thm:lower-new}.

\section{Additional Details}

\subsection{Conformal Arm Sets}
\label{sec:basic-result-on-CAS}
The below lemma shows that, for any given policy $\pi$, the conformal arm sets given in \cref{def:confidence_set} can be probabilistically relied on (over the distribution of contexts) to contain arms recommended by  $\pi$, with low regret under the models estimated up to epoch $m$. Recall we earlier define $U_{m}=20\sqrt{\alpha_{m-1}\xi_{m}}$.

\begin{restatable}[Conformal Uncertainty]{lemma}{lemConformalUncertainty}
\label{lem:conformal-uncertainty} 
For any policy $\pi$ and epoch $m$, we have:
\begin{equation}
    \Pr_{x\sim D_{\Xscript},a\sim \pi(\cdot|x) }(a\in C_{m}(x,\zeta)) \geq 1 - \zeta\sum_{\barm \in [m]}\frac{ \Reg_{\hatf_{\barm}}(\pi)}{ (2\barm^2)U_{\barm}}
\end{equation}
\end{restatable}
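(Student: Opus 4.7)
\textbf{Proof proposal for \Cref{lem:conformal-uncertainty}.}

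The plan is to control the complement event $\{a \notin C_m(x,\zeta)\}$ by a union bound over the $m$ level sets defining the CAS, followed by a one-line Markov inequality on each level set. Since by \Cref{def:confidence_set} we have $C_m(x,\zeta) \supseteq \bar C_m(x,\zeta) = \bigcap_{\barm\in[m]} \tilde C_{\barm}(x, \zeta/(2\barm^2))$, the event $\{a \notin C_m(x,\zeta)\}$ is contained in $\bigcup_{\barm\in[m]} \{a \notin \tilde C_{\barm}(x, \zeta/(2\barm^2))\}$. A union bound therefore reduces the claim to showing, for each $\barm \in [m]$,
\begin{equation*}
\Pr_{x\sim D_{\Xscript},\,a\sim\pi(\cdot|x)}\bigl(a \notin \tilde C_{\barm}(x, \zeta/(2\barm^2))\bigr) \;\leq\; \frac{\zeta\,\Reg_{\hatf_{\barm}}(\pi)}{2\barm^2 U_{\barm}}.
\end{equation*}

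For the per-$\barm$ bound, unpacking the definition of $\tilde C_{\barm}$ gives that $a \notin \tilde C_{\barm}(x,\zeta/(2\barm^2))$ is exactly the event $\{\hatf_{\barm}(x,\pi_{\hatf_{\barm}}(x)) - \hatf_{\barm}(x,a) > 2\barm^2 U_{\barm}/\zeta\}$. The random variable on the left-hand side is non-negative: by the convention from \Cref{sec:expanded-notation}, $\hatf_{\barm}(x,\pi_{\hatf_{\barm}}(x)) = \max_{a'} \hatf_{\barm}(x,a')$ for every $x$. Its expectation under $(x,a)\sim\pi$ is
\begin{equation*}
\E_{x\sim D_{\Xscript},\,a\sim\pi(\cdot|x)}\bigl[\hatf_{\barm}(x,\pi_{\hatf_{\barm}}(x)) - \hatf_{\barm}(x,a)\bigr] \;=\; R_{\hatf_{\barm}}(\pi_{\hatf_{\barm}}) - R_{\hatf_{\barm}}(\pi) \;=\; \Reg_{\hatf_{\barm}}(\pi).
\end{equation*}
Markov's inequality then delivers precisely the required tail bound.

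Combining the two steps, summing the per-$\barm$ bounds over $\barm \in [m]$ yields
\begin{equation*}
\Pr_{x\sim D_{\Xscript},\,a\sim\pi(\cdot|x)}\bigl(a \notin C_m(x,\zeta)\bigr) \;\leq\; \zeta \sum_{\barm\in[m]} \frac{\Reg_{\hatf_{\barm}}(\pi)}{2\barm^2 U_{\barm}},
\end{equation*}
and taking complements gives the stated inequality. There is no substantive obstacle here: the design of $\tilde C_{\barm}$ with threshold $U_{\barm}/\zeta'$ is tailored so that a direct Markov application produces a bound scaling linearly in $\zeta'$, and the $1/(2\barm^2)$ allocation inside $\bar C_m$ is exactly what allows the union bound to collapse back to a sum over $\barm$ with the $1/(2\barm^2)$ factors preserved. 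The only subtlety worth noting in the write-up is the non-negativity of $\hatf_{\barm}(x,\pi_{\hatf_{\barm}}(x)) - \hatf_{\barm}(x,a)$, which relies on the measurability/optimality convention for $\pi_f$ established in \Cref{sec:expanded-notation}.
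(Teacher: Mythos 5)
Your proposal is correct and follows exactly the paper's own argument: a union bound over the events $\{a \notin \Tilde{C}_{\barm}(x,\zeta/(2\barm^2))\}$ for $\barm\in[m]$, then Markov's inequality applied to the non-negative quantity $\hatf_{\barm}(x,\pi_{\hatf_{\barm}}(x)) - \hatf_{\barm}(x,a)$, whose expectation under $(x,a)\sim\pi$ is $\Reg_{\hatf_{\barm}}(\pi)$. No differences worth noting.
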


\begin{proof}
For any policy $\pi$, we have \eqref{eq:confidence_set} holds.
\begin{equation}
\label{eq:confidence_set}
\begin{aligned}
        &\Pr_{x\sim D_{\Xscript},a\sim \pi(\cdot|x) }(a\notin C_{m}(x,\zeta))\\
        &\leq \Pr_{x\sim D_{\Xscript},a\sim \pi(\cdot|x) }\Big(\bigcup_{\barm\in[m]} \{a\notin \Tilde{C}_{\barm}(x,\zeta/(2\barm^2)) \}\Big) \\
        &\stackrel{(i)}{\leq} \sum_{\barm\in[m]}\Pr_{x\sim D_{\Xscript},a\sim \pi(\cdot|x) }\Big( \hatf_{\barm}(x, \pi_{\hatf_{\barm}}(x)) - \hatf_{\barm}(x,a)> \frac{(2\barm^2)U_{\barm}}{\zeta}\Big)\\
        &\stackrel{(ii)}{\leq} \sum_{\barm\in[m]}\frac{ \Reg_{\hatf_{\barm}}(\pi)}{(2\barm^2)U_{\barm}/\zeta}.
\end{aligned}
\end{equation}
Where (i) follows from union bound and (ii) follows from Markov's inequality.
\end{proof} 
Recall that right after \Cref{lemma:regret-under-algsafe-epoch}, we show that $\Reg_{\hatf_{\barm}}\leq U_{\barm}$ with high-probability for any $\barm\in[\msafealg]$. Hence, \Cref{lem:conformal-uncertainty} gives us that with high-probability we have $\Pr_{x\sim D_{\Xscript},a\sim \pi^*(\cdot|x) }(a\in C_{m}(x,\zeta))\geq 1-\zeta$. While we don't directly use \Cref{lem:conformal-uncertainty}, this lemma helps demonstrate the utility of CASs.

\subsection{Argument for Surrogate Objective}
\label{app:prove-objective}

\Cref{lem:variance-penalized-policy-learning} is a self-contained result proving that guarantying tighter bounds on the optimal cover leads to tighter simple regret bounds for any contextual bandit algorithm. Hence the optimal cover is a valid surrogate objective for simple regret. This lemma is not directly used in the analysis of $\omega$-RAPR, however similar results (see \Cref{theorem:Simple-Regret-for-RAPR}) were proved and used. Note that the parameters below (including $\alpha$) are not directly related to parameters maintained by $\omega$-RAPR.

\begin{restatable}[Valid Surrogate Objective]{lemma}{lemPolicyLearning}
\label{lem:variance-penalized-policy-learning} 
Suppose $\Pi$ is a finite class and suppose a contextual bandit algorithm collects $T$ samples using kernels $(p_t)_{t\in[T]}$ such that $p_t(\cdot|\cdot)\geq \sqrt{\frac{\ln(4|\Pi|/\delta)}{\alpha T}}$. %
Further suppose that the following condition holds with some $\alpha\in[1,\infty)$:
\begin{equation}
    \label{eq:objective-minimized}
    \frac{1}{T}\sum_{t=1}^T V(p_t,\pi^*) \leq \alpha
\end{equation}
Then we can estimate a policy $\hat{\pi}\in \Pi$ such that with probability at least $1-\delta$, we have:
\begin{equation}
    |R(\pi^*)-R(\hat{\pi})| \leq \ordO\bigg(\sqrt{\frac{\alpha \ln(4|\Pi|/\delta)}{T}}\bigg).
\end{equation}
\end{restatable}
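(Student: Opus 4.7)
The plan is to apply a variance-penalized inverse-propensity-score (IPS) policy learner. Define the reward and cover estimators
\begin{align*}
    \hat R(\pi) := \frac{1}{T}\sum_{t=1}^T \frac{\pi(a_t|x_t)\,r_t(a_t)}{p_t(a_t|x_t)}, \qquad \hat V(\pi) := \frac{1}{T}\sum_{t=1}^T \left(\frac{\pi(a_t|x_t)}{p_t(a_t|x_t)}\right)^2.
\end{align*}
Since $\pi(a|x)\in\{0,1\}$, one checks $\E[\hat R(\pi)\mid (p_t)_t]=R(\pi)$, $\E[\hat V(\pi)\mid (p_t)_t]=\bar V(\pi):=\tfrac{1}{T}\sum_t V(p_t,\pi)$, and that the per-step second moment of the reward term is bounded by $V(p_t,\pi)$. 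Writing $L:=\log(8|\Pi|/\delta)$, the hypothesized kernel lower bound forces the per-step magnitudes of the reward and variance estimators to be bounded by $M_R:=\sqrt{\alpha T/L}$ and $M_V:=M_R^2$, so $M_R L/T=\ordO(\sqrt{\alpha L/T})$ and $M_V L/T=\ordO(\alpha)$.

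Next, apply Freedman's inequality (Bernstein for martingales) separately to $\hat R(\pi)-R(\pi)$ and $\hat V(\pi)-\bar V(\pi)$, then union bound over $\pi\in\Pi$. With probability $\geq 1-\delta$, simultaneously for all $\pi\in\Pi$,
\begin{align*}
    |\hat R(\pi) - R(\pi)| &\leq \sqrt{2\bar V(\pi)\,L/T} + \ordO\big(\sqrt{\alpha L/T}\big), \\
    |\hat V(\pi) - \bar V(\pi)| &\leq \sqrt{2 M_V \bar V(\pi)\,L/T} + \ordO(\alpha),
\end{align*}
and standard rearrangement of the second line gives the two-sided comparison $\tfrac{1}{2}\bar V(\pi)-\ordO(\alpha)\leq \hat V(\pi)\leq 2\bar V(\pi)+\ordO(\alpha)$.

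I would then set the learner
\[
    \hat\pi := \argmax_{\pi \in \Pi}\; \hat R(\pi) - \lambda \sqrt{\hat V(\pi)}, \qquad \lambda := 2\sqrt{L/T}.
\]
For $\pi^*$, the hypothesis $\bar V(\pi^*)\leq \alpha$ combined with the variance comparison gives $\hat V(\pi^*)\leq \ordO(\alpha)$, so $\hat R(\pi^*)-\lambda\sqrt{\hat V(\pi^*)}\geq R(\pi^*)-\ordO(\sqrt{\alpha L/T})$. For $\hat\pi$, the reward concentration gives $R(\hat\pi)\geq \hat R(\hat\pi)-\sqrt{2\bar V(\hat\pi)L/T}-\ordO(\sqrt{\alpha L/T})$; using $\bar V(\hat\pi)\leq 2\hat V(\hat\pi)+\ordO(\alpha)$ and the choice of $\lambda$, the deviation term is absorbed into $\lambda\sqrt{\hat V(\hat\pi)}+\ordO(\sqrt{\alpha L/T})$, i.e.\ $R(\hat\pi)\geq \hat R(\hat\pi)-\lambda\sqrt{\hat V(\hat\pi)}-\ordO(\sqrt{\alpha L/T})$. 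Chaining through the optimality of $\hat\pi$ in the penalized objective yields $R(\pi^*)-R(\hat\pi)\leq \ordO(\sqrt{\alpha L/T})$, which is the claim.

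The main obstacle is constant bookkeeping in the final step: one must choose $\lambda$ so the penalty $\lambda\sqrt{\hat V(\hat\pi)}$ \emph{tightly} absorbs the Freedman deviation $\sqrt{2\bar V(\hat\pi)L/T}$ uniformly over candidate $\hat\pi$, while ensuring the $\ordO(\alpha)$ slack in the empirical--population variance comparison does not inflate the bound at $\pi^*$ beyond the target order. The kernel lower bound $p_t\geq \sqrt{L/(\alpha T)}$ is exactly what converts the second-order Freedman term $M_R L/T$ into the same $\ordO(\sqrt{\alpha L/T})$ order as the leading variance term, so the matching occurs without extra conditions.
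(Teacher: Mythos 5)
Your proposal is correct and follows essentially the same route as the paper's proof: an IPS value estimator with Freedman concentration (second moment bounded by $V(p_t,\pi)$), a separate Freedman bound relating an empirical cover estimate to $\frac{1}{T}\sum_t V(p_t,\pi)$, a cover-penalized empirical policy optimization, and the same chaining through the optimality of $\hat{\pi}$ and the hypothesis $\frac{1}{T}\sum_t V(p_t,\pi^*)\leq\alpha$. The only differences are cosmetic: the paper estimates the cover by $\frac{1}{T}\sum_t 1/p_t(\pi(x_t)|x_t)$ rather than your squared-importance-weight version, and folds the $\ordO(\alpha)$ slack into the penalty as $2\sqrt{\tfrac{\ln(4|\Pi|/\delta)}{T}(2\alpha+\hat V(\pi))}$ rather than using a separate $\lambda\sqrt{\hat V(\pi)}$, but both choices yield the same bound.
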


\begin{proof}
WOLG we assume $T\geq \ln(4|\Pi|/\delta)$, since otherwise the result trivially holds. Now consider any policy $\pi$. Let $y_t := \frac{r_t\I(\pi(x_t)=a_t)}{p_t(\pi(x_t)|x_t)}$. Now note that:
\begin{equation}
\label{eq:var-yt}
    \text{Var}_t[y_t] \leq \E_{D(p_t)}[y_t^2] = \E_{(x_t,a_t,r_t)\sim D(p_t)}\bigg[\frac{r_t^2\I(\pi(x_t)=a_t)}{p^2_t(\pi(x_t)|x_t)}\bigg] \leq \E_{x\sim D_{\Xscript}}\bigg[\frac{1}{p_t(\pi(x_t)|x_t)}\bigg]=V(p_t,\pi).
\end{equation}
Then from from a Freedman-style inequality \citep[See theorem 13 in ][]{dudik2011efficient}, we have with probability at least $1-\delta/(2|\Pi|)$ that the following holds:
\begin{equation}
\label{eq:freedman-policy-value}
\begin{aligned}
    &\bigg| \sum_{t=1}^T (y_t - R(\pi)) \bigg| \leq 2\max\Bigg\{ \sqrt{\sum_{t=1}^T\text{Var}(y_t)\ln(4|\Pi|/\delta)}, \frac{\ln(4|\Pi|/\delta)}{\sqrt{\frac{\ln(4|\Pi|/\delta)}{\alpha T}}}\Bigg\}\\
    \stackrel{(i)}{\implies} &\bigg| \frac{1}{T}\sum_{t=1}^T \frac{r_t\I(\pi(x_t)=a_t)}{p_t(\pi(x_t)|x_t)} - R(\pi) \bigg| \leq 2\sqrt{\frac{\ln(4|\Pi|/\delta)}{T}\max\Bigg\{ \frac{1}{T}\sum_{t=1}^T V(p_t,\pi), \alpha\Bigg\}}
\end{aligned}
\end{equation}
Here (i) follows from \eqref{eq:var-yt}. Similarly with probability at least $1-\delta/(2|\Pi|)$ the following holds:
\begin{equation}
\label{eq:freedman-variance-bound}
\begin{aligned}
    &\bigg| \frac{1}{T}\sum_{t=1}^T \frac{1}{p_t(\pi(x_t)|x_t)} - \frac{1}{T}\sum_{t=1}^T V(p_t,\pi) \bigg|\\ 
    &\stackrel{(i)}{\leq} \frac{2}{T}\max\Bigg\{ \sqrt{\sum_{t=1}^T\text{Var}\Big(\frac{1}{p_t(\pi(x_t)|x_t)}\Big)\ln(4|\Pi|/\delta)}, \frac{\ln(4|\Pi|/\delta)}{\sqrt{\frac{\ln(4|\Pi|/\delta)}{\alpha T}}}\Bigg\}\\
    &\stackrel{(ii)}{\leq} \frac{2}{T}\max\Bigg\{ \sqrt{T\frac{\alpha T}{\ln(4|\Pi|/\delta)}\ln(4|\Pi|/\delta)}, \sqrt{\alpha T\ln(4|\Pi|/\delta)}\Bigg\} \stackrel{(iii)}{\leq} 2\sqrt{\alpha} \stackrel{(iv)}{\leq} 2\alpha.
\end{aligned}
\end{equation}
Here (i) follows from Freedman's inequality, (ii) follows from the lower bound on $p_t$, (iii) follows from $T\geq \ln(4|\Pi|/\delta)$, and (iv) follows from $\alpha\geq 1$. Hence the above events hold with probability at least $1-\delta$ for all policies $\pi\in\Pi$. Now let $\hat{\pi}$ be given as follows.
\begin{equation}
\label{eq:hatpi-freedman}
    \hat{\pi} \in \argmax_{\pi\in\Pi} \frac{1}{T}\sum_{t=1}^T \frac{r_t\I(\pi(x_t)=a_t)}{p_t(\pi(x_t)|x_t)} - 2\sqrt{\frac{\ln(4|\Pi|/\delta)}{T}\Bigg( 2\alpha + \frac{1}{T}\sum_{t=1}^T \frac{1}{p_t(\pi(x_t)|x_t)}\Bigg)}
\end{equation}
We then have the following lower bound on $R(\hat{\pi})$ using the definition of $\hat{\pi}$ and the above to high-probability events.
\begin{equation}
\begin{aligned}
    &R(\hat{\pi})\stackrel{(i)}{\geq} \frac{1}{T}\sum_{t=1}^T \frac{r_t\I(\hat{\pi}(x_t)=a_t)}{p_t(\hat{\pi}(x_t)|x_t)} - 2\sqrt{\frac{\ln(4|\Pi|/\delta)}{T}\max\Bigg\{ \frac{1}{T}\sum_{t=1}^T V(p_t,\hat{\pi}), \alpha\Bigg\}} \\ 
    &\stackrel{(ii)}{\geq} \frac{1}{T}\sum_{t=1}^T \frac{r_t\I(\hat{\pi}(x_t)=a_t)}{p_t(\hat{\pi}(x_t)|x_t)} - 2\sqrt{\frac{\ln(4|\Pi|/\delta)}{T}\Bigg( 2\alpha + \frac{1}{T}\sum_{t=1}^T \frac{1}{p_t(\hat{\pi}(x_t)|x_t)}\Bigg)}\\
    &\stackrel{(iii)}{\geq} \frac{1}{T}\sum_{t=1}^T \frac{r_t\I(\pi^*(x_t)=a_t)}{p_t(\pi^*(x_t)|x_t)} - 2\sqrt{\frac{\ln(4|\Pi|/\delta)}{T}\Bigg( 2\alpha + \frac{1}{T}\sum_{t=1}^T \frac{1}{p_t(\pi^*(x_t)|x_t)}\Bigg)}\\
    &\stackrel{(iv)}{\geq} R(\pi^*) - 4\sqrt{\frac{\ln(4|\Pi|/\delta)}{T}\Bigg( 4\alpha + \frac{1}{T}\sum_{t=1}^T V(p_t,\pi^*)\Bigg)} \geq R(\pi^*) - 4\sqrt{\frac{5\alpha\ln(4|\Pi|/\delta)}{T}} 
\end{aligned}
\end{equation}
Here (i) follows from \eqref{eq:freedman-policy-value}, (ii) follows from \eqref{eq:freedman-variance-bound}, (iii) follows from \eqref{eq:hatpi-freedman}, and (iv) follows from \eqref{eq:freedman-policy-value} and \eqref{eq:freedman-variance-bound}. This completes the proof.
\end{proof}

\subsection{Testing Misspecification via CSC}

We restate the misspecification test that is used at the end of epoch $m$ and argue how this test can be solved via two calls to a cost sensitive classification solver. First, let us restate the test in \eqref{eq:misspecification-test-restated}.
\begin{equation}
\label{eq:misspecification-test-restated}
    \begin{aligned}
             &\max_{\pi\in\Pi\cup\{p_{m+1}\}}{|\hatR_{m+1,\hatf_{m+1}}(\pi)-\hatR_{m+1}(\pi)|}  - {\sqrt{\alpha_m\xi_{m+1}}\sum_{\barm\in[m]}\frac{\hatR_{m+1,\hatf_{\barm}}(\pi_{\hatf_{\barm}})-\hatR_{m+1,\hatf_{\barm}}(\pi)}{40\barm^2 \sqrt{\alpha_{\barm-1}\xi_{\barm}}}}\\ 
             &\leq 2.05\sqrt{\alpha_m\xi_{m+1}} + 1.1\sqrt{\xi_{m+1}},
    \end{aligned}
\end{equation}
We are interested in calculating the value of the maximization problem in \eqref{eq:misspecification-test-restated}. To calculate this maximum, we need to fix our estimators. Let $\hatR_{m+1,f}(\pi):=\frac{1}{|S_{m,3}|}\sum_{t\in S_{m,3}}f(x_t,\pi(x_t))=\frac{1}{|S_{m,3}|}\sum_{t\in S_{m,3}}\E_{a\sim \pi(\cdot|x_t)} f(x_t,a)$ for any policy $\pi$ and reward model $f$, which is the only obvious estimator we could think off for $R_f(\pi)$. Also let us use IPS estimaton for policy evaluation (the same argument works for DR), $\hatR_{m+1}(\pi):=\frac{1}{|S_{m,3}|}\sum_{t\in S_{m,3}}\frac{\pi(a_t|x_t)r_t(a_t)}{p_m(a_t|x_t)}$. \footnote{When evaluating a general kernel $q$, we use the natural extension of these estimators of policy value. In particular, simply replace $\pi(\cdot|x)$ with $q(\cdot|x)$ in their formulas.} \footnote{Up to constant factors, these estimators give us the best rates in \Cref{ass:policy-evaluation-oracles} with finite classes. These estimators are also used in several contextual bandit papers \citep[e.g.,][]{agarwal2014taming,li2022instance}.} Note that the value of the maximization problem in \eqref{eq:misspecification-test-restated} is equal to $\max(L_1,L_2,L_3)$, where $\{L_i|i\in[3]\}$ are defined as follows.
\begin{equation}
\label{eq:misspecification-test-decomposed}
    \begin{aligned}
             &L_1:=\max_{\pi\in\Pi}{\hatR_{m+1,\hatf_{m+1}}(\pi)-\hatR_{m+1}(\pi)}  - {\sqrt{\alpha_m\xi_{m+1}}\sum_{\barm\in[m]}\frac{\hatR_{m+1,\hatf_{\barm}}(\pi_{\hatf_{\barm}})-\hatR_{m+1,\hatf_{\barm}}(\pi)}{40\barm^2 \sqrt{\alpha_{\barm-1}\xi_{\barm}}}}\\ 
             &L_2:=\max_{\pi\in\Pi}{\hatR_{m+1}(\pi)-\hatR_{m+1,\hatf_{m+1}}(\pi)}  - {\sqrt{\alpha_m\xi_{m+1}}\sum_{\barm\in[m]}\frac{\hatR_{m+1,\hatf_{\barm}}(\pi_{\hatf_{\barm}})-\hatR_{m+1,\hatf_{\barm}}(\pi)}{40\barm^2 \sqrt{\alpha_{\barm-1}\xi_{\barm}}}}\\
             &L_3:={|\hatR_{m+1,\hatf_{m+1}}(p_{m+1})-\hatR_{m+1}(p_{m+1})|}  - {\sqrt{\alpha_m\xi_{m+1}}\sum_{\barm\in[m]}\frac{\hatR_{m+1,\hatf_{\barm}}(\pi_{\hatf_{\barm}})-\hatR_{m+1,\hatf_{\barm}}(p_{m+1})}{40\barm^2 \sqrt{\alpha_{\barm-1}\xi_{\barm}}}}
    \end{aligned}
\end{equation}
Note that $L_3$ doesn't involve any optimization and can be easily calculated. Substituting value of these estimators for $L_1$ and $L_2$, we get.
\begin{equation}
\label{eq:misspecification-test-decomposed-substituted}
    \begin{aligned}
             &L_1=\max_{\pi\in\Pi} \sum_{t\in S_{m,3}}\frac{1}{|S_{m,3}|}\bigg(\hatf_{m+1}(x_t,\pi(x_t))-\frac{\pi(a_t|x_t)r_t(a_t)}{p_m(a_t|x_t)}\\ 
             &\;\;\;\;\;\;\;\;\;\;\;\;\;\;\;\;\;\;\;\;\;\;\;\;\;\;\;\;\;\;\;- {\sqrt{\alpha_m\xi_{m+1}}\sum_{\barm\in[m]}\frac{\hatf_{\barm}(x_t,\pi_{\hatf_{\barm}}(x_t))-\hatf_{\barm}(x_t,\pi(x_t))}{40\barm^2 \sqrt{\alpha_{\barm-1}\xi_{\barm}}}}\bigg)\\ 
             &L_2=\max_{\pi\in\Pi} \sum_{t\in S_{m,3}}\frac{1}{|S_{m,3}|}\bigg(\frac{\pi(a_t|x_t)r_t(a_t)}{p_m(a_t|x_t)}-\hatf_{m+1}(x_t,\pi(x_t))\\ 
             &\;\;\;\;\;\;\;\;\;\;\;\;\;\;\;\;\;\;\;\;\;\;\;\;\;\;\;\;\;\;\;- {\sqrt{\alpha_m\xi_{m+1}}\sum_{\barm\in[m]}\frac{\hatf_{\barm}(x_t,\pi_{\hatf_{\barm}}(x_t))-\hatf_{\barm}(x_t,\pi(x_t))}{40\barm^2 \sqrt{\alpha_{\barm-1}\xi_{\barm}}}}\bigg)
    \end{aligned}
\end{equation}
Clearly, both $L_1$ and $L_2$ are cost-sensitive classification problems \citep[see][for problem definition]{krishnamurthy2017active}.In both, we need to find a policy (classifier) that maps contexts to arms (classes), incurring a score (cost) for each decision such that the total score (cost) is maximized (minimized). Hence the misspecification test we use only requires two calls to CSC solvers.

\subsection{Simulation}
\label{sec:simulations}

We ran uniform RCT, LinUCB, LinTS, $1$-RAPR, and $4$-RAPR with linear function classes and an exploration horizon of $5000$ on a synthetic data generating process (DGP).\footnote{The LinUCB scaling parameter was set to a default of $0.25$, we similarly let $\sqrt{\xi(T,0.5)} = 0.25\times \sqrt{d/T}$. We also set the bloated constant of $20$ in \Cref{def:confidence_set} to be $1$.}
\paragraph{Data generating process.} We consider four arms, i.e., $\calA=[8]$. The context $x=(x_1,x_2)$ is uniformly sampled from four regions on the  two-dimensional unit ball; and in specific, $x$ is generated via the following distribution:
\begin{enumerate}
    \item $\tilde{x}_1\sim \mbox{Uniform}(0.8, 1.0)$
    \item $\tilde{x}_2=\sqrt{1-\tilde{x}_1^2} \cdot z$, where $z\sim \mbox{Uniform}\{-1,1\}$.
    \item Sample region index $r\sim \mbox{Uniform}\{0,1,2,3\}$:
    \begin{itemize}
        \item if $r=0$: $\{x_1,x_2\}=\{\tilde{x}_1,\tilde{x}_2\}$.
        \item if $r=1$: $\{x_1,x_2\}=\{\tilde{x}_2,\tilde{x}_1\}$.
        \item if $r=2$: $\{x_1,x_2\}=\{-\tilde{x}_1,-\tilde{x}_2\}$.
        \item if $r=3$: $\{x_1,x_2\}=\{-\tilde{x}_1,-\tilde{x}_2$\}.
    \end{itemize}
    \item The reward for each arm is $0.4$ plus a linear function of the contexts. The linear parameters for the $8$ arms are $\{(a,b)||a|+|b|=1,|a|,|b|\in\{0,0.4,0.6,1\} \}$. Hence, the conditional expected rewards lies in the range $[0.2,0.6]$. Finally, the noise was sampled uniformly at random from $[-0.4,0.4]$.
\end{enumerate}

\begin{wrapfigure}{r}{0.5\textwidth} %
    \centering
    \includegraphics[width=0.5\textwidth]{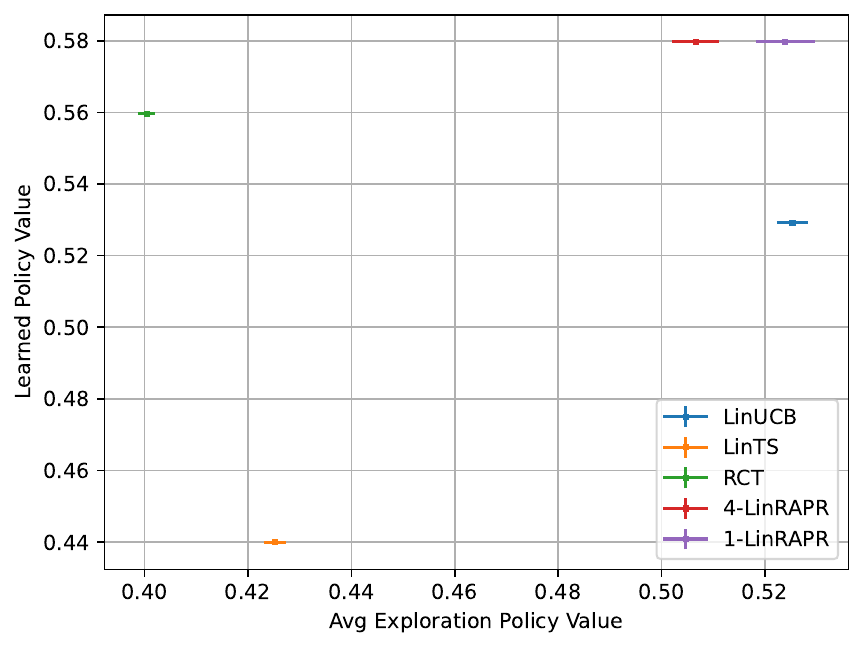}
\end{wrapfigure}

A simulation run takes less than $9$ seconds for any of these algorithms on a laptop with 16GB RAM and an Apple M1 Pro chip, demonstrating the computational tractability of this approach. We provide a scatter plot (aggregating results from $50$ runs) showing (i) the value of the average reward during exploration (as a proxy for cumulative regret, $x$-axis) and (ii) the value of the learned policy at the end of the experiment (as a proxy for simple regret of learned policy, $y$-axis). On simple regret, we see that $4$-RAPR $\approx$ $1$-RAPR $>$ RCT $>$ LinUCB $>$ LinTS. On cumulative regret, we see that LinUCB $>$  $1$-RAPR $>$ $4$-RAPR $>$ LinTS $>$ RCT. The RAPR algorithms achieve the best simple regret performance and achieve competitive performance on cumulative regret for this DGP. However, the fact that both RAPR algorithms learn policies of similar values suggests that our CASs are larger than necessary for at least some risk levels on this DGP. Further refining CAS is an important direction of future work.

\end{document}